\newcommand{\cmark}{\ding{51}}%
\newcommand{\xmark}{\ding{55}}%
\newtheorem{theorem}{Theorem}[section]
\newtheorem{lemma}[theorem]{Lemma}
\newtheorem{proposition}[theorem]{Proposition}
\newtheorem{definition}[theorem]{Definition}
\newtheorem{example}[theorem]{Example}
\newtheorem{assumption}[theorem]{Assumption}
\renewcommand{\mathbf}{\boldsymbol}
\newcommand{\mb}{\mathbf}
\newcommand{\mc}{\mathcal}
\newcommand{\bb}{\mathbb}
\newcommand{\eps}{\varepsilon}
\newcommand{\indicator}[1]{\mathbbm 1\{#1\}}
\def\Ddots{\mathinner{\mkern1mu\raise\p@
\vbox{\kern7\p@\hbox{.}}\mkern2mu
\raise4\p@\hbox{.}\mkern2mu\raise7\p@\hbox{.}\mkern1mu}}
\newcommand{\paren}[1]{\left( #1 \right)}
\newcommand{\brac}[1]{\left[ #1 \right]}
\numberwithin{equation}{section}
\def \endprf{\hfill {\vrule height6pt width6pt depth0pt}\medskip}
\newenvironment{proof}{\noindent {\bf Proof} }{\endprf\par}
\title{Computational Benefits of Intermediate Rewards for Goal-Reaching Policy Learning}
\author{\name Yuexiang Zhai \email simonzhai@berkeley.edu \\
      \name Christina Baek \email kbaek@cs.cmu.edu \\
      \addr University of California, Berkeley\\
      Department of Electrical Engineering \& Computer Sciences\\ Berkeley, CA, 94720, USA
      \AND
      \name Zhengyuan Zhou \email zzhou@stern.nyu.edu \\
      \addr New York University\\
      Stern School of Business\\
      New York, NY, 10012, USA
      \AND
      \name Jiantao Jiao \email jiantao@eecs.berkeley.edu \\
      \addr University of California, Berkeley\\
      Department of Electrical Engineering \& Computer Sciences\\
      Department of Statistics\\
      Berkeley, CA, 94720, USA\\
      \name Yi Ma \email yima@eecs.berkeley.edu \\
      \addr University of California, Berkeley\\
      Department of Electrical Engineering \& Computer Sciences\\
      Berkeley, CA, 94720, USA}
\begin{document}
\maketitle
\jairheading{73}{2022}{847-896}{09/2021}{03/2022}
\begin{abstract}
    Many goal-reaching reinforcement learning (RL) tasks have empirically verified that rewarding the agent on subgoals improves convergence speed and practical performance. We attempt to provide a theoretical framework to quantify the computational benefits of rewarding the completion of subgoals, in terms of the number of synchronous value iterations. In particular, we consider subgoals as one-way {\em intermediate states}, which can only be visited once per episode and propose two settings that consider these one-way intermediate states: the one-way single-path (OWSP) and the one-way multi-path (OWMP) settings. In both OWSP and OWMP settings, we demonstrate that adding {\em intermediate rewards} to subgoals is more computationally efficient than only rewarding the agent once it completes the goal of reaching a terminal state. We also reveal a trade-off between computational complexity and the pursuit of the shortest path in the OWMP setting: adding intermediate rewards significantly reduces the computational complexity of reaching the goal but the agent may not find the shortest path, whereas with sparse terminal rewards, the agent finds the shortest path at a significantly higher computational cost. We also corroborate our theoretical results with extensive experiments on the MiniGrid environments using Q-learning and some popular deep RL algorithms.
\end{abstract}
\section{Introduction} 
Markov decision processes (MDPs) \shortcite{bertsekas1995dynamic,bertsekas1996neuro,puterman2014markov,russell2016artificial} provide a powerful framework for reinforcement learning (RL) and planning \shortcite{bertsekas2019reinforcement,sutton2018reinforcement,szepesvari2010algorithms,siciliano2010robotics,lavalle2006planning,russell2016artificial} tasks. In particular, many practical tasks \shortcite{Brockman2016OpenAI,vinyals2017starcraft,vinyals2019grandmaster,berner2019dota,ye2020towards} containing specific goal states rely heavily on reward design, especially by rewarding the agent upon its arrival on some {\em one-way} subgoals (each subgoal can only be visited once per episode) \shortcite{racaniere2017imagination,popov2017data,vinyals2017starcraft,vinyals2019grandmaster,berner2019dota,ye2020towards}. \shortciteA{wen2020efficiency} introduced a framework that decomposes the original MDP into ``subMDPs'', and demonstrated that decomposing an MDP into subMDPs indeed leads to statistical and computational efficiency, under certain assumptions. \shortciteA{wen2020efficiency} provides an insightful framework for studying subgoals from a hierarchical RL perspective, but it does not clearly address how rewarding subgoals helps the goal-reaching tasks. To better understand the reward design for goal-reaching tasks, we consider the one-way subgoals as one-way intermediate states (the non-terminal states where the agent receives intermediate rewards upon its arrival). Then we show that under some {\em practically verifiable assumptions} on the intermediate states and simple conditions on the intermediate rewards, rewarding the agent on intermediate states is indeed more computationally efficient for learning a {\em successful policy} that {\em reaches the goal states} in terms of synchronous value iteration (SVI).

\subsection{Motivating Examples}
\label{subsec:MotivatingExamples}
We first start with two examples (the Maze and the Pacman Game in Figure \ref{fig:MotivatingExamples}) to reveal the existence of one-way intermediate states in different goal-reaching tasks. For the Maze problem, suppose one episode ends when the agent reaches the goal, then the Maze problem does not have one-way intermediate states, since all previously visited states can be revisited in one episode. In contrast, the Pacman game intrinsically possesses one-way intermediate states, because the Pacman cannot revisit the states where the previously consumed food pellets are available.

We conduct a toy experiment on the Pacman game to demonstrate how the design of {\em intermediate states} and {\em intermediate rewards} affects the behavior of an agent under the greedy policy \shortcite{sutton2018reinforcement}. In the Pacman game, the agent (Pacman) wins when it consumes all the food before being caught by the ghost. Intuitively, if we want the Pacman to win the game, we need to design the MDP such that the agent receives positive rewards for winning, consuming food, surviving (not being caught by the ghost), and negative rewards for losing (being caught by the ghost). With the aforementioned intuition, we design several different reward functions for the Pacman game. Comparing reward settings (1), (2), and (3) in Table \ref{tab:experiments-1g}, the Pacman performs better when the MDP contains positive intermediate rewards for consuming food (visiting one-way intermediate states). This observation matches the common belief that integrating prior knowledge of the task into the reward design is generally helpful. However, designing rewards based on prior knowledge could sometimes negatively affect the performance. In setting (4), where the Pacman receives a positive reward for survival (visiting non one-way intermediate states), the Pacman focuses more on dodging the ghost rather than consuming food, and eventually gets caught by the ghost. At this point, a natural question is:

\begin{center}
    {\em How does the design of intermediate rewards affect the behavior of a greedy policy for goal-reaching tasks?}
\end{center}

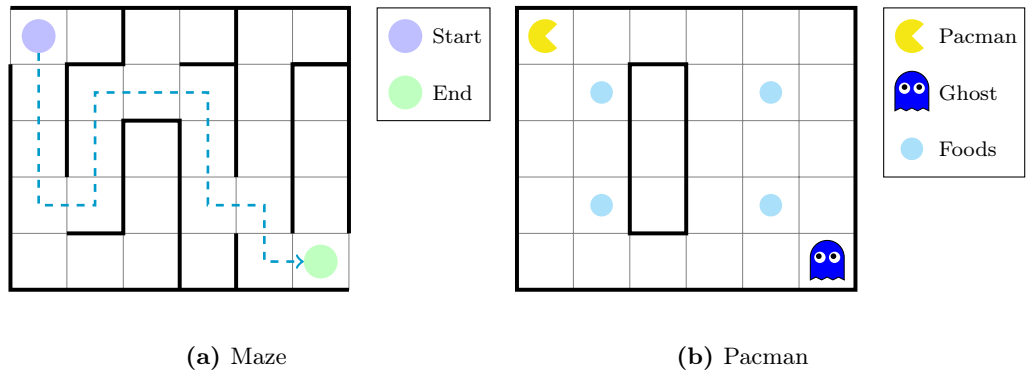
\begin{figure}[ht]
     \begin{subfigure}[b]{0.4\textwidth}
         \centering
            \begin{tikzpicture}[scale=0.75]
\centering
\draw[step=1cm, gray, very thin] (0, -5) grid (6, 0);
\fill[blue!25!white] (0.5, -0.5) circle (0.3cm);
\fill[green!25!white] (5.5, -4.5) circle (0.3cm);
\draw[black, line width=1.5] (0.0, 0.0) -- (6.0, 0.0) -- (6.0, -4.0);
\draw[black, line width=1.5] (6.0, -5.0) -- (0.0, -5.0) -- (0.0, -1.0);
\draw[black, line width=1.5] (2.0, 0.0) -- (2.0, -1.0) -- (1.0, -1.0) -- (1.0, -3.0);
\draw[black, line width=1.5] (1.0, -4.0) -- (2.0, -4.0) -- (2.0, -2.0) -- (3.0, -2.0) -- (3.0, -5.0);
\draw[black, line width=1.5] (4.0, 0.0) -- (4.0, -3.0);
\draw[black, line width=1.5] (3.0, -1.0) -- (4.0, -1.0);
\draw[black, line width=1.5] (5.0, -1.0) -- (6.0, -1.0);
\draw[black, line width=1.5] (6.0, -1.0) -- (5.0, -1.0) -- (5.0, -4.0);
\draw[black, line width=1.5] (4.0, -4.0) -- (4.0, -5.0);

\draw[cyan!80!black, line width=1.0, dashed, ->] (0.5, -0.8) -- (0.5, -3.5) -- (1.5, -3.5) -- (1.5, -1.5) -- (3.5, -1.5) -- (3.5, -3.5) -- (4.5, -3.5) -- (4.5, -4.5) -- (5.2, -4.5);

Bounding Box
\draw[black] (6.5,0) -- (6.5,-2) -- (8.5,-2) -- (8.5,0) -- cycle;

\fill[blue!25!white] (7.0, -0.5) circle (0.3cm);
\fill[green!25!white] (7.0, -1.5) circle (0.3cm);
\node [right, font=\scriptsize] (A) at (7.3,-0.5) {Start};
\node [right, font=\scriptsize] (B) at (7.3,-1.5) {End};
\end{tikzpicture}
         \caption{Maze}
         \label{fig:maze}
     \end{subfigure}
     \hspace{1em}
     \begin{subfigure}[b]{0.4\textwidth}
         \centering
            \begin{tikzpicture}[scale=0.75]
\centering
\draw[step=1cm, gray, very thin] (0, -5) grid (6, 0);
\fill[cyan!30!white] (1.5, -1.5) circle (0.2cm);
\fill[cyan!30!white] (1.5, -3.5) circle (0.2cm);
\fill[cyan!30!white] (4.5, -1.5) circle (0.2cm);
\fill[cyan!30!white] (4.5, -3.5) circle (0.2cm);
\fill[fill=yellow!95!black,rotate around={45:(0.5,-0.5)}] (0.5,-0.5) -- (0.8,-0.5) arc (0:270:0.3cm);
\begin{scope}[shift={(5.2,-4.925)}]
\draw [fill=blue] (0,0.1) -- (0,0.5) arc (+180:0:0.3) -- (0.6,0.1) --
  (0.5,0.15) -- (0.4,0.1) -- (0.3,.15) -- (0.2,0.1) -- (0.1,.15) -- cycle;
    \foreach \x in {0.17,0.43}{
      \fill[white] (\x,0.5) circle[radius=0.1];
      \fill[black] (\x-0.02,0.52) circle[radius=0.05];
    }    
\end{scope}

\draw[black, line width=1.5] (0.0, 0.0) -- (6.0, 0.0) -- (6.0, -5.0) -- (0.0, -5.0) -- cycle;
\draw[black, line width=1.5] (2.0, -1.0) -- (3.0, -1.0) -- (3.0, -4.0) -- (2.0, -4.0) -- cycle;

\draw[black] (6.5,0) -- (9.0,0) -- (9.0,-3) -- (6.5,-3) -- cycle;
\node [right, font=\scriptsize] (A) at (7.3,-0.5) {Pacman};
\node [right, font=\scriptsize] (B) at (7.3,-1.5) {Ghost};
\node [right, font=\scriptsize] (C) at (7.3,-2.5) {Foods};
\fill[fill=yellow!95!black,rotate around={45:(7.0,-0.5)}] (7.0,-0.5) -- (7.3,-0.5) arc (0:270:0.3cm);
\begin{scope}[shift={(6.7,-1.925)}]
\draw [fill=blue] (0,0.1) -- (0,0.5) arc (+180:0:0.3) -- (0.6,0.1) --
  (0.5,0.15) -- (0.4,0.1) -- (0.3,.15) -- (0.2,0.1) -- (0.1,.15) -- cycle;
    \foreach \x in {0.17,0.43}{
      \fill[white] (\x,0.5) circle[radius=0.1];
      \fill[black] (\x,0.52) circle[radius=0.05];
    }    
\end{scope}
\fill[cyan!30!white] (7.0, -2.5) circle (0.2cm);
\end{tikzpicture}
         \caption{Pacman}
         \label{fig:Pacman}
     \end{subfigure}
    \centering
    \caption{Examples of two practical tasks. The thick curves represent walls. The agent can take actions from \{left, right, up, down\} and move to the corresponding adjacent locations. The agent stays at the original location if it hits a wall after taking an action. (a): In the maze problem, each state in the MDP is the agent's position. The goal is to find the end from the start. The shortest path is marked with blue dashed curves. (b): In the Pacman game, each state contains the position of the Pacman, the ghost, and all remaining food pellets. The agent (Pacman) needs to consume all the pellets while avoiding the ghost, and each food pellet can only be eaten {\em once}. The ghost moves simultaneously with the Pacman. The Pacman wins once all pellets are consumed and loses if it is caught by the ghost. We use the TikZ code from \url{https://gist.github.com/neic/9546556} to draw the ghost.} 
    \label{fig:MotivatingExamples}
\end{figure}\begin{table}[ht]
    \centering
    \small
    \begin{tabular}{rrr|rrrrrrr}
        \toprule
        &\multicolumn{1}{c}{$r_f$}
        &\multicolumn{1}{c}{$r_s$}&
        \multicolumn{1}{c}{0 ep} & \multicolumn{1}{c}{100 eps} & \multicolumn{1}{c}{500 eps} & \multicolumn{1}{c}{1000 eps} & \multicolumn{1}{c}{1500 eps} & \multicolumn{1}{c}{2000 eps} & \multicolumn{1}{c}{2500 eps} \\
        \toprule
        (1) & 0 & 0 & 0.0\% & 0.1\% & 0.1\% & 0.6\% & 89.5\% & 93.7\% & 96.0\% \\
        (2) & 1 & 0 & 0.0\% & 0.5\% & 33.0\% & 78.0\% & 92.6\% & 97.8\% & 98.4\% \\
        (3) & 10 & 0 & 0.0\% & 0.1\% & 57.8\% & 93.6\% & 95.7\% & 97.0\% & 97.9\%\\
        (4) & 1 & 1 & 0.0\% & 0.0\% & 1.9\% & 2.5\% & 2.2\% & 7.7\% & 12.9\%\\
        \bottomrule
    \end{tabular}
    \vspace{0.5em}
    \caption{The win rate (averaged among 1000 trials) of the Pacman game shown in Figure \ref{fig:Pacman} after $\{0, 100, 500, 1000, 1500, 2000, 2500 \}$ Q-learning training episodes under a greedy policy. We compare the performance under 4 different reward function designs. In all settings, the winning reward is 10 and the losing reward is -10. $r_f$ is the reward of consuming a food pellet. The Pacman receives a survival reward $r_s$ if it is not caught by the ghost after taking an action. Our implementation of the Pacman is based on the code from Berkeley CS188 (\url{https://inst.eecs.berkeley.edu/~cs188/ fa18/project3.html}). Details of the experiments are provided in Appendix \ref{Exp:MotivatingExpDetail}.}
    \label{tab:experiments-1g}
\end{table}

\subsection{Our Formulation and Main Results}

\subsubsection{One-Way Intermediate States}
To study the conditions under which goal-reaching problems with one-way intermediate states enjoy computational benefits, we consider two one-way intermediate state settings as shown in Figure \ref{fig:IRSetting}: 1) one-way single-path (OWSP), and 2) one-way multi-path (OWMP). In both settings, we assume the existence of one-way intermediate states (formally defined in Assumption \ref{assump:OneWayIntermediateStates}) that behave like ``one-way'' checkpoints that {\em cannot be revisited} in one episode and intermediate rewards are assigned to the arrival at such states. Many practical RL tasks implicitly adopt this one-way property, as practitioners often identify subgoals and assign a one-time reward for their completion in solving challenging tasks in addition to the terminal rewards which occur once the agent reaches the goal states \cite{Brockman2016OpenAI,vinyals2017starcraft,vinyals2019grandmaster,berner2019dota,ye2020towards}. For the case where intermediate states are non one-way, we provide an example where the agent gets stuck at an intermediate state permanently instead of pursing the goal, if the intermediate rewards are not ``properly'' designed. As we have observed in setting (4) of the Pacman game in Table \ref{tab:experiments-1g}, improperly designed intermediate rewards can negatively affect the agent's ability to find a successful policy that reaches the goal.

\begin{figure}[ht]
    \centering
    \begin{subfigure}[t]{0.78\textwidth}
        \centering
            \begin{tikzpicture}[scale=0.7]
\centering
\draw[fill=white] (0.5, 0.5) circle (0.3cm);
\node [below,font=\footnotesize] at (0.5,0) {$s_0$};
\draw[black] (1, 0.6) -- (1.8, 1.2);
\draw[black] (1, 0.4) -- (1.8, -0.2);
\node [font=\footnotesize] at (2.5,1.2) {$\dots$};
\node [font=\footnotesize] at (2.5,0.65) {$\vdots$};
\node [font=\footnotesize] at (2.5,-0.2) {$\dots$};
\draw[->,black] (3.2, 1.2) -- (4, 0.6);
\draw[->,black] (3.2, -0.2) -- (4, 0.4);
\draw[fill=white] (4.5, 0.5) circle (0.3cm);
\node [below,font=\footnotesize] at (4.5,0) {$s_{i_1}$};

\draw[black] (5, 0.6) -- (5.8, 1.2);
\draw[black] (5, 0.4) -- (5.8, -0.2);
\node [font=\footnotesize] at (6.5,1.2) {$\dots$};
\node [font=\footnotesize] at (6.5,0.65) {$\vdots$};
\node [font=\footnotesize] at (6.5,-0.2) {$\dots$};
\draw[->,black] (7.2, 1.2) -- (8, 0.6);
\draw[->,black] (7.2, -0.2) -- (8, 0.4);

\draw[fill=white] (8.5, 0.5) circle (0.3cm);
\node [below,font=\footnotesize] at (8.5,0) {$s_{i_2}$};

\node [font=\footnotesize] at (10.5,0.5) {$\dots$};
\draw[fill=white] (12.5, 0.5) circle (0.3cm);
\node [below,font=\footnotesize] at (12.5,0) {$s_{i_N}$};
\draw[->,black] (9, 0.5) -- (9.8, 0.5);
\draw[->,black] (11.2, 0.5) -- (12, 0.5);

\draw[black] (13.0, 0.6) -- (13.8, 1.2);
\draw[black] (13.0, 0.4) -- (13.8, -0.2);
\node [font=\footnotesize] at (14.5,1.2) {$\dots$};
\node [font=\footnotesize] at (14.5,0.65) {$\vdots$};
\node [font=\footnotesize] at (14.5,-0.2) {$\dots$};
\draw[fill=white] (16.5, 0.5) circle (0.3cm);
\draw[fill=white] (16.5, 0.5) circle (0.2cm);
\node [below,font=\footnotesize] at (16.5,0) {$\mc S_{T}$};
\draw[->,black] (15.2, 1.2) -- (16.0, 0.6);
\draw[->,black] (15.2, -0.2) -- (16.0, 0.4);
\end{tikzpicture}  
        \caption{In this case, all intermediate states $\mc S_I = \{s_{i_1},s_{i_2},\dots,s_{i_N}\}$ have to be visited in a certain order.}
        \label{fig:setting1}
    \end{subfigure}
    ~
    \begin{subfigure}[t]{0.78\textwidth}
        \centering
            \begin{tikzpicture}[scale=0.7]
\centering
\draw[fill=white] (0.5, 0.5) circle (0.3cm);
\node [below,font=\footnotesize] at (0.5,0) {$s_0$};
\draw[black] (1, 0.6) -- (1.8, 1.5);
\draw[black] (1, 0.4) -- (1.8, -0.5);
\node [font=\footnotesize] at (2.5,1.5) {$\dots$};
\node [font=\footnotesize] at (2.5,0.65) {$\vdots$};
\node [font=\footnotesize] at (2.5,-0.5) {$\dots$};
\draw[->,black] (3.2, 1.5) -- (4, 1.5);
\draw[->,black] (3.2, -0.5) -- (4, -0.5);
\draw[fill=white] (4.5, 1.5) circle (0.3cm);
\node [below,font=\footnotesize] at (4.5,1.0) {$s_{i_{j_1}}$};
\draw[fill=white] (4.5, -0.5) circle (0.3cm);
\node [below,font=\footnotesize] at (4.5,-1.0) {$s_{i_{j^\prime_1}}$};

\draw[black] (5, 1.6) -- (5.8, 2.2);
\draw[black] (5, 1.4) -- (5.8, 0.8);
\node [font=\footnotesize] at (6.5,2.2) {$\dots$};
\node [font=\footnotesize] at (6.5,1.65) {$\vdots$};
\node [font=\footnotesize] at (6.5,0.8) {$\dots$};
\draw[->,black] (7.2, 2.2) -- (8, 1.6);
\draw[->,black] (7.2, 0.8) -- (8, 1.4);

\draw[fill=white] (8.5, 1.5) circle (0.3cm);
\node [below,font=\footnotesize] at (8.5,1.0) {$s_{i_{j_2}}$};

\draw[black] (5, -0.4) -- (5.8, 0.2);
\draw[black] (5, -0.6) -- (5.8, -1.2);
\node [font=\footnotesize] at (6.5,0.2) {$\dots$};
\node [font=\footnotesize] at (6.5,-0.35) {$\vdots$};
\node [font=\footnotesize] at (6.5,-1.2) {$\dots$};
\draw[->,black] (7.2, 0.2) -- (8, -0.4);
\draw[->,black] (7.2, -1.2) -- (8, -0.6);

\draw[fill=white] (8.5, -0.5) circle (0.3cm);
\node [below,font=\footnotesize] at (8.5,-1.0) {$s_{i_{j^\prime_2}}$};

\draw[black] (9, 1.5) -- (9.8, 1.5);
\draw[->,black] (11.2, 1.5) -- (12, 1.5);
\node [font=\footnotesize] at (10.5,1.5) {$\dots$};
\draw[fill=white] (12.5, 1.5) circle (0.3cm);
\node [below,font=\footnotesize] at (12.5,1.0) {$s_{i_{j_m}}$};

\node [font=\footnotesize] at (10.5,0.65) {$\vdots$};

\draw[black] (9, -0.5) -- (9.8, -0.5);
\draw[->,black] (11.2, -0.5) -- (12, -0.5);
\node [font=\footnotesize] at (10.5,-0.5) {$\dots$};
\draw[fill=white] (12.5, -0.5) circle (0.3cm);
\node [below,font=\footnotesize] at (12.5,-1.0) {$s_{i_{j^\prime_{m^\prime}}}$};

\draw[black] (13.0, 1.5) -- (13.8, 1.5);
\node [font=\footnotesize] at (14.5,1.5) {$\dots$};
\node [font=\footnotesize] at (14.5,0.65) {$\vdots$};
\draw[black] (13.0, -0.5) -- (13.8, -0.5);
\node [font=\footnotesize] at (14.5,-0.5) {$\dots$};
\node [below,font=\footnotesize] at (16.5,0) {$\mc S_{T}$};
\draw[->,black] (15.2, 1.5) -- (16.0, 0.6);
\draw[->,black] (15.2, -0.5) -- (16.0, 0.4);
\draw[fill=white] (16.5, 0.5) circle (0.3cm);
\draw[fill=white] (16.5, 0.5) circle (0.2cm);

\end{tikzpicture}  
        \caption{In this case, only a subset of intermediate states $\mc S_J = \{s_{i_{j_1}},s_{i_{j_2}},\dots,s_{i_{j_m}}\}$ (or $\mc S_{J^\prime} = \{s_{i_{j^\prime_1}},s_{i_{j^\prime_2}},\dots,s_{i_{j^\prime_{m^\prime}}}\}$) with at least $n$ states ($m,m^\prime\geq n$) have to be visited in a certain order.}
        \label{fig:setting2}
    \end{subfigure}
    \caption{The transition graph of different settings in Assumption \ref{assump:diff_settings_IS}}
    \label{fig:IRSetting}
\end{figure}
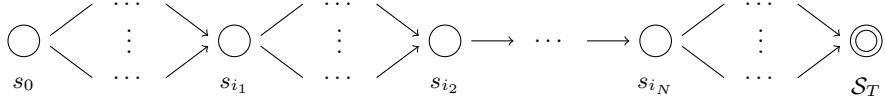
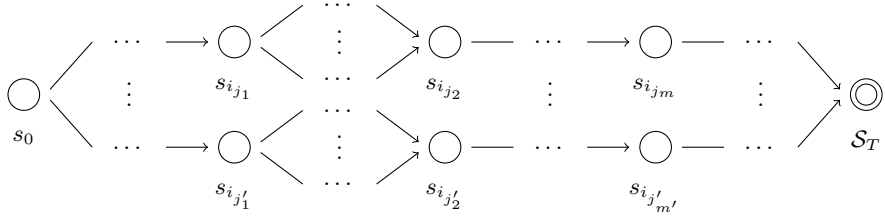

\subsubsection{Main Results}
We focus on comparing the two following reward settings of {\em deterministic MDPs}:
\begin{itemize}
    \item {\bf The sparse reward setting}: the agent only receives terminal rewards $B$ while reaching the goal (terminal states);
    \item {\bf The intermediate reward settings}: the agent receives {\em equal magnitude} intermediate rewards $B_I$ upon the arrival of intermediate states, for both OSWP and OWMP intermediate state settings.
\end{itemize}
Note that the intermediate rewards $B_I$ could be different from the terminal rewards $B$, and we formally define the reward functions in Table \ref{tab:reward_setting} in Section \ref{subsec:rewards}.

\paragraph{The OWSP Intermediate Reward Setting} 
Intuitively, the OWSP setting, which rewards the one-way intermediate states in addition to terminal rewards, is more computationally efficient than only having terminal rewards alone, because the OWSP setting reduces the problem from {\em finding the terminal states} to {\em finding the closest intermediate states}. For example, considering the environment provided in Figure \ref{fig:SparseRewardGrid}, the number of SVI required to obtain a greedy successful policy using sparse terminal rewards is 8 (the {\em distance} or minimum required steps from the initial state $s_0$ to the terminal state $\mc S_T$), whereas the OWSP intermediate reward setting (Figure \ref{fig:OWSPIRGrid}) reduces such computational complexity to 3 (the maximum distance between two intermediate states). Intuitively, Figure \ref{fig:OWSPIRGrid} suggests that the computational complexity of finding a successful policy can be reduced by rewarding the one-way intermediate states. The computational complexity for obtaining a successful policy (in terms of SVI) with sparse terminal rewards and the OWSP intermediate reward settings are formalized in Proposition \ref{prop:comp_sparse_reward} and \ref{prop:com_comp_inter_reward}, respectively. Moreover, Proposition \ref{prop:comp_sparse_reward} and \ref{prop:com_comp_inter_reward} show that a greedy policy follows the shortest path to $\mc S_T$ in both sparse terminal rewards and OWSP setting. 

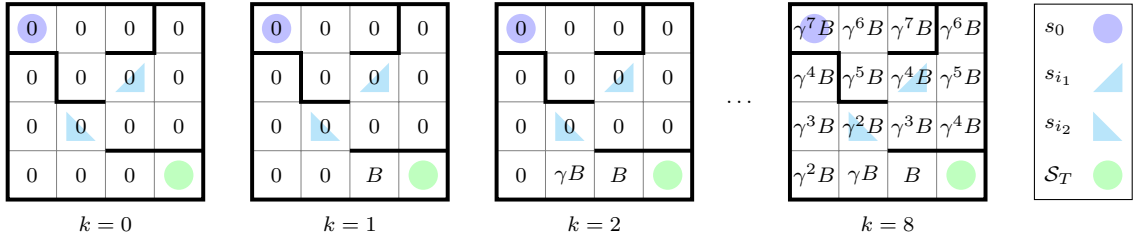
\begin{figure}[ht]
    \centering
    \begin{tikzpicture}[scale=0.65]
\centering
\draw[step=1cm, gray, very thin] (0, -4) grid (4, 0);
\fill[blue!25!white] (0.5, -0.5) circle (0.3cm);
\fill[green!25!white] (3.5, -3.5) circle (0.3cm);
\path[fill=cyan!25!white] (2.8,-1.2) -- (2.8,-1.8) -- (2.2,-1.8) -- cycle;
\path[fill=cyan!25!white] (1.2,-2.2) -- (1.8,-2.8) -- (1.2,-2.8) -- cycle;
\node [font=\scriptsize] (a11) at (0.5,-0.5) {$0$};
\node [font=\scriptsize] (a12) at (1.5,-0.5) {$0$};
\node [font=\scriptsize] (a13) at (2.5,-0.5) {$0$};
\node [font=\scriptsize] (a14) at (3.5,-0.5) {$0$};
\node [font=\scriptsize] (a21) at (0.5,-1.5) {$0$};
\node [font=\scriptsize] (a22) at (1.5,-1.5) {$0$};
\node [font=\scriptsize] (a23) at (2.5,-1.5) {$0$};
\node [font=\scriptsize] (a24) at (3.5,-1.5) {$0$};
\node [font=\scriptsize] (a31) at (0.5,-2.5) {$0$};
\node [font=\scriptsize] (a32) at (1.5,-2.5) {$0$};
\node [font=\scriptsize] (a33) at (2.5,-2.5) {$0$};
\node [font=\scriptsize] (a34) at (3.5,-2.5) {$0$};
\node [font=\scriptsize] (a41) at (0.5,-3.5) {$0$};
\node [font=\scriptsize] (a42) at (1.5,-3.5) {$0$};
\node [font=\scriptsize] (a43) at (2.5,-3.5) {$0$};
\node [font=\scriptsize] (a44) at (3.5,-3.5) {};
\node [font=\scriptsize] (a) at (2.0,-4.5) {$k=0$};
\draw[black, line width=1.5] (0.0, 0.0) -- (4.0, 0.0) -- (4.0, -4.0) -- (0.0, -4.0) -- cycle;
\draw[black, line width=1.5] (3.0, 0.0) -- (3.0, -1.0) -- (2.0, -1.0);
\draw[black, line width=1.5] (0.0, -1.0) -- (1.0, -1.0) -- (1.0, -2.0) -- (2.0, -2.0);
\draw[black, line width=1.5] (2.0, -3.0) -- (4.0, -3.0);

\draw[step=1cm, gray, very thin] (5, -4) grid (9, 0);
\fill[blue!25!white] (5.5, -0.5) circle (0.3cm);
\fill[green!25!white] (8.5, -3.5) circle (0.3cm);
\path[fill=cyan!25!white] (7.8,-1.2) -- (7.8,-1.8) -- (7.2,-1.8) -- cycle;
\path[fill=cyan!25!white] (6.2,-2.2) -- (6.8,-2.8) -- (6.2,-2.8) -- cycle;
\node [font=\scriptsize] (b11) at (5.5,-0.5) {$0$};
\node [font=\scriptsize] (b12) at (6.5,-0.5) {$0$};
\node [font=\scriptsize] (b13) at (7.5,-0.5) {$0$};
\node [font=\scriptsize] (b14) at (8.5,-0.5) {$0$};
\node [font=\scriptsize] (b21) at (5.5,-1.5) {$0$};
\node [font=\scriptsize] (b22) at (6.5,-1.5) {$0$};
\node [font=\scriptsize] (b23) at (7.5,-1.5) {$0$};
\node [font=\scriptsize] (b24) at (8.5,-1.5) {$0$};
\node [font=\scriptsize] (b31) at (5.5,-2.5) {$0$};
\node [font=\scriptsize] (b32) at (6.5,-2.5) {$0$};
\node [font=\scriptsize] (b33) at (7.5,-2.5) {$0$};
\node [font=\scriptsize] (b34) at (8.5,-2.5) {$0$};
\node [font=\scriptsize] (b41) at (5.5,-3.5) {$0$};
\node [font=\scriptsize] (b42) at (6.5,-3.5) {$0$};
\node [font=\scriptsize] (b43) at (7.5,-3.5) {$B$};
\node [font=\scriptsize] (b44) at (8.5,-3.5) {};
\node [font=\scriptsize] (b) at (7.0,-4.5) {$k=1$};
\draw[black, line width=1.5] (5.0, 0.0) -- (9.0, 0.0) -- (9.0, -4.0) -- (5.0, -4.0) -- cycle;
\draw[black, line width=1.5] (8.0, 0.0) -- (8.0, -1.0) -- (7.0, -1.0);
\draw[black, line width=1.5] (5.0, -1.0) -- (6.0, -1.0) -- (6.0, -2.0) -- (7.0, -2.0);
\draw[black, line width=1.5] (7.0, -3.0) -- (9.0, -3.0);

\draw[step=1cm, gray, very thin] (10, -4) grid (14, 0);
\fill[blue!25!white] (10.5, -0.5) circle (0.3cm);
\fill[green!25!white] (13.5, -3.5) circle (0.3cm);
\path[fill=cyan!25!white] (12.8,-1.2) -- (12.8,-1.8) -- (12.2,-1.8) -- cycle;
\path[fill=cyan!25!white] (11.2,-2.2) -- (11.8,-2.8) -- (11.2,-2.8) -- cycle;
\node [font=\scriptsize] (c11) at (10.5,-0.5) {$0$};
\node [font=\scriptsize] (c12) at (11.5,-0.5) {$0$};
\node [font=\scriptsize] (c13) at (12.5,-0.5) {$0$};
\node [font=\scriptsize] (c14) at (13.5,-0.5) {$0$};
\node [font=\scriptsize] (c21) at (10.5,-1.5) {$0$};
\node [font=\scriptsize] (c22) at (11.5,-1.5) {$0$};
\node [font=\scriptsize] (c23) at (12.5,-1.5) {$0$};
\node [font=\scriptsize] (c24) at (13.5,-1.5) {$0$};
\node [font=\scriptsize] (c31) at (10.5,-2.5) {$0$};
\node [font=\scriptsize] (c32) at (11.5,-2.5) {$0$};
\node [font=\scriptsize] (c33) at (12.5,-2.5) {$0$};
\node [font=\scriptsize] (c34) at (13.5,-2.5) {$0$};
\node [font=\scriptsize] (c41) at (10.5,-3.5) {$0$};
\node [font=\scriptsize] (c42) at (11.5,-3.5) {$\gamma B$};
\node [font=\scriptsize] (c43) at (12.5,-3.5) {$B$};
\node [font=\scriptsize] (c44) at (13.5,-3.5) {};
\node [font=\scriptsize] (c) at (12.0,-4.5) {$k=2$};
\draw[black, line width=1.5] (10.0, 0.0) -- (14.0, 0.0) -- (14.0, -4.0) -- (10.0, -4.0) -- cycle;
\draw[black, line width=1.5] (13.0, 0.0) -- (13.0, -1.0) -- (12.0, -1.0);
\draw[black, line width=1.5] (10.0, -1.0) -- (11.0, -1.0) -- (11.0, -2.0) -- (12.0, -2.0);
\draw[black, line width=1.5] (12.0, -3.0) -- (14.0, -3.0);

\draw[step=1cm, gray, very thin] (16, -4) grid (20, 0);
\fill[blue!25!white] (16.5, -0.5) circle (0.3cm);
\fill[green!25!white] (19.5, -3.5) circle (0.3cm);
\path[fill=cyan!25!white] (18.8,-1.2) -- (18.8,-1.8) -- (18.2,-1.8) -- cycle;
\path[fill=cyan!25!white] (17.2,-2.2) -- (17.8,-2.8) -- (17.2,-2.8) -- cycle;
\node [font=\scriptsize] (d11) at (16.5,-0.5) {$\gamma^7 B$};
\node [font=\scriptsize] (d12) at (17.5,-0.5) {$\gamma^6 B$};
\node [font=\scriptsize] (d13) at (18.5,-0.5) {$\gamma^7 B$};
\node [font=\scriptsize] (d14) at (19.5,-0.5) {$\gamma^6 B$};
\node [font=\scriptsize] (d21) at (16.5,-1.5) {$\gamma^4 B$};
\node [font=\scriptsize] (d22) at (17.5,-1.5) {$\gamma^5 B$};
\node [font=\scriptsize] (d23) at (18.5,-1.5) {$\gamma^4 B$};
\node [font=\scriptsize] (d24) at (19.5,-1.5) {$\gamma^5 B$};
\node [font=\scriptsize] (d31) at (16.5,-2.5) {$\gamma^3 B$};
\node [font=\scriptsize] (d32) at (17.5,-2.5) {$\gamma^2 B$};
\node [font=\scriptsize] (d33) at (18.5,-2.5) {$\gamma^3 B$};
\node [font=\scriptsize] (d34) at (19.5,-2.5) {$\gamma^4 B$};
\node [font=\scriptsize] (d41) at (16.5,-3.5) {$\gamma^2 B$};
\node [font=\scriptsize] (d42) at (17.5,-3.5) {$\gamma B$};
\node [font=\scriptsize] (d43) at (18.5,-3.5) {$B$};
\node [font=\scriptsize] (d44) at (19.5,-3.5) {};
\node [font=\scriptsize] (d) at (18.0,-4.5) {$k=8$};
\draw[black, line width=1.5] (16.0, 0.0) -- (20.0, 0.0) -- (20.0, -4.0) -- (16.0, -4.0) -- cycle;
\draw[black, line width=1.5] (19.0, 0.0) -- (19.0, -1.0) -- (18.0, -1.0);
\draw[black, line width=1.5] (16.0, -1.0) -- (17.0, -1.0) -- (17.0, -2.0) -- (18.0, -2.0);
\draw[black, line width=1.5] (18.0, -3.0) -- (20.0, -3.0);

\node [font=\scriptsize] (c) at (15,-2) {$\dots$};


\fill[blue!25!white] (22.5, -0.5) circle (0.3cm);
\path[fill=cyan!25!white] (22.8,-1.2) -- (22.8,-1.8) -- (22.2,-1.8) -- cycle;
\path[fill=cyan!25!white] (22.2,-2.2) -- (22.8,-2.8) -- (22.2,-2.8) -- cycle;
\fill[green!25!white] (22.5, -3.5) circle (0.3cm);
\node [right,font=\scriptsize] (A) at (21,-0.5) {$s_0$};
\node [right,font=\scriptsize] (B) at (21,-1.5) {$s_{i_1}$};
\node [right,font=\scriptsize] (C) at (21,-2.5) {$s_{i_2}$};
\node [right,font=\scriptsize] (D) at (21,-3.5) {$\mc S_{T}$};
\draw[black] (21,0) -- (21,-4) -- (23,-4) -- (23,0) -- cycle;

\end{tikzpicture}
    \caption{An example of the evolution of a zero-initialized value function via SVI in the one-way single-path (Assumption \ref{assump:diff_settings_IS} (a)) in sparse reward setting (Table \ref{tab:reward_setting}). Each block in the $4\times4$ grid represents a state, the green state is the terminal state, the blue state is the initial state, and the thick curves represent walls that the agent cannot pass through. The cyan triangles are the one-way intermediate states. The orientation of the apex of a given cyan triangle represents the direction of each intermediate state. Namely, $s_{i_1}$ can only be visited from the left, and $s_{i_2}$ can only be visited from the right, and neither $s_{i_1}$ nor $s_{i_2}$ can be revisited. The agent can choose any action from $\{\text{left, right, up, down}\}$, and the agent will stay in the same state if it takes an action that hits the wall. Note that the value function of the terminal state remains 0 because the MDP stops once the agent reaches $\mc S_T$, so for any terminal state $s_t\in \mc S_T$, $V_k(s_t)$ is never updated.}
    \label{fig:SparseRewardGrid}
\end{figure}
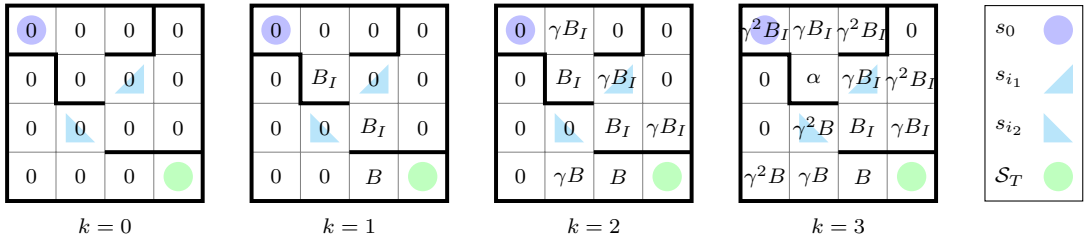
\begin{figure}[ht]
    \centering
    \begin{tikzpicture}[scale=0.65]
\centering
\draw[step=1cm, gray, very thin] (0, -4) grid (4, 0);
\fill[blue!25!white] (0.5, -0.5) circle (0.3cm);
\fill[green!25!white] (3.5, -3.5) circle (0.3cm);
\path[fill=cyan!25!white] (2.8,-1.2) -- (2.8,-1.8) -- (2.2,-1.8) -- cycle;
\path[fill=cyan!25!white] (1.2,-2.2) -- (1.8,-2.8) -- (1.2,-2.8) -- cycle;
\node [font=\scriptsize] (a11) at (0.5,-0.5) {$0$};
\node [font=\scriptsize] (a12) at (1.5,-0.5) {$0$};
\node [font=\scriptsize] (a13) at (2.5,-0.5) {$0$};
\node [font=\scriptsize] (a14) at (3.5,-0.5) {$0$};
\node [font=\scriptsize] (a21) at (0.5,-1.5) {$0$};
\node [font=\scriptsize] (a22) at (1.5,-1.5) {$0$};
\node [font=\scriptsize] (a23) at (2.5,-1.5) {$0$};
\node [font=\scriptsize] (a24) at (3.5,-1.5) {$0$};
\node [font=\scriptsize] (a31) at (0.5,-2.5) {$0$};
\node [font=\scriptsize] (a32) at (1.5,-2.5) {$0$};
\node [font=\scriptsize] (a33) at (2.5,-2.5) {$0$};
\node [font=\scriptsize] (a34) at (3.5,-2.5) {$0$};
\node [font=\scriptsize] (a41) at (0.5,-3.5) {$0$};
\node [font=\scriptsize] (a42) at (1.5,-3.5) {$0$};
\node [font=\scriptsize] (a43) at (2.5,-3.5) {$0$};
\node [font=\scriptsize] (a44) at (3.5,-3.5) {};
\node [font=\scriptsize] (a) at (2.0,-4.5) {$k=0$};
\draw[black, line width=1.5] (0.0, 0.0) -- (4.0, 0.0) -- (4.0, -4.0) -- (0.0, -4.0) -- cycle;
\draw[black, line width=1.5] (3.0, 0.0) -- (3.0, -1.0) -- (2.0, -1.0);
\draw[black, line width=1.5] (0.0, -1.0) -- (1.0, -1.0) -- (1.0, -2.0) -- (2.0, -2.0);
\draw[black, line width=1.5] (2.0, -3.0) -- (4.0, -3.0);

\draw[step=1cm, gray, very thin] (5, -4) grid (9, 0);
\fill[blue!25!white] (5.5, -0.5) circle (0.3cm);
\fill[green!25!white] (8.5, -3.5) circle (0.3cm);
\path[fill=cyan!25!white] (7.8,-1.2) -- (7.8,-1.8) -- (7.2,-1.8) -- cycle;
\path[fill=cyan!25!white] (6.2,-2.2) -- (6.8,-2.8) -- (6.2,-2.8) -- cycle;
\node [font=\scriptsize] (b11) at (5.5,-0.5) {$0$};
\node [font=\scriptsize] (b12) at (6.5,-0.5) {$0$};
\node [font=\scriptsize] (b13) at (7.5,-0.5) {$0$};
\node [font=\scriptsize] (b14) at (8.5,-0.5) {$0$};
\node [font=\scriptsize] (b21) at (5.5,-1.5) {$0$};
\node [font=\scriptsize] (b22) at (6.5,-1.5) {$B_I$};
\node [font=\scriptsize] (b23) at (7.5,-1.5) {$0$};
\node [font=\scriptsize] (b24) at (8.5,-1.5) {$0$};
\node [font=\scriptsize] (b31) at (5.5,-2.5) {$0$};
\node [font=\scriptsize] (b32) at (6.5,-2.5) {$0$};
\node [font=\scriptsize] (b33) at (7.5,-2.5) {$B_I$};
\node [font=\scriptsize] (b34) at (8.5,-2.5) {$0$};
\node [font=\scriptsize] (b41) at (5.5,-3.5) {$0$};
\node [font=\scriptsize] (b42) at (6.5,-3.5) {$0$};
\node [font=\scriptsize] (b43) at (7.5,-3.5) {$B$};
\node [font=\scriptsize] (b44) at (8.5,-3.5) {};
\node [font=\scriptsize] (b) at (7.0,-4.5) {$k=1$};
\draw[black, line width=1.5] (5.0, 0.0) -- (9.0, 0.0) -- (9.0, -4.0) -- (5.0, -4.0) -- cycle;
\draw[black, line width=1.5] (8.0, 0.0) -- (8.0, -1.0) -- (7.0, -1.0);
\draw[black, line width=1.5] (5.0, -1.0) -- (6.0, -1.0) -- (6.0, -2.0) -- (7.0, -2.0);
\draw[black, line width=1.5] (7.0, -3.0) -- (9.0, -3.0);

\draw[step=1cm, gray, very thin] (10, -4) grid (14, 0);
\fill[blue!25!white] (10.5, -0.5) circle (0.3cm);
\fill[green!25!white] (13.5, -3.5) circle (0.3cm);
\path[fill=cyan!25!white] (12.8,-1.2) -- (12.8,-1.8) -- (12.2,-1.8) -- cycle;
\path[fill=cyan!25!white] (11.2,-2.2) -- (11.8,-2.8) -- (11.2,-2.8) -- cycle;
\node [font=\scriptsize] (c11) at (10.5,-0.5) {$0$};
\node [font=\scriptsize] (c12) at (11.5,-0.5) {$\gamma B_I$};
\node [font=\scriptsize] (c13) at (12.5,-0.5) {$0$};
\node [font=\scriptsize] (c14) at (13.5,-0.5) {$0$};
\node [font=\scriptsize] (c21) at (10.5,-1.5) {$0$};
\node [font=\scriptsize] (c22) at (11.5,-1.5) {$B_I$};
\node [font=\scriptsize] (c23) at (12.5,-1.5) {$\gamma B_I$};
\node [font=\scriptsize] (c24) at (13.5,-1.5) {$0$};
\node [font=\scriptsize] (c31) at (10.5,-2.5) {$0$};
\node [font=\scriptsize] (c32) at (11.5,-2.5) {$0$};
\node [font=\scriptsize] (c33) at (12.5,-2.5) {$B_I$};
\node [font=\scriptsize] (c34) at (13.5,-2.5) {$\gamma B_I$};
\node [font=\scriptsize] (c41) at (10.5,-3.5) {$0$};
\node [font=\scriptsize] (c42) at (11.5,-3.5) {$\gamma B$};
\node [font=\scriptsize] (c43) at (12.5,-3.5) {$B$};
\node [font=\scriptsize] (c44) at (13.5,-3.5) {};
\node [font=\scriptsize] (c) at (12.0,-4.5) {$k=2$};
\draw[black, line width=1.5] (10.0, 0.0) -- (14.0, 0.0) -- (14.0, -4.0) -- (10.0, -4.0) -- cycle;
\draw[black, line width=1.5] (13.0, 0.0) -- (13.0, -1.0) -- (12.0, -1.0);
\draw[black, line width=1.5] (10.0, -1.0) -- (11.0, -1.0) -- (11.0, -2.0) -- (12.0, -2.0);
\draw[black, line width=1.5] (12.0, -3.0) -- (14.0, -3.0);

\draw[step=1cm, gray, very thin] (15, -4) grid (19, 0);
\fill[blue!25!white] (15.5, -0.5) circle (0.3cm);
\fill[green!25!white] (18.5, -3.5) circle (0.3cm);
\path[fill=cyan!25!white] (17.8,-1.2) -- (17.8,-1.8) -- (17.2,-1.8) -- cycle;
\path[fill=cyan!25!white] (16.2,-2.2) -- (16.8,-2.8) -- (16.2,-2.8) -- cycle;
\node [font=\scriptsize] (c11) at (15.5,-0.5) {$\gamma^2 B_I$};
\node [font=\scriptsize] (c12) at (16.5,-0.5) {$\gamma B_I$};
\node [font=\scriptsize] (c13) at (17.5,-0.5) {$\gamma^2 B_I$};
\node [font=\scriptsize] (c14) at (18.5,-0.5) {$0$};
\node [font=\scriptsize] (c21) at (15.5,-1.5) {$0$};
\node [font=\scriptsize] (c22) at (16.5,-1.5) {$\alpha$};
\node [font=\scriptsize] (c23) at (17.5,-1.5) {$\gamma B_I$};
\node [font=\scriptsize] (c24) at (18.5,-1.5) {$\gamma^2 B_I$};
\node [font=\scriptsize] (c31) at (15.5,-2.5) {$0$};
\node [font=\scriptsize] (c32) at (16.5,-2.5) {$\gamma^2 B$};
\node [font=\scriptsize] (c33) at (17.5,-2.5) {$B_I$};
\node [font=\scriptsize] (c34) at (18.5,-2.5) {$\gamma B_I$};
\node [font=\scriptsize] (c41) at (15.5,-3.5) {$\gamma^2 B$};
\node [font=\scriptsize] (c42) at (16.5,-3.5) {$\gamma B$};
\node [font=\scriptsize] (c43) at (17.5,-3.5) {$B$};
\node [font=\scriptsize] (c44) at (18.5,-3.5) {};
\node [font=\scriptsize] (c) at (17.0,-4.5) {$k=3$};
\draw[black, line width=1.5] (15.0, 0.0) -- (19.0, 0.0) -- (19.0, -4.0) -- (15.0, -4.0) -- cycle;
\draw[black, line width=1.5] (18.0, 0.0) -- (18.0, -1.0) -- (17.0, -1.0);
\draw[black, line width=1.5] (15.0, -1.0) -- (16.0, -1.0) -- (16.0, -2.0) -- (17.0, -2.0);
\draw[black, line width=1.5] (17.0, -3.0) -- (19.0, -3.0);

\draw[black] (20,0) -- (20,-4) -- (22,-4) -- (22,0) -- cycle;

\fill[blue!25!white] (21.5, -0.5) circle (0.3cm);
\path[fill=cyan!25!white] (21.8,-1.2) -- (21.8,-1.8) -- (21.2,-1.8) -- cycle;
\path[fill=cyan!25!white] (21.2,-2.2) -- (21.8,-2.8) -- (21.2,-2.8) -- cycle;
\fill[green!25!white] (21.5, -3.5) circle (0.3cm);
\node [right,font=\scriptsize] (A) at (20,-0.5) {$s_0$};
\node [right,font=\scriptsize] (B) at (20,-1.5) {$s_{i_1}$};
\node [right,font=\scriptsize] (C) at (20,-2.5) {$s_{i_2}$};
\node [right,font=\scriptsize] (D) at (20,-3.5) {$\mc S_{T}$};
\end{tikzpicture}
    \caption{An example of the evolution of a zero-initialized value function via SVI in the one-way single-path (Assumption \ref{assump:diff_settings_IS} (a)) intermediate reward setting (Table \ref{tab:reward_setting}). $\alpha=B_I+\gamma^2 B_I$, and the remaining settings are the same as Figure \ref{fig:SparseRewardGrid}.}
    \label{fig:OWSPIRGrid}
\end{figure}

\paragraph{The OWMP Intermediate Reward Setting} Similar to the OWSP setting, the OWMP is also more computationally efficient than using sparse terminal rewards (in terms of SVI), provided the conditions allow a greedy policy to recursively find the closest intermediate state and eventually reach the terminal state. Theorem \ref{thm:find_closest_IS} provides the conditions that enable the agent to find the closest intermediate state when the terminal states are not {\em directly reachable} (formally defined in Definition \ref{def:DirectlyRechableIS}), and when the terminal states are directly reachable, Theorem \ref{thm:find_closest_TerminalState} characterize such conditions and the associated computational complexity of finding a successful policy. Nevertheless, unlike the OWSP setting or the sparse terminal reward setting, such a policy does not necessarily find the shortest path from $s_0$ to $\mathcal{S}_T$. Our result in the OWMP setting illustrates a trade-off between the computational complexity and the pursuit of shortest path --  adding intermediate rewards on one-way intermediate states generally reduces computational complexity, but it does not necessarily find the shortest path. 

\paragraph{Our Contributions} In this work, we make the following contributions:
\begin{enumerate}
    \item We propose a framework that studies the computational complexity for goal-reaching tasks in terms of synchronous value iteration (SVI);
    \item We propose two one-way intermediate rewards setting: the one-way single-path (OWSP) and the one-way multi-path intermediate reward settings (OWMP) and connect both settings to practical task;
    \item In both OWSP and OWMP settings, we demonstrate that assigning intermediate rewards on one-way intermediate states (in addition to terminal rewards) is more computationally efficient than the sparse reward setting where the agent only receives terminal rewards at the terminal states.
    \item We reveal a trade-off between the computational complexity and the pursuit of the shortest path in the OWMP intermediate settings versus the sparse reward setting. That is, comparing to the sparse reward setting where the agent only receives terminal rewards and follows the shortest path to the goal, rewarding the agent at one-way intermediate states (in addition to terminal states) generally reduces computational complexity but the agent does necessarily follow the shortest path.
    \item We conduct extensive experiments in both OWSP and OWMP settings to demonstrate the computational benefits of rewarding one-way intermediate states and OWMP's trade-off between computational complexity and the shortest path. In particular, even though our theoretical analysis adopts SVI, the computational gains and the trade-off appear in both traditional $\eps$-greedy Q-learning and popular deep RL methods such as DQN \shortcite{mnih2015human}, A2C \shortcite{mnih2016asynchronous}, and PPO \shortcite{schulman2017proximal}. 
\end{enumerate}

\subsection{Organization of this Paper}
The rest of this paper is organized as follows. In Section \ref{sec:Background}, we introduce the basics of MDPs and the definition of successful policies. We then provide the formal definition of intermediate states, the OWSP and OWMP intermediate states in Section \ref{subsec:InterState}. We then discuss how these assumptions are well reflected in RL tasks observed in practice (Section \ref{subsec:InterpretInterState}). In Section \ref{sec:MainResult}, we separate provide the computational complexity of finding a successful trajectory for both the OWSP and OWMP settings. In Section \ref{sec:FindShortestPath}, we discuss the connection between the trajectory generated by a greedy policy and the shortest path for each setting. We then provide experimental results in Section \ref{sec:Experiment} to corroborate our theoretical results in Section \ref{sec:MainResult}. In Section \ref{sec:Discussion}, we discuss the connection between our work and some prior works, the implication of our results, and potential future directions. The proof and experimental details are deferred to the appendix.

\section{Background and Problem Formulation}
\label{sec:Background}
\subsection{Assumptions of the MDP}
\label{subsec:BasicAssump}
\paragraph{Basics of MDPs} We use a quintuple $\mc M = (\mc S,\mc A, P, r, \gamma)$ to denote an MDP, where $\mc S$ is a finite state space, $\mc A$ is a finite action space, $\gamma \in (0,1)$ is the discount factor, $r:(\mc S\times \mc A)\times \mc S \mapsto [0,\infty)$ is the reward function of each state-action pair $(s,a)$ and its subsequent state $s_a$, and $P$ is the probability transition kernel of the MDP, where $P(s_a|s,a)$ denotes the probability of the subsequent state $s_a$ of a state-action pair $(s,a)$. In particular, we focus on {\em deterministic MDPs}: the transition kernel satisfies $\forall (s,a)\in \mc S\times\mc A$, $\exists s_a\in \mc S$, such that $P(s_a|s,a)=1$ and reward function $r(s,a,s_a)$ only depends on the subsequent state $s_a$. We say a state $s^\prime$ is {\em reachable} from $s$ if there exists a {\em path} or a sequence of actions  $\{a_1,a_2,\dots,a_n\}$ that takes the agent from $s$ to $s^\prime$.\footnote{A similar definition of reachability appears in \shortcite{forejt2011automated}.} Note that the definition of reachability {\em does not} necessarily imply $s$ is reachable from $s$, since there may not exist a path from $s$ to itself. Our model also considers a fixed {\em initial state} $s_0\in \mc S$ and {\em terminal states} $\mc S_T\subset \mc S$. Without further explanation, we assume the terminal state $\mc S_T$ is reachable from all $s\in \mc S$. All in all, MDP begins at state $s_0$ and stops once the agent reaches any state $s\in \mc S_T$.

\begin{definition}[Distance between States]
\label{def:distance}
Given an MDP $\mc M = (\mc S,\mc A, P, r, \gamma)$ with initial state $s_0\in \mc S$, $\forall s\in \mc S\backslash \mc S_T,s^\prime \in \mc S$, we define the distance $D(s,s^\prime): \mc S\times \mc S\mapsto \bb N$ as the minimum number of required steps $(\geq 1)$ from $s$ to $s^\prime$. We slightly abuse the notation of $D(\cdot,\cdot)$ by writing $D(s,\mc S_{T}) \doteq \underset{s^\prime \in \mc S_{T}}{\min} D(s,s^\prime)$ as the minimum distance from $s$ to $\mc S_{T}$.
\end{definition}

\subsection{Successful Q-Functions}
In the preceding deterministic MDP formulation, we aim at solving a
goal-reaching RL problem \shortcite{kaelbling1993learning,sutton2011horde,andrychowicz2017hindsight,andreas2017modular,pong2018temporal,ghosh2019learning,eysenbach2020rewriting,eysenbach2020c,kadian2020sim2real,fujita2020distributed,chebotar2021actionable,khazatsky2021can} or a planning problem \shortcite{bertsekas1996neuro,boutilier1999decision,sutton1999between,boutilier2000stochastic,rintanen2001overview,lavalle2006planning,russell2016artificial,nasiriany2019planning}. We say a Q-function is {\em successful} if its associated {\em greedy policy} \shortcite{sutton2018reinforcement} leads the agent to the terminal states $\mc S_T$ from the initial state $s_0$. 
\begin{definition}[Successful Q-functions]
    \label{def:SuccessfulQFunc}
    Given a deterministic MDP $\mc M = (\mc S,\mc A, P, r, \gamma)$, with initial state $s_0$. We say $Q(\cdot,\cdot)$ is a successful Q-function of $\mc M$ if the greedy policy with respect to $Q$ generates a path $\{s_0,a_0,s_1,a_1,\dots, s_H\}$ such that $\forall i = 0,1,\dots,H-1$, $s_i\notin \mc S_{T}$ and $s_H\in \mc S_{T}$.
\end{definition}
All theoretical results of this work are based on {\em synchronous value iteration}:
\begin{equation}    
    \begin{split}
        \label{eq:SyncValueIter}
        V_{k+1}(s)&\doteq\max_{a\in \mc A}\left\{Q_k(s,a)\right\},\;\forall s\in \mc S,\\
        Q_k(s,a) &= r(s,a,s_a)+\gamma V_k(s_a),\;\forall (s,a)\in \mc S\times \mc A.
    \end{split}
\end{equation}
The convergence of {\em synchronous} value iteration has been well studied \shortcite{puterman2014markov,sutton2018reinforcement,bertsekas2019reinforcement}, hence we use it as a starting point for studying the effect of intermediate states and intermediate rewards to be introduced later in Section \ref{sec:InterStateInterReward}.

\section{Intermediate States and Intermediate Rewards}
\label{sec:InterStateInterReward}

\subsection{Intermediate States}
\label{subsec:InterState}

We state the formal definition of intermediate states in Definition \ref{def:IntermediateStates}, and provide two follow-up assumptions (Assumption \ref{assump:OneWayIntermediateStates}, \ref{assump:diff_settings_IS}) regarding the intermediate states.

\begin{definition}[Intermediate States]
\label{def:IntermediateStates}
Given an MDP $\mc M = (\mc S,\mc A, P, r, \gamma)$ with initial state $s_0\in \mc S$ and terminal states $\mc S_{T}$, we define intermediate states $\mc S_I = \{s_{i_1},s_{i_2},\dots,s_{i_N}\}\subset \mc S\backslash \mc S_T$ as the states that satisfy  
\begin{equation}
    r(s,a,s_a)>0,\;\forall s_a\in \mc S_{I},
\end{equation}
where $s_a$ is the subsequent state of the state-action pair $(s,a)$.
\end{definition}

\begin{assumption}[One-Way Intermediate States]
\label{assump:OneWayIntermediateStates}
Given an MDP $\mc M = (\mc S,\mc A, P, r, \gamma)$ with initial state $s_0\in \mc S$, terminal states $\mc S_{T}$, and intermediate states $\mc S_I$, we assume that each intermediate state $s_{i_j}\in \mc S_{I}$ can only be visited {\em at most once} in one episode under any policy, namely, $\forall j\in [N],\;D(s_{i_j},s_{i_j})=\infty$.
\end{assumption}

Intuitively, Assumption \ref{assump:OneWayIntermediateStates} characterizes the states that ``cannot be revisited'' in one episode, upon the agent's arrival. For example, in the Pacman game (Figure \ref{fig:Pacman}), if the Pacman reaches a location $(x,y)$ that contains a food pellet, then the Pacman cannot go back to previous states where the consumed food pellet is still available at $(x,y)$. Similarly for the door \& key environment \shortcite{gym_minigrid} that will be presented in Section \ref{sec:Experiment}, once the agent picks up a key at location $(x,y)$, it possesses the key for the rest of that episode.
Assumption \ref{assump:OneWayIntermediateStates} is widely adopted in practice \cite{Brockman2016OpenAI,vinyals2017starcraft,vinyals2019grandmaster,berner2019dota,ye2020towards}, as many subgoals identified by designers are usually one-way (see more discussion in Section \ref{subsec:InterpretInterState}).

\begin{assumption}[Different Settings of Intermediate States]
\label{assump:diff_settings_IS}
Given an MDP $\mc M = (\mc S,\mc A, P, r, \gamma)$ with initial state $s_0\in \mc S$, intermediate states set $\mc S_I = \{s_{i_1},s_{i_2},\dots,s_{i_N}\}$ and terminal states $\mc S_{T}$ satisfying Assumption \ref{assump:OneWayIntermediateStates}, we study these different settings of $\mc S_I$:
\begin{enumerate}[(a)]
    \item Any path from $s_0$ to $\mc S_{T}$ has to visit all states in $\mc S_I$ in a certain order (i.e., in the order of $s_{i_1},s_{i_2},\dots,s_{i_N}$).
    \item Any path from $s_0$ to $\mc S_{T}$ has to visit at least $n$ intermediate states $\{s_{i_j}|s_{i_j}\in \mc S_I,j\in \mc J \subset[N], |\mc J| \geq n \}$ in a certain order.
\end{enumerate}
\end{assumption}

Assumptions \ref{assump:OneWayIntermediateStates} and \ref{assump:diff_settings_IS} characterize intermediate states as different one-way paths consisting of subgoals that the agent has to complete in a certain order. If we consider all paths from $s_0$ to $\mc S_T$ that pass though the same intermediate states as an equivalence class, the goal-reaching problem using setting (a) is a {\em  single-path problem}, while setting (b) is a {\em multi-path problem}. Additionally, Assumption \ref{assump:diff_settings_IS} indicates that from a given state $s$, not every intermediate state $s_i\in\mc S_I$ can be directly visited (without first visiting any other intermediate states). Thus, it is worth considering the {\em directly reachable} intermediate (and terminal) states from a given state $s$ and the {\em minimum distance} between two intermediate states.

\begin{definition}[Direct Reachability]
\label{def:DirectlyRechableIS}
Given an MDP $\mc M = (\mc S,\mc A, P, r, \gamma)$ with terminal states $\mc S_{T}$ and intermediate states $\mc S_I = \{s_{i_1},s_{i_2},\dots,s_{i_N}\}$ satisfying Assumption \ref{assump:OneWayIntermediateStates}, $\forall s \in \mc S\backslash\mc S_{T}$, let $\mc I_d(s)\subset [N]$ denote the indices of {\em directly reachable} intermediate states of $s$. That is, $\forall j\in \mc I_d(s)$, there exists a path from $s$ to the intermediate state $s_{i_j}$ in the transition graph of $\mc M$ that does not visit any other intermediate state.
\end{definition}
Similar to the intermediate states $\mc S_I$, we say the terminal states $\mc S_T$ are {\em directly reachable} from a state $s$ if there exists a path from $s$ to $\mc S_T$ that does not contain any intermediate states.
\begin{assumption}[Minimum Distance]
\label{assump:min_dist_IS}
Given an MDP $\mc M = (\mc S,\mc A, P, r, \gamma)$ with terminal states $\mc S_{T}$ and intermediate states $\mc S_I = \{s_{i_1},s_{i_2},\dots,s_{i_N}\}$ satisfying Assumption \ref{assump:OneWayIntermediateStates}, $\forall s\in \mc S\backslash\mc S_{T}$, we assume the distance between any two intermediate states is at least $h \in \bb N^+$, and the distance between any intermediate state and $\mc S_{T}$ is also at least $h$. Namely, $\forall s_{i_j}\in \mc S_{I}$, we have
\begin{equation}
    \min_{j^\prime \in \mc I_d(s_{i_j})} D(s_{i_j},s_{i_{j^\prime}}) \geq h,\; \text{ and } \min_{j \in [N]} D(s_{i_j},\mc S_{T}) \geq h,
\end{equation} 
where $\mc I_d(s_{i_j})$ is the set of indices of directly reachable intermediate states from $s_{i_j}$ (see Definition \ref{def:DirectlyRechableIS}).
\end{assumption}
In practice, although the minimum distance $h$ between two intermediate states (and an intermediate state to $\mc S_{T}$) is task dependent, it is generally fair to assume that $h$ satisfies $1<h<D(s_0,S_T)$.

The aforementioned assumptions (Assumption \ref{assump:OneWayIntermediateStates}, \ref{assump:diff_settings_IS}, and \ref{assump:min_dist_IS}) are connected to our main results in the following respects: 
\begin{enumerate}
    \item The one-way Assumption \ref{assump:OneWayIntermediateStates} and \ref{assump:diff_settings_IS} provide the key theoretical framework for our main result. The key insight of the computational gain in Section \ref{sec:MainResult} is that, if the MDP contains some one-way intermediate states, then one can reduce the computational complexity of finding the terminal states $\mc S_T$ to the complexity of {\em finding the closest intermediate states}, which eventually leads to $\mc S_T$. The detailed dependence of Assumption \ref{assump:OneWayIntermediateStates} and \ref{assump:diff_settings_IS} on our main result is provided in Table \ref{tab:assumption_requirement}. 
    \item We use Assumption \ref{assump:min_dist_IS} for quantitatively analyzing the relative scale of the terminal rewards versus the intermediate rewards that {\em leads the agent to the closet intermediate states} and eventually to $\mc S_T$.
\end{enumerate}
Assumption \ref{assump:OneWayIntermediateStates} and \ref{assump:diff_settings_IS} are highly task dependent because they characterize the connections of the one-way intermediate states of different tasks. In the next subsection, we will introduce some practical tasks that adopt Assumption \ref{assump:OneWayIntermediateStates} and \ref{assump:diff_settings_IS}.

\subsection{The Interpretation of Intermediate States}
\label{subsec:InterpretInterState}

We list several practical tasks that adopt Assumption \ref{assump:diff_settings_IS} in Table \ref{tab:app_example} to interpret Assumption \ref{assump:diff_settings_IS}. As illustrated in Table \ref{tab:app_example}, the multi-path intermediate state setting (Assumption \ref{assump:diff_settings_IS} (b)) is more common in practical tasks than the single-path setting (Assumption \ref{assump:diff_settings_IS} (a)), because the single-path setting is essentially a special case of the multi-path setting.

\begin{wraptable}{r}{90mm}
    \centering
    \small
    \begin{tabular}{lcc}
        \toprule
        Assumption \ref{assump:diff_settings_IS} & (a) & (b)\\
         \midrule
         Maze (Figure \ref{fig:maze})& \xmark  &\xmark \\
         Pacman (Figure \ref{fig:Pacman})& \xmark & \cmark \\
         Montezuma (\shortcite{Brockman2016OpenAI})& \cmark &  \cmark \\
         Go (\shortcite{silver2016mastering,silver2017mastering})& \xmark & \xmark \\
         Dota2 (\shortcite{berner2019dota}) & \xmark & \cmark\\
         StarCraft II (\shortcite{vinyals2017starcraft,vinyals2019grandmaster})& \xmark &  \cmark \\
         Honor of Kings (\shortcite{ye2020towards})& \xmark & \cmark \\
        \bottomrule
    \end{tabular}
    \caption{Examples of some RL applications that adopt different settings in Assumption \ref{assump:diff_settings_IS}.}
    \label{tab:app_example}
    \vspace{-1.2em}
\end{wraptable}

Assumption \ref{assump:diff_settings_IS} does not hold for the Maze (See Figure \ref{fig:maze}) because the agent can repeatedly visit any state in the maze. As for Go, Assumption \ref{assump:diff_settings_IS} also does not hold because the existence of subgoals in Go remains ambiguous. Except for the Maze problem and Go, Assumption \ref{assump:diff_settings_IS} (b) fits the others practical tasks in Table \ref{tab:app_example} naturally. Moreover, the stronger Assumption \ref{assump:diff_settings_IS} (a) holds for Montezuma because it requires the agent to visit specific states in a certain order, e.g., the agent will need to pick up a key and unlock a door to proceed to the next chapter. The Pacman game satisfies Assumption \ref{assump:diff_settings_IS} (b) -- considering each state in the Pacman game consists of the location of the Pacman, the ghost, and the remaining foods, then the states where ``the Pacman consumes a food pellet (the location of the Pacman first coincides with an available food pellet)'' can be viewed as a one-way intermediate state. Hence each episode of the Pacman game contains $n$ (the total number of food) intermediate states, and these states appear in the order in which the number of available food is decreasing. As for Dota2, StarCraft II, and Honor of Kings, their winning conditions require the agent to ``destroy'' the enemy's base. However, the enemy's base is not assailable before the agent completes several subtasks. For example, in Dota2 or Honor of Kings, 3 towers block each of the roads to the enemy's base. The agent must first sequentially destroy the 3 towers to reach the enemy's base.\footnote{Information on the rules and gameplay of Dota are available online; i.e., \url{https://purgegamers.true.io/g/dota-2-guide/}. The rule of Honor of Kings is similar to Dota2.} If one views the state where the agent destroys a tower in Dota2 or Honor of Kings as an intermediate state, Assumption \ref{assump:diff_settings_IS} (b) is naturally satisfied, because the agent need to visit at least 3 intermediate states to attack the enemy's base.

Note that even though some of the practical tasks (Pacman, Montezuma, Dota2, StarCraft II, and Honor of Kings) mentioned in Table \ref{tab:app_example} satisfy Assumption \ref{assump:diff_settings_IS}, our theoretical results to be introduced in Section \ref{sec:MainResult} may not be directly applied to these tasks; this is because they contain other factors that cannot be characterized by our current theoretical assumptions (e.g., the deterministic MDP assumption mentioned in Section \ref{subsec:BasicAssump}, the equal magnitude assumption to be mentioned in Section \ref{subsec:rewards}).

\subsection{Rewards}
\label{subsec:rewards}
We assume the reward function $r(s,a,s_a)$ of a state-action pair $(s,a)$ only depends on the subsequent state $s_a$. In particular, our theoretical results (Section \ref{sec:MainResult}) and experiments (Section \ref{sec:Experiment}) focus on comparing these two reward settings: the {\em sparse reward setting} where {\em there are no intermediate states in the environment} hence the agent only receives positive rewards at $\mc S_T$, and the {\em intermediate reward setting}, meaning that {\em the environment contains intermediate states} so the agent receives rewards at both $\mc S_T$ and $\mc S_I$. Formally, we write the reward functions of the sparse reward setting and the intermediate reward setting as follows:
\begin{table}[ht]
    \centering
    \small
    \begin{tabular}{r|cc|ccc}
        \toprule 
        \multicolumn{1}{c}{} & \multicolumn{2}{c}{Sparse Rewards} & \multicolumn{3}{c}{Intermediate Rewards}\\
        \midrule
        $r(s,a,s_a)$& $B$ & $0$ & $B$ & $B_I$ & 0\\
        Condition & $s_a\in \mc S_T$ & $s_a\in \mc S\backslash \mc S_T$ & $s_a\in \mc S_T$ & $s_a\in \mc S_I$ & $s_a\in \mc S\backslash (\mc S_T\cup \mc S_I)$ \\
        \bottomrule
    \end{tabular}
    \caption{The sparse reward setting and the intermediate reward setting.}
    \label{tab:reward_setting}
\end{table}

As shown in Table \ref{tab:reward_setting}, we assume the agent receives terminal rewards $B$ once it reaches $s_t\in \mc S_T$, and receives intermediate rewards $B_I$ once it reaches an intermediate state $s_i\in \mc S_I$. Note that we are {\em not claiming} that the applications provided in Table \ref{tab:app_example} satisfies the intermediate reward setting in Table \ref{tab:reward_setting} where all intermediate rewards have the same magnitude. In fact, the reward setting in Table \ref{tab:reward_setting} only matches the Pacman game in Table \ref{tab:app_example}, where we assign all food with equal intermediate rewards. Still, our theoretical results in Section \ref{sec:MainResult} relying on the equal magnitude of intermediate rewards can be further generalized to the case where intermediate rewards are different. We defer the discussion of such generalization to unequal magnitude of intermediate rewards in Section \ref{sec:Discussion} and leave the formal studies of the generalization for future.

\section{Main Results}
\label{sec:MainResult}

We study the conditions for both the sparse reward and intermediate reward setting, under which the Q-function $Q_k(\cdot,\cdot)$ is a successful Q-function (Definition \eqref{def:SuccessfulQFunc}), where $Q_k(\cdot,\cdot)$ is the Q-function after $k$ update of SVI \eqref{eq:SyncValueIter} from zero initialization: \begin{equation}
    \label{eq:zero_init}
    Q_0(s,a) = 0,\; V_0(s) = 0,\;\forall (s,a)\in \mc S\times\mc A.
\end{equation}
And if $Q_k(\cdot,\cdot)$ is indeed a successful Q-function, we further discuss the computational complexity $k$ (the minimum number of SVI) 
of obtaining a successful $Q_k(\cdot,\cdot)$.

For the sparse reward setting, we first show that for a large enough $k$, $Q_k(\cdot,\cdot)$ is a successful Q-function and provide its computational complexity in Section \ref{subsec:sparse_reward}. Next, we discuss the following MDP settings as shown in Table \ref{tab:reward_setting}.
\begin{table}[H]
    \centering
    \small
    \begin{tabular}{r|c|c}
        \toprule 
        \multicolumn{1}{c}{} & \multicolumn{1}{c}{Assumption \ref{assump:OneWayIntermediateStates}} & \multicolumn{1}{c}{Assumption \ref{assump:diff_settings_IS}}\\
        \midrule
        The One-Way Single-Path (OWSP) setting& \cmark & (a)\\
        \midrule
        The One-Way  Multi-Path (OWMP) setting& \cmark & (b)\\
        \midrule
        The Non One-Way (NOW) setting& \xmark & \xmark\\
        \bottomrule
    \end{tabular}
    \caption{Requirements of different assumptions for the main result.}
    \label{tab:assumption_requirement}
\end{table}


For the OWSP setting, we show that $Q_k(\cdot,\cdot)$ is a successful Q-function for sufficiently large $k$ and provide its computational complexity in Section \ref{subsec:OWSP_intermediate_states}. For the OWMP setting, we introduce a sufficient condition under which $Q_k(\cdot,\cdot)$ is successful for a large enough $k$, and the corresponding computational complexity in Section \ref{subsec:OWMP_intermediate_states}. As for the Non One-Way (NOW) setting, we provide an example where $Q_k(\cdot,\cdot)$ is not successful for any $k$ in Section \ref{subsec:NonOneWayIntermedaiteReward}. The required assumptions for the OWSP, OWMP, and NOW are provided in Table \ref{tab:assumption_requirement}.

\subsection{Sparse Rewards}
\label{subsec:sparse_reward}
Given zero-initialized value functions \eqref{eq:zero_init}, Figure \ref{fig:SparseRewardGrid} demonstrates the evolution of $V_k(s)$ as the number of SVI ($k$) increases. A direct implication of Figure \ref{fig:SparseRewardGrid} is that, at iteration $k$, $\forall s\in \mc S\backslash \mc S_T$, given $d = D(s,\mc S_T)$, the value function $V_k(\cdot)$ satisfies:
\begin{equation}
    \label{eq:SparseRewardValueMainText}
    V_k(s) = \gamma^{d-1}B\cdot \indicator{d\leq k}.
\end{equation}
The derivation of \eqref{eq:SparseRewardValueMainText} is provided in Lemma \ref{lemma:Vk_bounds_induction_SR}. With \eqref{eq:SparseRewardValueMainText}, we know that when $k\geq D(s_0,\mc S_T)$, the value function at the initial state $V_k(s_0)$ would be positive. Hence, a greedy policy that recursively finds the next state with the largest value from $s_0$, will generate a path which eventually reaches $\mc S_T$, whenever $k\geq D(s_0,\mc S_T)$. More precisely, we have:
\begin{proposition}[Sparse Rewards]
\label{prop:comp_sparse_reward}
Let $\mc M = (\mc S,\mc A, P, r, \gamma)$ be a deterministic MDP with initial state $s_0$ and terminal states $\mc S_{T}$. If the reward function $r(\cdot)$ follows the sparse reward setting (Table \ref{tab:reward_setting}) and the value function and Q-function are zero-initialized \eqref{eq:zero_init}, then after any $k\geq D(s_0,\mc S_{T})$  synchronous value iteration updates \eqref{eq:SyncValueIter}, the Q-function $Q_k$ is a successful Q-function, and a greedy policy follows the shortest path from $s_0$ to $\mc S_T$. 
\end{proposition}

We provide a sketch proof here and leave the details in Appendix \ref{proof:absorb_radius_Q_learning}. With \eqref{eq:SparseRewardValueMainText} from Lemma \ref{lemma:Vk_bounds_induction_SR}, we can write the value function in this setting as $V_k(s) = \gamma^{d-1}B\cdot\indicator{d\leq k}, d = D(s, \mathcal{S}_T)$. Hence a greedy policy taking the agent to the subsequent state with maximum value function will lead the agent one step closer to $\mc S_T$. Recursively applying the same argument, we conclude that after $k\geq D(s_0,\mc S_T)$ SVI, an agent following the greedy policy finds $\mc S_T$ from $s_0$.

Note that our result in the sparse reward setting (Proposition \ref{prop:comp_sparse_reward}) might seem similar to the classic convergence result for SVI via dynamical programming \cite{bertsekas1995dynamic,sutton2018reinforcement}. However, our setting only focuses on the conditions that allow a greedy to find $\mc S_T$ from the initial state $s_0$, rather than all state $s\in \mc S$. Therefore, the computational complexity in the sparse reward setting relies on the distance $D(s_0,\mc S_T)$. Furthermore, in the OWSP and OWMP settings to be introduced later, our proofs contain share the similar method as the sparse reward setting, hence Proposition \ref{prop:comp_sparse_reward} serves as a good preliminary result for the later parts.

\subsection{One-Way Single-Path  Intermediate Rewards}
\label{subsec:OWSP_intermediate_states}

Similar to the sparse reward setting, we first illustrate the evolution of the one-way single-path (OWSP) intermediate rewards in Figure \ref{fig:OWSPIRGrid}. As shown in Figure \ref{fig:OWSPIRGrid}, after $k$ synchronous iterations, the value function at each state $s$ equals to the sum of discounted rewards from all future intermediate states and the terminal states. The value function $V_k(s)$ of state $s$ at iteration $k$ is provided in Lemma \ref{lemma:Vk_bounds_induction_IR_a}. In this case, when $k$ satisfies
\begin{equation*}
    k \geq \max\{D(s_0,s_{i_1}),D(s_{i_1},s_{i_2}),\dots, D(s_{i_{N-1}},s_{i_N}), D(s_{i_N},\mc S_{T})\},
\end{equation*}
$Q_k(\cdot,\cdot)$ is a successful Q-function and a greedy policy will recursively find the next intermediate state, eventually reaching $\mc S_T$. More precisely, we have:

\begin{proposition}[Single-path Intermediate States]
\label{prop:com_comp_inter_reward}
Let $\mc M = (\mc S,\mc A, P, r, \gamma)$ be a deterministic MDP with initial state $s_0$, intermediate states $\mc S_{I} = \{s_{i_1},s_{i_2},\dots,s_{i_N}\}$, and terminal states $\mc S_{T}$. Suppose $\mc M$ satisfies Assumption \ref{assump:OneWayIntermediateStates} and \ref{assump:diff_settings_IS} (a). If the reward function $r(\cdot)$ follows the intermediate reward setting (Table \ref{tab:reward_setting}) and the value function and Q-function are zero-initialized \eqref{eq:zero_init}, then after
\begin{equation}
    k\geq d_{\max}\doteq\max\{D(s_0,s_{i_1}),D(s_{i_1},s_{i_2}),\dots, D(s_{i_{N-1}},s_{i_N}), D(s_{i_N},\mc S_{T})\}
\end{equation} 
synchronous value iteration updates \eqref{eq:SyncValueIter}, the Q-function $Q_k$ is a successful Q-function, and a greedy policy follows the shortest path from $s_0$ to $\mc S_T$. 
\end{proposition}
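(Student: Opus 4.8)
The proof rests on two pieces: the closed form of the zero-initialized value iterates on a one-way single-path MDP (Lemma~\ref{lemma:Vk_bounds_induction_IR_a}), and a step-by-step analysis of the greedy trajectory, handled one ``phase'' at a time. Say a non-terminal state $s$ is \emph{in phase $j$} if $s_{i_{j+1}}$ is the unique directly reachable intermediate state from $s$, with ``phase $N$'' meaning $\mc S_T$ is directly reachable from $s$ (under Assumptions~\ref{assump:OneWayIntermediateStates} and~\ref{assump:diff_settings_IS}(a) this is well defined, and $s_0$ is in phase $0$); write $h_0=D(s_0,s_{i_1})$, $h_j=D(s_{i_j},s_{i_{j+1}})$ for $1\leq j\leq N-1$, and $h_N=D(s_{i_N},\mc S_T)$, so that $d_{\max}=\max_{0\leq j\leq N}h_j$. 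Since Assumption~\ref{assump:diff_settings_IS}(a) forces every $s_0$–$\mc S_T$ path to traverse $s_{i_1},\dots,s_{i_N}$ in order, $D(s_0,\mc S_T)=h_0+h_1+\cdots+h_N$, so it suffices to show the greedy trajectory reaches $\mc S_T$ in exactly that many steps.

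I would first extract the facts I need from Lemma~\ref{lemma:Vk_bounds_induction_IR_a}. Inside phase $j$ no reward is collected before arriving at $s_{i_{j+1}}$ (resp.\ $\mc S_T$ for $j=N$), so, using $\gamma\in(0,1)$ and zero initialization, $V_k(s)$ depends on a phase-$j$ state $s$ only through $d=D(s,s_{i_{j+1}})$: it equals $0$ when $k<d$, and for $k\geq d$
\[
V_k(s)=\gamma^{d-1}B_I+\gamma^{d}V_{k-d}(s_{i_{j+1}})\ \ (j<N),\qquad V_k(s)=\gamma^{d-1}B\ \ (j=N).
\]
Together with the (standard) monotonicity $V_{m}\leq V_{m+1}$ of zero-initialized iterates, this gives the key comparison: for $k\geq d$, a phase-$j$ state at distance $d$ has strictly larger $V_k$ than a phase-$j$ state at distance $d+1$; indeed, when $k\geq d+1$ the difference is $\gamma^{d-1}(1-\gamma)B_I+\gamma^{d}\big(V_{k-d}(s_{i_{j+1}})-\gamma V_{k-d-1}(s_{i_{j+1}})\big)>0$, and when $k=d$ the farther state has value $0$ while the nearer one has value $\gamma^{d-1}B_I>0$ (the case $j=N$ is immediate from $\gamma^{d-1}(1-\gamma)B>0$).

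Next I would prove the single-step progress statement: if $k\geq d=D(s,s_{i_{j+1}})\geq 1$, then the greedy action at the phase-$j$ state $s$ (an action $a$ maximizing $Q_k(s,a)=r(s,a,s_a)+\gamma V_k(s_a)$) moves to a phase-$j$ state at distance $d-1$, or onto $s_{i_{j+1}}$ itself when $d=1$. When $d\geq2$, no neighbor of $s$ is an intermediate state (Assumption~\ref{assump:diff_settings_IS}(a)), so $Q_k(s,a)=\gamma V_k(s_a)$ for all $a$; by the triangle inequality every neighbor lies at distance $\geq d-1$, some neighbor attains $d-1$, and by the comparison above exactly those neighbors maximize $V_k$ — any tie is among distance-$(d-1)$ neighbors, all of which make progress. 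When $d=1$, the action onto $s_{i_{j+1}}$ has $Q_k$-value $B_I+\gamma V_k(s_{i_{j+1}})$ (or $B$ for $j=N$), while every other action leads to a phase-$j$ state $s'$ at distance $\geq 1$ and hence has value $\gamma V_k(s')\leq\gamma\big(B_I+\gamma V_k(s_{i_{j+1}})\big)<B_I+\gamma V_k(s_{i_{j+1}})$, so the greedy action steps onto the checkpoint.

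Finally I would chain this along the trajectory: $s_0$ is a phase-$0$ state at distance $h_0\leq d_{\max}\leq k$, and applying the single-step statement repeatedly — inside phase $j$ the distance to $s_{i_{j+1}}$ starts at $h_j\leq d_{\max}\leq k$ and drops by $1$ each step — the greedy trajectory reaches $s_{i_1}$ after $h_0$ steps, then $s_{i_2}$ after $h_1$ more, \dots, then $\mc S_T$ after a final $h_N$ steps, never revisiting a state. So it is a path of length $h_0+\cdots+h_N=D(s_0,\mc S_T)$, which makes $Q_k$ a successful Q-function (Definition~\ref{def:SuccessfulQFunc}) and shows the induced path is a shortest $s_0$–$\mc S_T$ path. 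I expect the delicate point to be the $d=1$ case: a priori the greedy policy might prefer to stall one step from a checkpoint, or oscillate between two states adjacent to it, and ruling this out genuinely uses the exact recursion for $V_k$ and iterate-monotonicity rather than the crude bound $V_k\geq\gamma^{d-1}B_I$; the rest is bookkeeping on Lemma~\ref{lemma:Vk_bounds_induction_IR_a} and on Assumption~\ref{assump:diff_settings_IS}(a).
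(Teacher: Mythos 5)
Your proposal is correct and follows essentially the same route as the paper: both rest on the closed-form expression for the zero-initialized iterates $V_k$ from Lemma~\ref{lemma:Vk_bounds_induction_IR_a} (your phase-$j$ recursion $V_k(s)=\gamma^{d-1}B_I+\gamma^{d}V_{k-d}(s_{i_{j+1}})$ is just a repackaging of that lemma's sum $\sum_l v(k,d_l)B_l$, and your ``phases'' are the paper's sets $\mc S_{[j-1,j)}$), and both then argue that strict monotonicity of $V_k$ in the distance to the next checkpoint forces every greedy action to make one step of progress, which is chained into a shortest path. Your explicit handling of the $d=1$ boundary case via $\gamma(B_I+\gamma V_k(s_{i_{j+1}}))<B_I+\gamma V_k(s_{i_{j+1}})$ is a slightly more careful rendering of the paper's blanket bound $Q_k(s,a^{-})\leq\gamma V_k(s)$, but it is not a different argument.
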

We defer the details to Appendix \ref{proof:complexity_inter_reward}. To outline, we provide an explicit formulation of $V_k(s)$ for this setting in Lemma \ref{lemma:Vk_bounds_induction_IR_a}, and show that when $k\geq d_{\max}$, a greedy agent will move one step forward to the next closest intermediate state and eventually reach $\mc S_T$.

Comparing to Proposition \ref{prop:comp_sparse_reward}, we know that, the computational complexity (in terms of obtaining a successful Q-function) of an MDP $\mc M$ with the sparse reward setting can be reduced from $D(s_0,\mc S_T)$ to 
\begin{equation*}
    \max\{D(s_0,s_{i_1}),D(s_{i_1},s_{i_2}),\dots, D(s_{i_{N-1}},s_{i_N}), D(s_{i_N},\mc S_{T})\}, 
\end{equation*}
if $\mc M$ adopts the OWSP intermediate reward setting.

\subsection{One-Way Multi-Path Intermediate Rewards}
\label{subsec:OWMP_intermediate_states}

As shown in Table \ref{tab:app_example}, comparing to the OWSP setting, the OWMP setting applies to more practical tasks and can be considered as a special case of the OWSP setting. Therefore, Proposition \ref{prop:com_comp_inter_reward} for the OWSP setting cannot be directly generalized to the OWMP case, since moving towards one intermediate state does not necessarily lead to $\mc S_T$ in the OWMP case.

Still, the intuition for the computational complexity of obtaining a successful Q-function in the OWMP setting is similar to the OWSP setting. For a state $s$, under some conditions regarding rewards $B$, $B_I$, and intermediate states $\mc S_I$, a greedy agent finds the closest directly reachable intermediate state $s_{i_{j_1}}$ of $s$ (Definition \ref{def:DirectlyRechableIS}) after a sufficient number SVI. From any intermediate $s_{i_{j_{m}}}\in \mc S_I$, the greedy agent recursively finds the closest directly reachable intermediate state $s_{i_{j_{m+1}}}$ of $s_{i_{j_{m}}}$ and eventually reaches $\mc S_{T}$, since intermediate states cannot be revisited and the number of total intermediate states is finite. The next theorem characterizes the sufficient conditions for an agent starting from $s\in \mc S\backslash \mc S_T$ following a greedy policy to find the closest directly reachable intermediate of $s$.
\begin{theorem}[Finding the Closest $\mc S_I$]
\label{thm:find_closest_IS}
Let $\mc M = (\mc S,\mc A, P, r, \gamma)$ be a deterministic MDP with initial state $s_0$, intermediate states  $\mc S_{I} = \{s_{i_1},s_{i_2},\dots,s_{i_N}\}$, and terminal states $\mc S_{T}$. Suppose $\mc M$ satisfies Assumption \ref{assump:OneWayIntermediateStates}, \ref{assump:diff_settings_IS} (b), and \ref{assump:min_dist_IS}, if the reward function $r(\cdot)$ follows the intermediate reward setting (Table \ref{tab:reward_setting}) and the value function and Q-function are zero-initialized \eqref{eq:zero_init}, then $\forall s \in \mc S\backslash\mc S_{T}$ that cannot directly reach $\mc S_{T}$, after
\begin{equation}
    k\geq d = D(s,\mc S_{I}) \doteq \min_{j\in \mc I_d(s)}D(s,s_{i_j})
\end{equation}
synchronous value iteration updates \eqref{eq:SyncValueIter}, an agent following the greedy policy will find the closest directly reachable intermediate state $s_{i_{j^\star}}$ of $s$ ($D(s,s_{i_{j^\star}}) = D(s,\mc S_{I})$), given  $\frac{B}{B_I}\in \paren{0,\frac{1}{1-\gamma^h}}$ if $\gamma+\gamma^h\leq 1$ or $\frac{B}{B_I}\in\paren{\frac{1}{1-\gamma^h},\frac{1-\gamma}{\gamma^{1+h}}}$ if $\gamma+\gamma^h>1$, where $h$ is the minimum distance between any two intermediate states (Assumption \ref{assump:min_dist_IS}).
\end{theorem}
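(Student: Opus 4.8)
The plan is to run the same ``follow the greedy trajectory'' argument used for Proposition~\ref{prop:com_comp_inter_reward}, except that the single-path structure is replaced by an inequality forcing every greedy step to strictly decrease the distance to the intermediate-state set $\mc S_I$.

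First I would record a two-sided estimate on $V_k$ in the OWMP setting, the analogue of Lemma~\ref{lemma:Vk_bounds_induction_IR_a}. Because rewards are nonnegative and $Q_0=V_0=0$, the iterates $V_k$ are nondecreasing in $k$ and dominated by the true optimal value $V^\star$. For any $s\in\mc S\setminus\mc S_T$ that cannot directly reach $\mc S_T$, set $d=D(s,\mc S_I)$: simply walking to the closest directly reachable intermediate state yields the lower bound $V_k(s)\ge\gamma^{d-1}B_I$ once $k\ge d$. For the matching upper bound, Assumption~\ref{assump:min_dist_IS} forces every path leaving $s$ to meet its first intermediate state no earlier than step $d$ and to keep consecutive intermediate states (and the last hop into $\mc S_T$) at least $h$ steps apart; summing discounted rewards over a path that visits $\ell$ intermediate states before stopping gives
\begin{equation*}
 V_k(s)\ \le\ V^\star(s)\ \le\ \gamma^{d-1}\,\sup_{\ell\ge 1}\Big(B_I\textstyle\sum_{i=0}^{\ell-1}\gamma^{ih}+\gamma^{\ell h}B\Big)\ =:\ \gamma^{d-1}M.
\end{equation*}
Since the bracketed quantity is affine in $\gamma^{\ell h}\in(0,\gamma^h]$, the supremum $M$ equals $\tfrac{B_I}{1-\gamma^h}$ when $B/B_I\le\tfrac{1}{1-\gamma^h}$ and $B_I+\gamma^h B$ when $B/B_I\ge\tfrac{1}{1-\gamma^h}$, which is the source of the case split in the hypothesis.

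Next I would induct along the greedy trajectory $s=\sigma_0,\sigma_1,\dots$, maintaining the invariant that $D(\sigma_j,\mc S_I)=d-j$, that $\sigma_0,\dots,\sigma_j$ contain no intermediate state, and that $\sigma_j$ still cannot directly reach $\mc S_T$ (the last point propagates automatically, since a non-intermediate neighbor that reached $\mc S_T$ without crossing $\mc S_I$ would let $\sigma_j$ do so too). The inductive step is the crux: at $\sigma_j$ with $d'=d-j\ge1$, I would compare a neighbor $\sigma^+$ with $D(\sigma^+,\mc S_I)=d'-1$ against an arbitrary neighbor $\sigma^-$ with $D(\sigma^-,\mc S_I)\ge d'$ --- at least one neighbor of the former type exists because it lies on a shortest path to a closest intermediate state, and every such neighbor leads (within $d'-1$ steps, hence without an intervening intermediate state) to an intermediate state that is at distance exactly $d'$ from $\sigma_j$, i.e.\ a closest one. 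Feeding the lower bound on $V_k(\sigma^+)$ and the $V^\star$-upper bound on $V_k(\sigma^-)$ into the greedy comparison of $Q_k(\sigma_j,\cdot)$ and cancelling a common power of $\gamma$, the whole step collapses to the single scalar inequality $B_I>\gamma M$; when $d'=1$ the extra reward $B_I$ collected on the spot only helps. Evaluating $M$ in the two regimes turns $B_I>\gamma M$ into $\gamma+\gamma^h<1$ (when $B/B_I\le\tfrac1{1-\gamma^h}$) and into $\tfrac{B}{B_I}<\tfrac{1-\gamma}{\gamma^{1+h}}$ (when $B/B_I\ge\tfrac1{1-\gamma^h}$), the restrictions on $B/B_I$ in the statement. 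Hence the greedy step lands on a state at distance $d'-1$ from $\mc S_I$, the invariant is preserved, and after exactly $d$ steps the agent sits on a closest directly reachable intermediate state $s_{i_{j^\star}}$; only $k\ge d$ is ever needed, because every use of the lower bound occurs at a state whose distance to $\mc S_I$ is at most $d-1<k$.

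The step I expect to be hardest is the first one --- making the $V_k$ estimate simultaneously tight enough and uniform in $k$. The lower bound is immediate, but the upper bound has to (i) invoke Assumption~\ref{assump:min_dist_IS} to control the spacing of \emph{all} intermediate states along \emph{every} path out of $s$, (ii) optimize cleanly over the unknown chain length $\ell$ (here the affine-in-$\gamma^{\ell h}$ observation is what makes the supremum explicit and produces the dichotomy), and (iii) be stated against $V^\star$ so that the conclusion holds for all $k\ge d$ and not merely for $k$ near $d$. The remaining pieces --- that the greedy path never revisits an intermediate state (immediate from Assumption~\ref{assump:OneWayIntermediateStates}), that it stays inside the ``cannot directly reach $\mc S_T$'' region, and the $d'=1$ boundary case --- are routine bookkeeping once the scalar inequality $B_I>\gamma M$ is in hand.
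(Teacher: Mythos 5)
Your proposal is correct and follows essentially the same route as the paper's proof: the same lower bound $\gamma^{d-1}B_I$, the same upper bound obtained by summing $h$-spaced discounted rewards along an arbitrary policy under Assumption 3.5, and the same dichotomy (your affine-in-$\gamma^{\ell h}$ optimization over the chain length is exactly the paper's monotone telescoping of $f(u^\pi)$, and your scalar inequality $B_I>\gamma M$ reproduces the two stated ranges of $B/B_I$). The only point worth tightening is the boundary case $\gamma+\gamma^h=1$, where $B_I>\gamma M$ degenerates to an equality; the strict comparison is recovered because the supremum over $\ell$ is never attained (any path visits at most $N$ intermediate states), which is also how the paper keeps its inequality strict.
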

Here we highlight some key insights of the proof and leave the details in Appendix \ref{proof:find_closest_IS}. We first show that, when $k$ is not large ($k<D(s,\mc S_I)+h$), $V_k(s)$ will only be affected by the closest $\mc S_I$. Hence the value function $V_k(s)$ has the same property as the sparse reward setting provided in Proposition \ref{prop:comp_sparse_reward}, namely, $V_k(s)$ increases as $D(s,\mc S_I)$ decreases. Then when $k\geq D(s,\mc S_I)+h$, we show that the monotonicity between $V_k(s)$ and $D(s,\mc S_I)$ still holds using the relative magnitude $B/B_I$. More specifically, we use the bound of $B/B_I$ to show the reward from pursuing the closest $\mc S_I$ will be larger than the overall rewards of any other trajectories that {\em do not} pursuit the closest $\mc S_I$. With the monotonicity of $V_k(s)$ and $D(s,\mc S_I)$, we conclude a greedy agent finds the closest intermediate state.

A direct implication of Theorem \ref{thm:find_closest_IS} is that, when the magnitude of the terminal reward $B$ is not significantly larger than the intermediate reward $B_I$, $\forall s\in \mc S\backslash \mc S_{T}$, after $k\geq \underset{s\in \mc S\backslash \mc S_{T}}{\max}\underset{j\in \mc I_d(s)}{\min}D(s,s_{i_j})$ iteration of SVI updates, a greedy agent eventually finds to an intermediate state from which $\mc S_{T}$ is directly reachable. When $\mc S_{T}$ is directly reachable, the next theorem illustrates the condition that enables a greedy agent follow to pursue the shortest path to $\mc S_{T}$.

\begin{theorem}[Finding the Shortest Path to $\mc S_T$]
\label{thm:find_closest_TerminalState}
Let $\mc M = (\mc S,\mc A, P, r, \gamma)$ be a deterministic MDP with initial state $s_0$, intermediate states  $\mc S_{I} = \{s_{i_1},s_{i_2},\dots,s_{i_N}\}$, and terminal states $\mc S_{T}$. Suppose $\mc M$ satisfies Assumption \ref{assump:OneWayIntermediateStates}, \ref{assump:diff_settings_IS} (b), and \ref{assump:min_dist_IS}, the reward function $r(\cdot)$ follows the intermediate reward setting (Table \ref{tab:reward_setting}), and the value function, Q-function are zero-initialized \eqref{eq:zero_init}. $\forall s \in \mc S\backslash\mc S_{T}$, let 
\begin{equation}
     d = D(s,\mc S_{T}) \; \text{ and }\; d_I = D(s,\mc S_{I}) \doteq \min_{j\in \mc I_d(s)} D(s,s_{i_j}).
\end{equation}
If $\mc S_{T}$ is directly reachable from $s$, and $d$ and $d_I$ satisfy
\begin{equation}
    d<\begin{cases}
        d_I+\log_{\frac{1}{\gamma}}\brac{(1-\gamma^h)\frac{B}{B_I}},&\text{ if }\;\frac{B}{B_I}<\frac{1}{1-\gamma^h},\\
        d_I+\log_{\frac{1}{\gamma}}\paren{\frac{B}{B_I+\gamma^hB}},&\text{ if }\;\frac{B}{B_I}\geq \frac{1}{1-\gamma^h},
    \end{cases}
    \;\text{ and }\; d<d_I+h-1,
\end{equation}
where $h$ is the minimum distance between two intermediate states (Assumption \ref{assump:min_dist_IS}), then after $k\geq d$ synchronous value iteration updates \eqref{eq:SyncValueIter}, an agent following the greedy policy will pursue the shortest path to $\mc S_{T}$. 
\end{theorem}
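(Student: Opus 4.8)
The plan is to establish, by induction on $D(s',\mc S_T)$, the following single-step fact: \emph{if $\mc S_T$ is directly reachable from $s'$ and the theorem's two conditions on $(d,d_I)$ hold with $(d,d_I)$ replaced by $(D(s',\mc S_T),D(s',\mc S_I))$, then a greedy action at $s'$ (with respect to the fixed $Q_k$, $k\geq d$) leads to a state whose distance to $\mc S_T$ is exactly one smaller.} Chaining this from $s$ produces a greedy trajectory of length exactly $d$ reaching $\mc S_T$, i.e.\ a shortest path. Two observations make the chaining legitimate. First, $d<d_I+h-1$ forces every shortest path from $s$ to $\mc S_T$ to avoid all intermediate states: otherwise, letting $s_{i_j}$ be the first intermediate state on such a path --- hence directly reachable from $s$ --- Assumption \ref{assump:min_dist_IS} gives $d\geq D(s,s_{i_j})+D(s_{i_j},\mc S_T)\geq d_I+h$, a contradiction. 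Second, the hypotheses self-propagate: a greedy step from $s'$ to a state $\tilde s$ with $D(\tilde s,\mc S_T)=D(s',\mc S_T)-1$ keeps $\mc S_T$ directly reachable and, because $D(\tilde s,\mc S_I)\geq D(s',\mc S_I)-1$ and the right-hand sides of the conditions are increasing in $D(\cdot,\mc S_I)$, both conditions still hold at $\tilde s$. So it suffices to prove the single-step fact at $s$.

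For that, I would sandwich the greedy comparison at $s$. On one side, let $a^+$ be the action whose successor $\sigma_1$ lies on a shortest path to $\mc S_T$, so $D(\sigma_1,\mc S_T)=d-1$; zero-initialized synchronous value iteration propagates the terminal reward back along this path, and adding intermediate rewards only increases $V_k$, so by the sparse-reward bound of Lemma \ref{lemma:Vk_bounds_induction_SR} we get $V_k(\sigma_1)\geq\gamma^{d-2}B$ for all $k\geq d-1$, hence
\[
Q_k(s,a^+)=r(s,a^+,\sigma_1)+\gamma V_k(\sigma_1)\geq\gamma^{d-1}B
\]
(the case $d=1$ being immediate).

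On the other side, fix any action $a$ whose successor $s_a$ does not reduce the distance to $\mc S_T$, so $D(s_a,\mc S_T)\geq d$, and bound $Q_k(s,a)=r(s,a,s_a)+\gamma V_k(s_a)\leq r(s,a,s_a)+\gamma V^\star(s_a)$ with $V^\star\doteq\lim_k V_k\geq V_k$. Any trajectory from $s_a$ either reaches $\mc S_T$ without touching $\mc S_I$ --- contributing at most $\gamma^{D(s_a,\mc S_T)-1}B\leq\gamma^{d-1}B$ to $V^\star(s_a)$ --- or visits $p\geq1$ intermediate states which, by Assumption \ref{assump:min_dist_IS}, lie at distances $\geq m,\,m+h,\,\dots,\,m+(p-1)h$ from $s_a$ (writing $m\doteq D(s_a,\mc S_I)\geq d_I-1$), followed by $\mc S_T$ at distance $\geq m+ph$; such a trajectory contributes at most $\gamma^{m-1}\big(B_I\,\tfrac{1-\gamma^{ph}}{1-\gamma^h}+\gamma^{ph}B\big)$. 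The bracketed quantity is monotone in $p$: it increases to the supremum $\tfrac{B_I}{1-\gamma^h}$ when $B_I>(1-\gamma^h)B$ (i.e.\ $\tfrac{B}{B_I}<\tfrac{1}{1-\gamma^h}$), and is non-increasing with maximum $B_I+\gamma^hB$ at $p=1$ otherwise. Plugging the resulting bound on $V^\star(s_a)$ into $Q_k(s,a)\leq\gamma V^\star(s_a)$ (noting $r(s,a,s_a)=0$ once $s_a\notin\mc S_I\cup\mc S_T$) and requiring it to be strictly below $\gamma^{d-1}B$ reduces, after using $m\geq d_I-1$, to exactly the two displayed conditions of the theorem; the remaining case $s_a\in\mc S_I$, which forces $d_I=1$, follows the same lines once one notes that each displayed condition then already yields $B_I<\gamma^{d-1}B$.

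The main obstacle is this last step --- making the combinatorial estimate on $V^\star(s_a)$ tight enough to reproduce the theorem's conditions exactly (the two monotonicity regimes in $p$, and the index bookkeeping via $m\geq d_I-1$), together with the degenerate cases: $d=1$, the case $s_a\in\mc S_I$, and ties in the greedy $\arg\max$, which the \emph{strict} inequalities in the hypotheses are precisely what rule out. If one prefers, the ad hoc bound on $V^\star$ can be replaced by the explicit value-function estimates already derived in the proof of Theorem \ref{thm:find_closest_IS}.
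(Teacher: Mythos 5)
Your proposal is correct and follows essentially the same route as the paper's proof: reduce to showing each greedy step strictly decreases $D(\cdot,\mc S_T)$, lower-bound $Q_k(s,a^+)$ by $\gamma^{d-1}B$, upper-bound the competing Q-values by the geometric trajectory sum $\gamma^{d_I-1}\bigl[\sum_{l}\gamma^{lh}\bigr]B_I+\gamma^{d_I+ph-1}B$, whose monotonicity in $p$ in the two regimes ($B/B_I$ below or above $1/(1-\gamma^h)$) yields exactly the displayed conditions, and then recurse via self-propagation of the hypotheses. The only notable differences are streamlinings: by bounding $V_k\leq V^\star$ you apply the trajectory bound uniformly for all $k\geq d$, whereas the paper treats the regime $d\leq k<d_I+h-1$ separately via the explicit value-function formula of Lemma \ref{lemma:Vk_Bound_directly_reachable_k<dI+h}; and your explicit observation that $d<d_I+h-1$ forces every shortest path to avoid $\mc S_I$ (which also justifies the self-propagation step) is a useful clarification that the paper leaves implicit.
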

The proof of Theorem \ref{thm:find_closest_TerminalState} shares the same idea as the proof of Theorem \ref{thm:find_closest_IS}. We first provide an explicit expression of the value function $V_k(s)$ for $k\leq d_I+h$, in Lemma \ref{lemma:Vk_Bound_directly_reachable_k<dI+h}. In addition, we demonstrate that $V_k(s)$ is monotonic decreasing as $D(s,\mc S_T)$ increases when $k\leq d_I+h$, since $V_k(s)$ is only affected by the closest $\mc S_I$ and $\mc S_T$ when $k\leq d_I+h$. Then we show the value function is monotonically decreasing as the distance $D(s,\mc S_T)$ increases for all $k\geq d$, when the conditions in Theorem \ref{thm:find_closest_TerminalState} are satisfied. Similar to Theorem \ref{thm:find_closest_IS}, we use the relative magnitude $B/B_I$ to show the the monotonicity of $V_k(s)$ still holds when $k>d_I+h$. More precisely, we use the bound of $B/B_I$ to ensure the rewards from pursuing $\mc S_T$ will be larger than all future rewards from any other trajectories that {\em do not} pursuit $\mc S_T$. Finally, we use the monotonicity of $V_k(s)$ to show that greedy policies find the shortest path to $\mc S_T$. The proof of Theorem \ref{thm:find_closest_TerminalState} is provided in Appendix \ref{proof:find_closest_TerminalState}.

Intuitively, Theorem \ref{thm:find_closest_TerminalState} indicates that, if a state $s$ is ``close enough'' to the terminal states $\mc S_T$, then the greedy policy will lead the agent from $s$ to $\mc S_T$ following the shortest path, given sufficient number of value iteration updates. Note that Theorem \ref{thm:find_closest_TerminalState} implicitly uses the fact that, in the OWMP setting, there could exist some states from which both $\mc S_I$ and $\mc S_T$ are directly reachable (see Figure \ref{fig:setting2}). 

All in all, when $B$ and $B_I$ satisfy the conditions in Theorem \ref{thm:find_closest_IS} and when 
\begin{equation*}
    k\geq \underset{s\in \mc S\backslash\mc S_T}{\max}\underset{j\in \mc I_d(s)}{\min}D(s,s_{i_j}),    
\end{equation*}
a greedy agent will recursively find the closest directly reachable intermediate state, until the terminal state $\mc S_T$ is directly reachable. When $\mc S_T$ is directly reachable and $D(s,\mc S_T)$, $D(s,\mc S_I)$ satisfy the conditions provided in Theorem \ref{thm:find_closest_TerminalState}, the agent following the greedy policy will pursue the shortest path to $\mc S_T$. We shall clarify that our theoretical results (Theorem \ref{thm:find_closest_IS} and \ref{thm:find_closest_TerminalState}) on the OWMP setting only provides {\em sufficient conditions} of obtaining successful Q-functions, but successful Q-functions can actually be obtained under broader conditions, as will be presented in Section \ref{sec:Experiment}.

\subsection{Non One-Way Intermediate Rewards}
\label{subsec:NonOneWayIntermedaiteReward}

The theoretical results in the non one-way intermediate (NOW) reward setting is out of the scope of this work, since we do not have additional assumptions on the intermediate states. Still, we provide an example where SVI {\em never} finds a successful Q-function in Example \ref{example:SyncVIFail} to demonstrate the effect of non-ideal reward design. 

\begin{example}[Synchronous Value Iteration Fails] 
\label{example:SyncVIFail}
Suppose we are given an MDP environment as shown in Figure \ref{fig:NOWIRGrid}, with $B=10$, $B_I=100$, and $\gamma = 0.9$. The value function converges when $k\geq 2$. However, $\forall k \geq 1$, a greedy policy will stay in $s_i$ instead of moving to $\mc S_T$. 

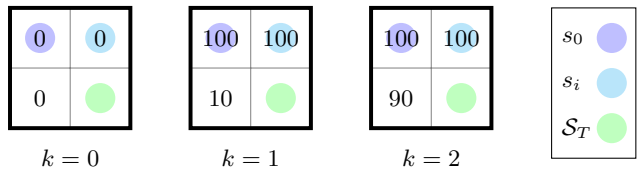
\begin{figure}[ht]
    \centering
    \begin{tikzpicture}[scale=0.8]
\centering
\draw[step=1cm, gray, very thin] (0, -2) grid (2, 0);
\fill[blue!25!white] (0.5, -0.5) circle (0.25cm);
\fill[green!25!white] (1.5, -1.5) circle (0.25cm);
\fill[cyan!25!white] (1.5, -0.5) circle (0.25cm);
\node [font=\footnotesize] (a11) at (0.5,-0.5) {$0$};
\node [font=\footnotesize] (a12) at (1.5,-0.5) {$0$};
\node [font=\footnotesize] (a21) at (0.5,-1.5) {$0$};
\node [font=\footnotesize] (a22) at (1.5,-1.5) {};
\node [font=\footnotesize] (a) at (1.0,-2.5) {$k=0$};
\draw[black, line width=1.5] (0.0, 0.0) -- (2.0, 0.0) -- (2.0, -2.0) -- (0.0, -2.0) -- cycle;

\draw[step=1cm, gray, very thin] (3, -2) grid (5, 0);
\fill[blue!25!white] (3.5, -0.5) circle (0.25cm);
\fill[green!25!white] (4.5, -1.5) circle (0.25cm);
\fill[cyan!25!white] (4.5, -0.5) circle (0.25cm);
\node [font=\footnotesize] (b11) at (3.5,-0.5) {$100$};
\node [font=\footnotesize] (b12) at (4.5,-0.5) {$100$};
\node [font=\footnotesize] (b21) at (3.5,-1.5) {$10$};
\node [font=\footnotesize] (b22) at (4.5,-1.5) {};
\node [font=\footnotesize] (b) at (4.0,-2.5) {$k=1$};
\draw[black, line width=1.5] (3.0, 0.0) -- (5.0, 0.0) -- (5.0, -2.0) -- (3.0, -2.0) -- cycle;

\draw[step=1cm, gray, very thin] (6, -2) grid (8, 0);
\fill[blue!25!white] (6.5, -0.5) circle (0.25cm);
\fill[green!25!white] (7.5, -1.5) circle (0.25cm);
\fill[cyan!25!white] (7.5, -0.5) circle (0.25cm);
\node [font=\footnotesize] (c11) at (6.5,-0.5) {$100$};
\node [font=\footnotesize] (c12) at (7.5,-0.5) {$100$};
\node [font=\footnotesize] (c21) at (6.5,-1.5) {$90$};
\node [font=\footnotesize] (c22) at (7.5,-1.5) {};
\node [font=\footnotesize] (c) at (7.0,-2.5) {$k=2$};
\draw[black, line width=1.5] (6.0, 0.0) -- (8.0, 0.0) -- (8.0, -2.0) -- (6.0, -2.0) -- cycle;

\draw[black] (9,0) -- (9,-2.5) -- (10.4,-2.5) -- (10.4,0) -- cycle;

\fill[blue!25!white] (10, -0.5) circle (0.25cm);
\fill[cyan!25!white] (10, -1.25) circle (0.25cm);
\fill[green!25!white] (10, -2.0) circle (0.25cm);
\node [right,font=\footnotesize] (A) at (9.0,-0.5) {$s_0$};
\node [right,font=\footnotesize] (B) at (9.0,-1.25) {$s_{i}$};
\node [right,font=\footnotesize] (C) at (9.0,-2.0) {$\mc S_{T}$};
\end{tikzpicture}
    \caption{An example of the evolution of a zero-initialized value function during SVI in the non one-way intermediate reward setting, with $B=10$, $B_I=100$, and $\gamma = 0.9$. The agent may move left, right, up, or down. The cyan circle is the intermediate state that can be {\em repeatedly} visited from $s_0$ and from itself (by taking actions ``right'' and ``up'' and hitting a wall). The remaining settings are the same as Figure \ref{fig:SparseRewardGrid}.}
    \label{fig:NOWIRGrid}
\end{figure}
\end{example}

As shown in Example \ref{example:SyncVIFail}, if we set the intermediate rewards $B_I$ on some {\em non one-way} intermediate states, and when the intermediate rewards are significantly larger than the terminal rewards, a greedy policy will lead the agent to visit the intermediate states repeatedly rather than pursuing the terminal states. A direct implication from Example \ref{example:SyncVIFail} is that, to prevent an agent from staying at non one-way intermediate states, the intermediate rewards on these non one-way intermediate states should be relatively small.

\section{Connections to Finding the Shortest Path}
\label{sec:FindShortestPath}

We discuss the connections between the different reward settings (sparse reward, OWSP, OWMP, NOW) and their connections to finding the shortest path in this section.

\paragraph{The Sparse Reward Setting}
The sparse reward setting {\em requires no prior knowledge nor assumptions of the subgoals of the tasks}, because it contains no intermediate states. Hence, it can further be applied to general goal-reaching tasks. In addition, as illustrated in Proposition \ref{prop:comp_sparse_reward}, the successful policy obtained by SVI pursues the shortest path but at the cost of high computational complexity ($D(s_0,\mc S_T)$ total number of SVI). The computational complexity in the sparse reward setting motivates the use of intermediate rewards: practical tasks with well-designed intermediate rewards should be more computationally efficient than the same task with only sparse terminal rewards. 

\paragraph{The OWSP Intermediate Reward Setting} 
As mentioned in Section \ref{subsec:InterpretInterState}, the OWSP intermediate reward setting only applies to limited number of tasks seen in practice. However, when the subgoals of the goal-reaching tasks indeed {\em has the one-way single-path structure}, Proposition \ref{prop:com_comp_inter_reward} shows that the computational complexity of obtaining a successful Q-function can be significantly reduced, and a greedy policy will find the shortest path from $s_0$ to $\mc S_T$.

\paragraph{The OWMP Intermediate Reward Setting} Comparing to the OWSP setting, the OWMP intermediate setting has much broader practical applications. Though the OWMP intermediate setting is more prevalent, the downside is that generally a greedy policy with a successful Q-function will pursue the closest intermediate states instead of the shortest path to $\mc S_T$. Figure \ref{fig:CompComp_VS_ShortestPath} illustrates the trade-off between computational complexity and achieving the shortest path is provided: A greedy policy reaches $\mc S_T$ when $k$ (the number of SVI) equals 1 or 2, by recursively pursuing the closest intermediate state (path 1). When $k>2$ and the terminal rewards $B$ and intermediate rewards $B_I$ satisfy the conditions stated in Theorem \ref{thm:find_closest_TerminalState}, the greedy policy will find $\mc S_T$ via the shortest path (path 2).
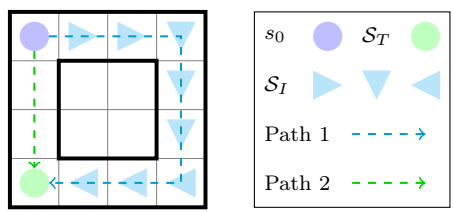
\begin{figure}[ht]
    \centering
    \begin{tikzpicture}[scale=0.65]
\centering
\draw[step=1cm, gray, very thin] (0, -4) grid (4, 0);
\fill[blue!25!white] (0.5, -0.5) circle (0.3cm);
\fill[green!25!white] (0.5, -3.5) circle (0.3cm);
\path[fill=cyan!25!white] (1.2,-0.2) -- (1.8,-0.5) -- (1.2,-0.8) -- cycle;
\path[fill=cyan!25!white] (2.2,-0.2) -- (2.8,-0.5) -- (2.2,-0.8) -- cycle;
\path[fill=cyan!25!white] (3.2,-0.2) -- (3.8,-0.2) -- (3.5,-0.8) -- cycle;
\path[fill=cyan!25!white] (3.2,-1.2) -- (3.8,-1.2) -- (3.5,-1.8) -- cycle;
\path[fill=cyan!25!white] (3.2,-2.2) -- (3.8,-2.2) -- (3.5,-2.8) -- cycle;
\path[fill=cyan!25!white] (3.2,-3.5) -- (3.8,-3.2) -- (3.8,-3.8) -- cycle;
\path[fill=cyan!25!white] (2.2,-3.5) -- (2.8,-3.2) -- (2.8,-3.8) -- cycle;
\path[fill=cyan!25!white] (1.2,-3.5) -- (1.8,-3.2) -- (1.8,-3.8) -- cycle;

\draw[black, line width=1.5] (0.0, 0.0) -- (4.0, 0.0) -- (4.0, -4.0) -- (0.0, -4.0) -- cycle;
\draw[black, line width=1.5] (1.0, -1.0) -- (3.0, -1.0) -- (3.0, -3.0) -- (1.0, -3.0) -- cycle;

\draw[cyan!80!black, line width=0.75, dashed, ->] (0.8, -0.5) -- (3.5, -0.5) -- (3.5, -3.5) -- (0.8, -3.5);
\draw[green!80!black, line width=0.75, dashed, ->] (0.5, -0.8) -- (0.5, -3.2);

\draw[black] (5,0) -- (5,-4) -- (9,-4) -- (9,0) -- cycle;

\fill[blue!25!white] (6.5, -0.5) circle (0.3cm);
\fill[green!25!white] (8.5, -0.5) circle (0.3cm);
\path[fill=cyan!25!white] (6.2,-1.2) -- (6.2,-1.8) -- (6.8,-1.5) -- cycle;
\path[fill=cyan!25!white] (7.2,-1.2) -- (7.8,-1.2) -- (7.5,-1.8) -- cycle;
\path[fill=cyan!25!white] (8.8,-1.2) -- (8.8,-1.8) -- (8.2,-1.5) -- cycle;
\draw[cyan!80!black, line width=0.75, dashed, ->] (7.0, -2.5) -- (8.5, -2.5);
\draw[green!80!black, line width=0.75, dashed, ->] (7.0, -3.5) -- (8.5, -3.5);
\node [right,font=\scriptsize] (A) at (5,-0.5) {$s_0$};
\node [right,font=\scriptsize] (B) at (7,-0.5) {$\mc S_{T}$};
\node [right,font=\scriptsize] (C) at (5,-1.5) {$\mc S_I$};
\node [right,font=\scriptsize] (D) at (5,-2.5) {Path 1};
\node [right,font=\scriptsize] (E) at (5,-3.5) {Path 2};
\end{tikzpicture}
    \caption{The trade-off between the minimum computational complexity and the pursuit of the shortest path in the OWMP intermediate reward setting. The cyan isosceles triangles are the one-way intermediate states and the orientations of each    
    isosceles triangle represents the direction of each intermediate state -- the agent at a given intermediate state can only visit the next state pointed by the orientation of the apex. The remaining settings are the same as Figure \ref{fig:SparseRewardGrid}.}
    \label{fig:CompComp_VS_ShortestPath}
\end{figure}

\section{Experiment}
\label{sec:Experiment}
We experimentally verify in several OpenAI Gym MiniGrid environments that the agent is able to learn a successful trajectory more quickly in OWSP and OWMP intermediate reward settings than the sparse reward setting. We verify our findings on $\eps$-greedy asynchronous Q-learning algorithm, and three popular deep RL algorithms: DQN \shortcite{mnih2015human}, A2C \shortcite{mnih2016asynchronous}, and PPO \shortcite{schulman2017proximal}. For all experiments, the agent observes the whole environment. For asynchronous Q-learning, each state is represented as a string encoding of the grid. For deep RL algorithms, each state is an image of the grid. The detailed parameters and additional related experiments are provide in Appendix \ref{subsec:Hyperparameters}. See \hyperlink{https://github.com/kebaek/minigrid}{https://github.com/kebaek/minigrid} for the code to run all presented experiments.

The main purpose of our experiments is to justify the following indications from our theoretical results: 1) adding intermediate rewards on one-way intermediate states reduces computational complexity; 2) there is a trade-off between computational complexity and the pursuit of the shortest path in the OWMP setting. Although we have discussed many examples of RL applications in Table \ref{tab:app_example}, we choose the OpenAI Gym MiniGrid environments \shortcite{Brockman2016OpenAI,gym_minigrid} rather than the Games (Pacman, Montezuma, Dota2, StarCraft II, and Honor of Kings) discussed in Table \ref{tab:app_example} due to limited computational resources and the intrinsic randomness in these Game. 

\subsection{Environmental Setting}
\label{subsec:envsetting}

\paragraph{Single-Path Maze} The 7x7 grid maze (Figures \ref{fig:sparsemaze} and \ref{fig:intermaze}) consists of a single path that the agent must navigate through to reach the terminal state. This environment will be used to study the computational benefit of having well-designed intermediate rewards for the OWSP setting. See Figure \ref{fig:minigrid_env} for possible actions and reward design. Note that this maze environment is one-way since each intermediate reward may only be obtained once, i.e., the blue circle disappears from the environment once the agent reaches the corresponding square.

\paragraph{3-Door/4-Door} These 9x9 grid mazes (Figures \ref{fig:3door} and \ref{fig:4door}) consist of 3 different paths to to the terminal state each sealed by a series of locked doors. Each door has a corresponding key of the same color. The goal of the agent is to reach the terminal state by picking up the corresponding keys to unlock all the doors along at least one of the three paths. Note that this environment is one-way since a door cannot lock once unlocked and a key cannot be dropped once picked up. These environments will be used to study the computational benefit of having well-designed intermediate rewards and the trade-off between computational complexity and shortest path for the OWMP setting (Section \ref{subsec:OWMP_intermediate_states}).

\begin{figure}[ht]
    \centering
    \begin{subfigure}[t]{0.21\textwidth}
        \centering
        \includegraphics[width=\textwidth,trim={0cm 0cm 0cm 0cm},clip]{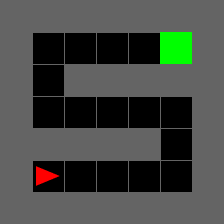}
        \caption{\footnotesize Sparse reward}
        \label{fig:sparsemaze}
    \end{subfigure}
    ~
    \begin{subfigure}[t]{0.21\textwidth}
        \centering
        \includegraphics[width=\textwidth,trim={0cm 0cm 0cm 0cm},clip]{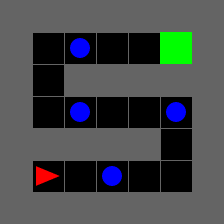}
        \caption{\footnotesize Intermediate reward}
        \label{fig:intermaze}
    \end{subfigure}
    ~
    \begin{subfigure}[t]{0.21\textwidth}
        \centering
        \includegraphics[width=\textwidth,trim={0cm 0cm 0cm 0cm},clip]{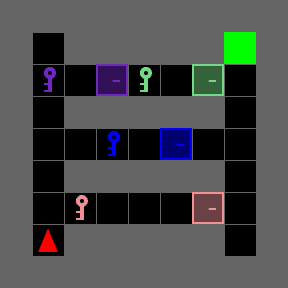}
        \caption{\footnotesize 3-Door env.}
        \label{fig:3door}
    \end{subfigure}
    ~
    \begin{subfigure}[t]{0.21\textwidth}
        \centering
        \includegraphics[width=\textwidth,trim={0cm 0cm 0cm 0cm},clip]{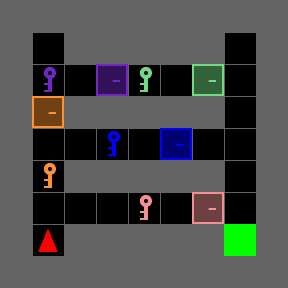}
        \caption{\footnotesize 4-Door env.}
        \label{fig:4door}
    \end{subfigure}
    \caption{\footnotesize The red triangle is the agent and the green square is the terminal state. (a)-(b): The Single-Path Maze environment. Agent takes actions from \{go forward, turn $90^\circ$, turn $-90^\circ$\}. For the sparse reward setting (a), the agent receives a terminal reward of +10. For the intermediate reward setting (b), the agent also receives +1 for arriving at any square with a blue circle. (c)-(d): The Door-Key environments. Agent takes actions from \{go forward, turn $90^\circ$, turn $-90^\circ$, pick up key, open door\}. For the sparse reward setting of (c) and (d), the terminal reward is +10. For the intermediate reward setting of (c) and (d), the agent also receives +2 for picking up a key or opening a door. All rewards in environments (a)-(d) can only be obtained {\em once} per episode.}
    \label{fig:minigrid_env}
\end{figure}

\subsection{Results}
On the Single-Path Maze environment and the 3-Door environment, we compare the number of training episodes required for the agent to find a successful policy in the sparse reward setting and the intermediate reward setting (See Figure \ref{fig:sparsemaze} and \ref{fig:3door} for more details about reward design).
For 0.8 $\eps$-greedy Q-learning, we train 100 independent models and evaluate each model once, for a total of 100 trials. For the deep RL algorithms, we train 10 independent models, and evaluate each model 10 times with different seeds, for a total of 100 trials. Win rate is computed as the number of trials that reach the terminal state out of 100.

As expected, we observe that it takes $\eps$-greedy Q-learning 36 episodes to reach a win rate of $100\%$ whereas in the sparse reward setting, $\eps$-greedy Q-learning is only able to reach a win rate of $10\%$ for the same number of episodes (See Table \ref{tab:minigrid1}). Similarly for the 3-Door setting, if the agent is also rewarded for picking up keys and opening doors, significantly less training episodes are required to obtain a win rate of $100\%$ (See Table \ref{tab:minigrid1}). We observe a similar phenomena on popular deep RL algorithms: DQN, A2C, and PPO (See Figure \ref{fig:deep_rl}). Our experimental findings on the computational benefits of using intermediate rewards corroborate our theoretical claim in Section \ref{sec:MainResult}.

\begin{table}[ht]
    \centering
    \small
    \begin{tabular}{rcc|rcc}
        \toprule 
        \multicolumn{3}{c|}{Maze (Figure \ref{fig:sparsemaze} and \ref{fig:intermaze})} & \multicolumn{3}{c}{3-Door (Figure \ref{fig:3door})}\\
        \toprule
        \# Episodes & Sparse & Intermediate & \# Episodes & Sparse & Intermediate\\
        \midrule 18	& 6/100 &59/100 & 40 &	3/100 & 90/100 \\
        \midrule 24	& 7/100 &82/100 & 80 & 10/100 & 100/100 \\
        \midrule 30	& 6/100 &95/100 & 120 &	43/100 & 100/100\\
        \midrule 36	& 10/100 &  100/100 & 160 & 74/100& 100/100\\
        \bottomrule
    \end{tabular}
    \caption{Asynchronous Q-learning: Computational Complexity. We report the number of wins out of 100 trials (100 training sessions evaluated once each) after different training episodes for both the sparse reward and intermediate reward settings.}
    \label{tab:minigrid1}
\end{table}

\begin{table}[ht]
    \centering
    \small
    \begin{tabular}{rcccccc}
        \toprule
        \multicolumn{1}{c|}{} & \multicolumn{3}{c|}{Setting 1: +10} & \multicolumn{3}{c}{Setting 2: +1000}\\
        \midrule
        \# Episodes & Wins & Rewards & Steps & Wins & Rewards & Steps\\
        \toprule 50 & 64/100& 11.28 $\pm$ 2.08 & 95.52 $\pm$ 82.58  & 71/100 & 11.12 $\pm$ 2.30 & 84.39 $\pm$ 79.48\\
        \midrule 150 & 99/100 & 11.2 $\pm$ 2.03 & 24.71 $\pm$ 18.10 & 100/100 & 11.06 $\pm$ 1.99 & 22.53 $\pm$ 3.56 \\
        \midrule 350 & 100/100 & 11.1 $\pm$ 1.93 &  22.65 $\pm$ 3.40 & 100/100 & 10.14 $\pm$ 2.60 &  20.88 $\pm$ 4.70 \\
        \midrule 750 & 100/100 & 11.18 $\pm$ 1.80 & 22.85 $\pm$ 3.24 & 100/100 & 9.24 $\pm$ 2.85 & 19.39 $\pm$ 5.54 \\
        \midrule 1550 & 100/100 & 11.64 $\pm$ 1.42 & 23.36 $\pm$ 2.62  & 100/100 & 5.52 $\pm$ 1.10&  12.95 $\pm$ 1.64\\
        \midrule 3150 & 100/100 & 12.0 $\pm$ 0.0 & 24.0 $\pm$ 0.0 & 100/100 & 4.46 $\pm$ 0.84 &  12.23 $\pm$ 0.42 \\
        \bottomrule
    \end{tabular}
    \caption{4-Door (Figure \ref{fig:4door}) Trade-Off for Asynchronous Q-Learning. We report the number of wins out of 100 trials (100 training sessions evaluated once each), averaged total intermediate rewards, and averaged number of steps taken after different training episodes. The agent receives a terminal reward of +10 in Setting 1 +1000 in Setting 2. The agent also receives +2 for picking up a key and +2 for opening a door in both settings. The episode maxes out at 324 steps.}
    \label{tab:minigrid3}
\end{table}

\begin{figure}[ht]
    \centering
    \begin{subfigure}[b]{\textwidth}
         \centering
            \includegraphics[scale=0.12]{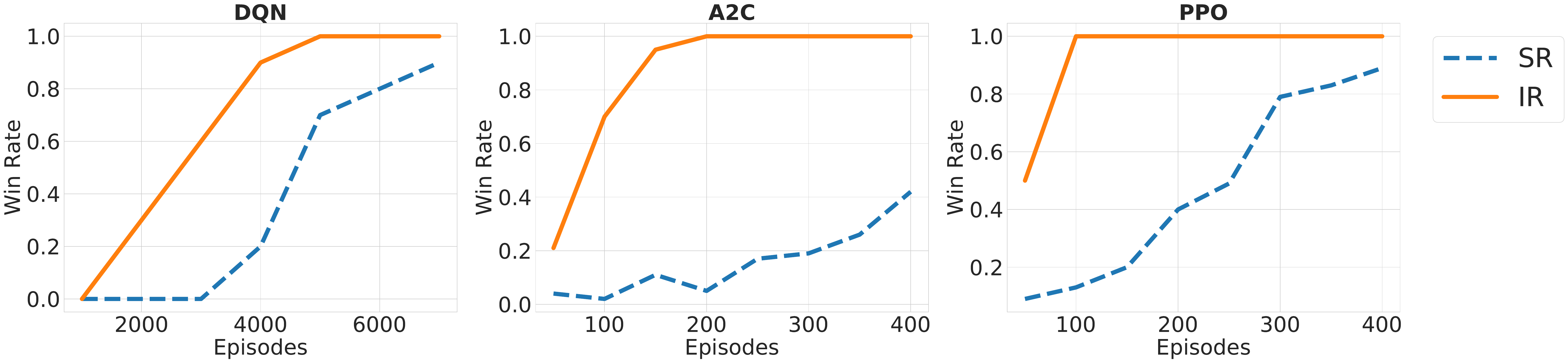}
         \caption{Maze}
         \label{fig:drl_gridmaze}
     \end{subfigure}
     \begin{subfigure}[b]{\textwidth}
         \centering
            \includegraphics[scale=0.12]{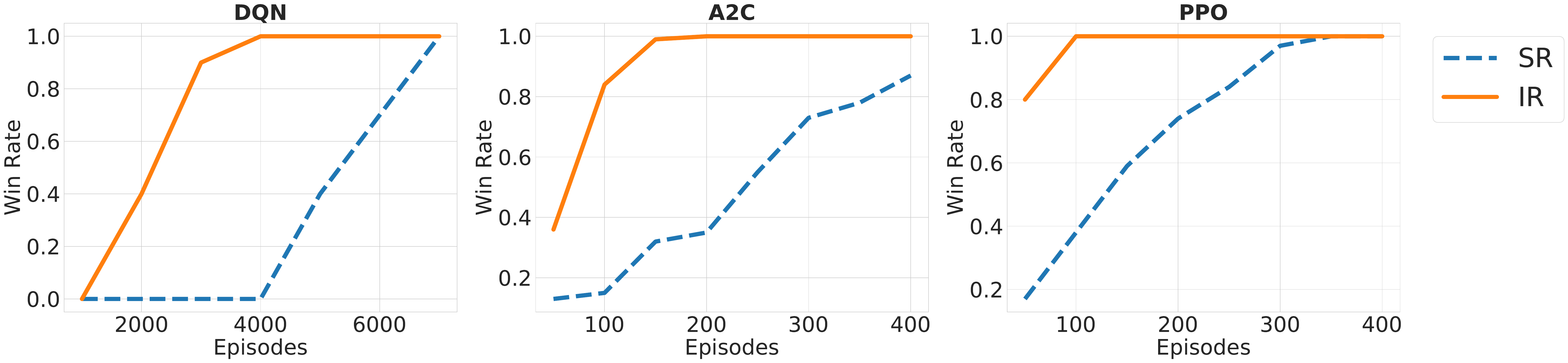}
         \caption{3-Door}
         \label{fig:drl_3door}
     \end{subfigure}
     
    \caption{Deep RL: Computational Complexity. We compare the average number of steps an agent takes to reach the terminal state in the Single-Path Maze and 3-Door environments between sparse versus intermediate reward settings. If the agent does not reach the terminal state, the episode maxes out at 324 steps for 3-Door and 196 steps for Single-Path Maze. The results are averaged over 100 trials (10 training sessions evaluated 10 times each)}
    \label{fig:deep_rl}
\end{figure}

On the 4-Door environment, we test two intermediate reward settings where the agent is either rewarded $10$ or $1000$ points for reaching the terminal state. Additionally, the agent is rewarded $2$ points for either picking up a key or opening a door. The shortest successful path to the terminal state (12 steps) has the least number of doors to unlock, thus less intermediate rewards, whereas the longest of the three paths to the terminal state contains at least 3 doors to unlock, thus more possible intermediate rewards to collect. We compare the average steps required during evaluation for the agent to reach the terminal state between these two settings. From Section \ref{subsec:OWMP_intermediate_states}, we expect that in this OWMP setting, the agent identifies the shortest successful path with Q-learning given a well-designed discount factor and a good ratio between intermediate/terminal rewards. For a discount factor of 0.9, we observe that if the terminal reward is $1000$, the path taken by an agent trained by $\eps$-greedy Q-learning converges to the shortest path of 12 steps. On the other hand, if the terminal reward is $10$, the path taken by the agent converges to that of 24 steps and on average collects more intermediate rewards (See Table \ref{tab:minigrid3}). Similarly, with a discount factor of 0.8 for DQN and 0.9 for A2C and PPO, we observe that the agent chooses the shortest successful trajectory when rewarded $1000$ points for reaching the terminal state during training, and a longer successful trajectory with more intermediate rewards when rewarded $10$ points. Our experimental findings on the trade-off between computational efficiency and the shortest path corroborate our theoretical claim in Section \ref{subsec:OWMP_intermediate_states}.

\begin{figure}[ht]
     \centering
        \includegraphics[scale=0.12]{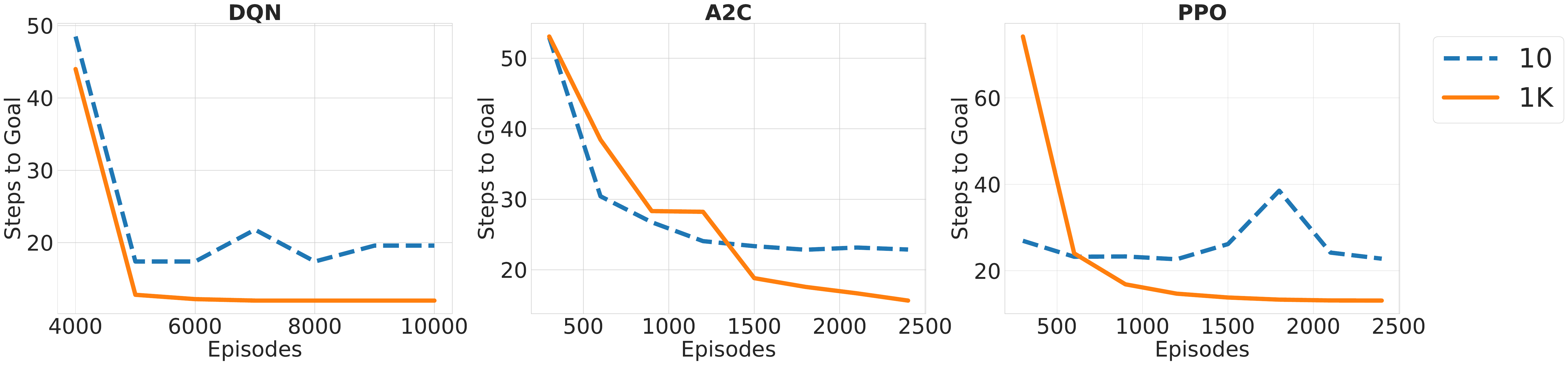}
     \caption{Deep RL: Trade-Off. We compare the average number of steps an agent takes to reach the terminal state in the 4-Door environment if the terminal reward is $10$ versus $1000$. The agent also receives an intermediate reward of $2$ when it picks up a key or opens a door. If the agent does not reach the terminal state, the episode maxes out at 324 steps. The results are averaged over 100 trials (10 training sessions evaluated 10 times each)}
     \label{fig:drl_4door}
\end{figure}

\section{Related Works and Discussion}
\label{sec:Discussion}
\subsection{Related Works}

\paragraph{Hierarchical RL} Hierarchical reinforcement learning and planning are two fundamental problems that have been studied for decades \shortcite{dayan1992feudal,kaelbling1993hierarchical,parr1998hierarchical,parr1998reinforcement,mcgovern1998hierarchical,sutton1998intra,precup1998multi,sutton1999between,dietterich2000hierarchical,mcgovern2001automatic} (See \shortciteR{barto2003recent} for an overview of other earlier works on hierarchical RL and Chapter 11.2 in the book \shortciteR{russell2016artificial} for hierarchical planning). After the success of deep learning \shortcite{lecun2015deep,goodfellow2016deep}, recent works have revisited hierarchical RL under the deep RL framework \shortcite{kulkarni2016hierarchical,vezhnevets2017feudal,andreas2017modular,le2018hierarchical,xu2020hierarchial} (see Chapter 11 in the review paper \shortciteR{li2018deep} for other hierarchical deep RL literature). Prior theoretical attempts on hierarchical RL formulated the problem as MDP decomposition problems \shortcite{dean1995decomposition,singh1998dynamically,meuleau1998solving,wen2020efficiency} or solving subtasks with ``bottleneck'' states \shortcite{sutton1999between,mcgovern2001automatic,stolle2002learning,simsek2008skill,solway2014optimal}. Our result is closely related to \shortciteA{wen2020efficiency} in the following two aspects: 1) Our one-way assumption (Assumption \ref{assump:OneWayIntermediateStates}) on the intermediate states is similar to the exit state of subMDPs (Definition 1 in \shortciteR{wen2020efficiency}), in the sense that our one-way intermediate states can be viewed as the exit states that separate the MDP into subMDPs; 2) Our quantitative results of different MDP settings suggest that partitioning the large MDP via intermediate states and intermediate rewards generally reduce the computational complexity, which corroborates the computational efficiency of subMDPs in \shortciteA{wen2020efficiency}. Though we come to similar theoretical conclusions that intermediate rewards lead to reduced computational complexity, in this work, we simplify down the assumptions and build a theoretical framework that is well connected to practice.

\paragraph{Reward Design} With the recent success of deep learning, RL has experienced a renaissance \shortcite{Krakovsky2016Reinforcement} and has demonstrated super-human performance in various applications \shortcite{mnih2013playing,mnih2015human,silver2016mastering,silver2017mastering,vinyals2017starcraft,vinyals2019grandmaster,berner2019dota,ye2020towards,fuchs2021super}. The reward design varies from task to task. For example, in Go \shortcite{silver2016mastering,silver2017mastering}, the agent only receives {\em terminal rewards} at the end of the game (+1 for winning and -1 for losing); for Starcraft II \shortcite{vinyals2019grandmaster}, the reward function is usually a mixture win-loss terminal rewards (+1 on a win, 0 on a draw, and -1 on a loss) and intermediate rewards based on human data; for multiplayer online battle arena (MOBA) games \shortcite{OpenAI_dota,berner2019dota,ye2020towards}, the reward functions are generally heavily handcrafted (see Table 6 of \shortciteR{berner2019dota} and Table 4 of \shortciteR{ye2020towards}) based on prior knowledge. Besides task-dependent reward design, other works also have studied the reward design for general RL or robotic tasks \shortcite{singh2009rewards,singh2010intrinsically,sorg2010reward,vezhnevets2017feudal,VanSeijen2017Hybrid,Raileanu2020RIDE:,ratner2018simplifying,ratner2018simplifying,he2021assisted} (See \shortciteR{guo2017deep,doroudi2019s} and references therein; also see other literature in Chapter 5 of the review paper \shortciteR{li2018deep}).

\paragraph{Goal-Conditioned RL}
Goal-conditioned RL, the problem of learning a policy that reaches certain goal states, has been empirically studied in many prior works  \shortcite{kaelbling1993learning,sutton2011horde,andrychowicz2017hindsight,fu2018variational,pong2018temporal,ghosh2019learning,eysenbach2020rewriting,eysenbach2020c,kadian2020sim2real,fujita2020distributed,chebotar2021actionable,khazatsky2021can}. The goal-conditioned RL is closely related to the sparse reward setting in our framework, where the agent {\em only} receives terminal rewards at the terminal (goal) states. Moreover, the empirical observations where goal-reaching tasks can be improved via pursuing subgoals \shortcite{andreas2017modular,nasiriany2019planning} also corroborate with the computational benefits of rewarding intermediate states suggested by our theoretical results in Section \ref{sec:MainResult}.

\paragraph{Connection to Reward Shaping \shortcite{ng1999policy}}
For a given MDP $\mc M$ with reward function $r$, \shortcite{ng1999policy} proposed a potential-based shaping function $F$, such that the same MDP $\mc M$ with shaped reward $F+r$ has the same optimal policy as $\mc M$ with the original reward function $r$. Part of our result is related to reward shaping \shortcite{ng1999policy} in the sense that, when certain conditions are satisfied (Assumption \ref{assump:diff_settings_IS} (a)), greedy policies under the sparse reward setting and the intermediate reward setting are the same, as they both follow the shortest path from $s_0$ to $\mc S_T$. Comparing to reward shaping, the advantage of our work is also on the practical side, since the assumptions of the potential function $F$ is generally hard to satisfy. $F$ is usually approximated via neural networks in practice, which requires extra engineering efforts. On the contrary, as we have discussed in Section \ref{sec:InterStateInterReward} and Section \ref{sec:Experiment}, our assumptions on the one-way intermediate states (Assumption \ref{assump:diff_settings_IS}) and relative magnitude between intermediate rewards $B_I$ and terminal rewards $B$
can easily be implemented in practice.

\subsection{Discussion}
\paragraph{Practical Implications} Our work theoretically verifies the common practice of adding intermediate rewards to speed up training in reinforcement learning. Formally, in order to find a goal-reaching successful policy, adding intermediate rewards based on prior knowledge of the practical tasks is generally more computationally efficient than using sparse terminal rewards alone. However, unless the intermediate rewards are carefully designed (e.g., like the OWSP setting described in Section \ref{subsec:OWSP_intermediate_states}), greedy policies usually do not follow the shortest path to the terminal states. To prevent the agent from getting stuck at non one-way intermediate states (e.g., like the case discussed in Example \ref{example:SyncVIFail}), we can assign relatively smaller rewards (compared to the terminal rewards) to non one-way intermediate states. Our findings corroborate the reward design of Dota2 in Table 6 of \shortcite{berner2019dota}: one can understand the ``Win'' (with reward +5) as terminal states; ``XP Gained'' (with reward +0.002), ``Gold Gained'' (with reward +0.006), and ``Gold Spent'' (with reward +0.0006) as {\em non one-way} intermediate states;``T1 Tower'' (with reward +2.25), ``T2 Tower'' (with reward +3), ``T3 Tower'' (with reward +4.5), ``T4 Tower'' (with reward +2.25), ``Shrine'' (with reward +2.25), and ``Barracks'' (with reward +6) as {\em one-way} intermediate states. 

\paragraph{Negative Rewards}
One limitation of this work is that we only consider positive rewards, but in many applications described before, the agent may receive negative rewards upon the arrival of some unfavorable states. We will provide several examples here to provide some intuitions on the effect of negative rewards in Figure \ref{fig:NEGReward} and leave formal studies to future. In Figure \ref{fig:NEGReward} (a), for an intermediate state with negative reward which is not in the path from $s_0$ to $\mc S_T$, then given enough computational complexity, a greedy agent still finds $\mc S_T$. However, if there is an intermediate state with negative reward that is in the path from $s_0$ to $\mc S_T$, the behavior of a greed agent will depend on the magnitude of the negative rewards. As shown in Figure \ref{fig:NEGReward} (b) and (c), when the magnitude of the negative intermediate rewards $r(s,a,s_-)$ is relatively small (as shown in $(b)$, $r(s,a,s_-) = -1$), then a greedy agent will still find $\mc S_T$ because the overall reward from $\mc S_T$ and $s_-$ is larger than $0$; however, when the magnitude of the negative reward is large (as shown in $(b)$, $r(s,a,s_-) = -10$), then a greedy agent will not find $\mc S_T$ because the overall reward of $s_0$ and $\mc S_T$ is smaller than $0$.

\begin{figure}[ht]
    \centering
    \begin{tikzpicture}[scale=0.8]
\centering
\draw[step=1cm, gray, very thin] (0, -3) grid (3, 0);
\fill[blue!25!white] (0.5, -0.5) circle (0.25cm);
\fill[green!25!white] (2.5, -2.5) circle (0.25cm);
\fill[red!25!white] (1.5, -1.5) circle (0.25cm);
\node [font=\footnotesize] (a11) at (0.5,-0.5) {$7.29$};
\node [font=\footnotesize] (a12) at (1.5,-0.5) {$0$};
\node [font=\footnotesize] (a13) at (2.5,-0.5) {$0$};
\node [font=\footnotesize] (a21) at (0.5,-1.5) {$8.1$};
\node [font=\footnotesize] (a22) at (1.5,-1.5) {$7.29$};
\node [font=\footnotesize] (a23) at (2.5,-1.5) {$0$};
\node [font=\footnotesize] (a31) at (0.5,-2.5) {$9$};
\node [font=\footnotesize] (a32) at (1.5,-2.5) {$10$};
\node [font=\footnotesize] (a33) at (2.5,-2.5) {};
\node [font=\footnotesize] (a) at (1.5,-3.5) {(a) $r(s,a,s_-) = -1$};
\draw[black, line width=1.5] (3.0, -2.0) -- (1.0, -2.0);
\draw[black, line width=1.5] (0.0, 0.0) -- (3.0, 0.0) -- (3.0, -3.0) -- (0.0, -3.0) -- cycle;

\begin{scope}[shift={(4.0,0)}]
\draw[step=1cm, gray, very thin] (0, -3) grid (3, 0);
\fill[blue!25!white] (0.5, -0.5) circle (0.25cm);
\fill[green!25!white] (2.5, -2.5) circle (0.25cm);
\fill[red!25!white] (0.5, -1.5) circle (0.25cm);
\node [font=\footnotesize] (b11) at (0.5,-0.5) {$6.39$};
\node [font=\footnotesize] (b12) at (1.5,-0.5) {$0$};
\node [font=\footnotesize] (b13) at (2.5,-0.5) {$0$};
\node [font=\footnotesize] (b21) at (0.5,-1.5) {$8.1$};
\node [font=\footnotesize] (b22) at (1.5,-1.5) {$6.39$};
\node [font=\footnotesize] (b23) at (2.5,-1.5) {$0$};
\node [font=\footnotesize] (b31) at (0.5,-2.5) {$9$};
\node [font=\footnotesize] (b32) at (1.5,-2.5) {$10$};
\node [font=\footnotesize] (b33) at (2.5,-2.5) {};
\node [font=\footnotesize] (b) at (1.5,-3.5) {(b) $r(s,a,s_-) = -1$};
\draw[black, line width=1.5] (3.0, -2.0) -- (1.0, -2.0);
\draw[black, line width=1.5] (0.0, 0.0) -- (3.0, 0.0) -- (3.0, -3.0) -- (0.0, -3.0) -- cycle;
\end{scope}

\begin{scope}[shift={(8.0,0)}]
\draw[step=1cm, gray, very thin] (0, -3) grid (3, 0);
\fill[blue!25!white] (0.5, -0.5) circle (0.25cm);
\fill[green!25!white] (2.5, -2.5) circle (0.25cm);
\fill[red!25!white] (0.5, -1.5) circle (0.25cm);
\node [font=\footnotesize] (c11) at (0.5,-0.5) {$0$};
\node [font=\footnotesize] (c12) at (1.5,-0.5) {$0$};
\node [font=\footnotesize] (c13) at (2.5,-0.5) {$0$};
\node [font=\footnotesize] (c21) at (0.5,-1.5) {$8.1$};
\node [font=\footnotesize] (c22) at (1.5,-1.5) {$0$};
\node [font=\footnotesize] (c23) at (2.5,-1.5) {$0$};
\node [font=\footnotesize] (c31) at (0.5,-2.5) {$9$};
\node [font=\footnotesize] (c32) at (1.5,-2.5) {$10$};
\node [font=\footnotesize] (c33) at (2.5,-2.5) {};
\node [font=\footnotesize] (c) at (1.5,-3.5) {(c) $r(s,a,s_-) = -10$};
\draw[black, line width=1.5] (3.0, -2.0) -- (1.0, -2.0);
\draw[black, line width=1.5] (0.0, 0.0) -- (3.0, 0.0) -- (3.0, -3.0) -- (0.0, -3.0) -- cycle;
\end{scope}

\draw[black] (12,0) -- (12,-2.3) -- (13.4,-2.3) -- (13.4,0) -- cycle;

\fill[blue!25!white] (13, -0.4) circle (0.25cm);
\fill[red!25!white] (13, -1.15) circle (0.25cm);
\fill[green!25!white] (13, -1.9) circle (0.25cm);

\node [right,font=\footnotesize] (A) at (12.0,-0.4) {$s_0$};
\node [right,font=\footnotesize] (B) at (12.0,-1.15) {$s_{-}$};
\node [right,font=\footnotesize] (C) at (12.0,-1.9) {$\mc S_{T}$};
\end{tikzpicture}
    \caption{An example of the value functions after $k=4$ iterations of SVI and with terminal rewards $B = 10,\gamma = 0.9$, with different structures and magnitude of negative intermediate rewards.}
    \label{fig:NEGReward}
\end{figure}
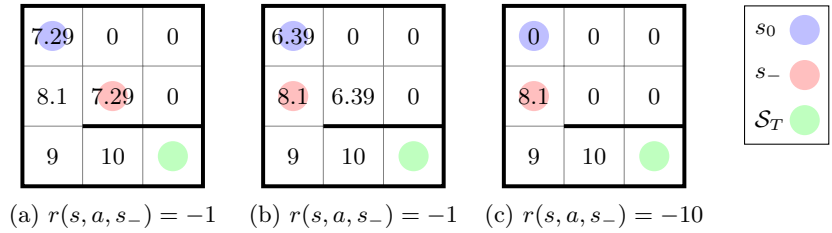

\paragraph{Unequal Magnitude Intermediate Rewards}
Another limitation of this work is the equal magnitude intermediate reward. In Section \ref{subsec:rewards}, we assume all intermediate rewards are equal magnitude $B_I$ (but can be different from the terminal rewards $B$). Such equal magnitude assumption helps us provide a bound w.r.t. relative magnitude between terminal rewards and intermediate rewards (the bound of $B/B_I$ described in Theorem \ref{thm:find_closest_IS} and Theorem \ref{thm:find_closest_TerminalState}), hence ensuring the monotonicity of the value function $V_k(s)$ w.r.t. $D(s,\mc S_I)$, $D(s,\mc S_T)$ in Theorem \ref{thm:find_closest_IS}, Theorem \ref{thm:find_closest_TerminalState}, respectively. One way to generalize the equal magnitude intermediate rewards is to assume all intermediate rewards are bounded between $[B_l,B_r]$, and then apply the same techniques as Theorem \ref{thm:find_closest_IS} and \ref{thm:find_closest_TerminalState} to provide bounds that involve $B/B_l$ and $B/B_u$.

\subsection{Future Works}
Our work can be extended in these following directions: 
1) Generalize from the deterministic MDPs in our framework to stochastic MDPs;
2) Study the tightness of the bounds for the relative magnitude $B/B_I$ in Theorem \ref{thm:find_closest_IS} and \ref{thm:find_closest_TerminalState}; 3) Generalize to other scenarios that contains negative rewards (e.g., from ``bad'' terminal states or intermediate states); 4) Generalize to intermediate rewards of unequal magnitude; 5) Study how rewarding ``good'' non one-way intermediate states (instead of just one-way states) affects the goal-reaching problems in general.

\section*{Acknowledgements}
We would like to thank anonymous reviewers of JAIR for their insightful suggestions on revising the draft, especially on highlighting the main results and contributions in the introduction. We would also like to thank Professor Mykel Kochenderfer of Stanford for coordinating the revision. We would also like to thank Michael Psenka from UC Berkeley for proofreading the manuscript. YZ would like to thank Haozhi Qi from UC Berkeley for insightful discussion on the implementations of practical deep RL tasks. YZ would also like to thank professor Sergey Levine and Qiyang (Colin) Li from the RAIL lab for insightful suggestions on the connections between this work and goal-conditioned RL. CB would also like to thank Siddharth Verma from UC Berkeley for helpful discussions about deep RL algorithms. ZZ acknowledges the JP Morgan AI Research Grant. JJ acknowledges NSF Grants IIS-1901252 and CCF-1909499. YM acknowledges support from ONR grant N00014-20-1-2002 and the joint Simons Foundation-NSF DMS grant \#2031899, as well as support from Berkeley FHL Vive Center for Enhanced Reality and Berkeley Center for Augmented Cognition. YZ and YM acknowledge support from Tsinghua-Berkeley Shenzhen Institute (TBSI) Research Fund.

\appendix
\section{Extra Definition and Proof of Section \ref{sec:MainResult}}
\subsection{Definition of Correct and Incorrect Actions}
We introduce the definition of correct and incorrect actions to facilitate the future proof.
\begin{definition}[Correct and Incorrect Actions]
\label{def:correct_actions}
Let $s_a$ be the subsequent state the state-action pair $(s,a)$, we define the {\em correct action set} $\mc A^+(s)$ of a state $s\in \mc S\backslash \mc S_{T}$ as
    \begin{equation}
        \mc A^+(s) = \{a|a\in \mc A,D(s_a,\mc S_{T})=D(s,\mc S_{T})-1\}.
    \end{equation}
    Conversely, we can define the {\em incorrect action set} of a state $s\in \mc S$ as 
    \begin{equation}
        \mc A^-(s) = \{a|a\in \mc A\backslash\mc A^+(s)\}.
    \end{equation}
\end{definition}
\subsection{Proof of Proposition \ref{prop:comp_sparse_reward}}
\begin{proposition}[Sparse Rewards]
Let $\mc M = (\mc S,\mc A, P, r, \gamma)$ be a deterministic MDP with  initial state $s_0$ and terminal states $\mc S_{T}$. If the reward function $r(\cdot)$ follows the sparse reward setting (Table \ref{tab:reward_setting}) and the value function and Q-function are zero-initialized \eqref{eq:zero_init}, then after any $k\geq D(s_0,\mc S_{T})$  synchronous value iteration updates \eqref{eq:SyncValueIter}, the Q-function $Q_k$ is a successful Q-function, and a greedy policy follows the shortest path from $s_0$ to $\mc S_T$.
\end{proposition}
\begin{proof}
\label{proof:absorb_radius_Q_learning}
$\forall d\leq k$, let $\mc S^{d}$ denote the set of states that is distance $d$ to the desired terminal state $\mc S_{T}$, and let $a^{d+}\in \mc A^+(s^d)$, $a^{d-}\in \mc A^-(s^d)$ denote a correct action and an incorrect action of $s^d$, respectively. From Lemma \ref{lemma:Vk_bounds_induction_SR}, we know that 
\begin{equation*}
    V_k(s^d) = \begin{cases}
        \gamma^{d-1}B, & \forall k,d\in \bb N^+, d\leq k,\\
        0, & \text{ otherwise}.\\    
    \end{cases}
\end{equation*}
Combine Lemma \ref{lemma:Vk_bounds_induction_SR} with the Q-function update in value iteration \eqref{eq:SyncValueIter}:
\begin{equation*}
    Q_k(s,a) = r(s,a,s_a)+\gamma V_k(s_a),\;\forall (s,a)\in \mc S\times \mc A,
\end{equation*}
we know that 
\begin{align}
    Q_{k}(s^d,a^{d+}) &\overset{(i)}{=} 
    \begin{cases}
        \gamma^{d-1}B,&\text{ if }d\leq k,\\
        0,&\text{ otherwise},    
    \end{cases}\label{eq:Qk+_induction_SR}\\
    Q_{k}(s^d,a^{d-}) &\overset{(ii)}{\leq}
    \begin{cases}
        \gamma^{d}B,&\text{ if }d\leq k,\\
        0,&\text{ otherwise},
    \end{cases} \label{eq:Qk-_induction_SR}
\end{align}
where inequality $(ii)$ holds since Definition \ref{def:correct_actions} implies that $D(s_{a^{d-}}^{d},\mc S_{T})\geq D(s,\mc S_{T})$. We shall clarify equality $(i)$ for the case when $d=1$: in this case, $V_k(s) = 0$ always holds $\forall s\in \mc S_{T}$ because given a state-action pair $(s,a)$, the MDP stops once the subsequent state $s_a\in \mc S_{T}$, hence we have 
\begin{equation}
    Q_k(s^1,a^{1+}) = r(s^1,a^{1+},s^1_{a^{1+}})+\gamma V_k(s^1_{a^{1+}}) = B.
\end{equation}
Since $\gamma<1$, we know that 
\begin{equation}
    \label{eq:Qk-<Qk+_SR}
    Q_{k}(s^d,a^{d-})<Q_k(s^d,a^{d+}),\;\forall k\geq d,
\end{equation}
which implies
\begin{equation}
    \underset{a\in \mc A}{\arg\max}\;Q_k(s^d,a)\in \mc A^+(s^d),\;\forall k\geq d.
\end{equation}
Now consider $s_0$, let $d_m = D(s_0,\mc S_{T})$, \eqref{eq:Qk-<Qk+_SR} implies that any greedy action is a correct action:
\begin{equation}
    a_0 = \underset{a}{\arg\max}\; Q_k(s_0,a)\in \mc A^+(s_0), \;\forall k\geq d_m
\end{equation}
By Definition \ref{def:correct_actions}, the subsequent state $s_{1}$ of state-action pair $(s_0,a_0)$ has distance at most $d_m$ $(D(s_1,\mc S_{T})\in [d_m-1,d_m])$ to $\mc S_{T}$, again \eqref{eq:Qk-<Qk+_SR} implies that 
\begin{equation}
    a_1 = \underset{a}{\arg\max}\; Q_k(s_1,a)\in \mc A^+(s_1).
\end{equation}
Likewise, we know that the greedy policy generates a trajectory $\{s_0,a_0,s_1,a_1,\dots\}$, such that
\begin{equation}
    \label{eq:greedy_action_correct_STR}
    a_i = \underset{a}{\arg\max}\; Q_k(s_i,a)\in \mc A^+(s_i),
\end{equation}
which eventually ends up with a state $s_n\in \mc S_{T}$, since Definition \ref{def:correct_actions} indicates that $s_{i+1}$ is one step closer to $\mc S_{T}$ than $s_{i}$. Hence, we conclude that $Q_k$ is a successful Q-function. When $k\geq d$, since \eqref{eq:greedy_action_correct_STR} shows that every action taken by the greedy policy will take the agent one step closer to $\mc S_T$, we can also conclude that the greedy policy also pursues the shortest path from $s_0$ to $\mc S_T$. 
\end{proof}

\subsection{Proof of Proposition \ref{prop:com_comp_inter_reward}}
\begin{proposition}[Single-path Intermediate States]
Let $\mc M = (\mc S,\mc A, P, r, \gamma)$ be a deterministic MDP with initial state $s_0$, intermediate states $\mc S_{I} = \{s_{i_1},s_{i_2},\dots,s_{i_N}\}$, and terminal states $\mc S_{T}$. Suppose $\mc M$ satisfies Assumption \ref{assump:OneWayIntermediateStates} and \ref{assump:diff_settings_IS} (a). If the reward function $r(\cdot)$ follows the intermediate reward setting (Table \ref{tab:reward_setting}) and the value function and Q-function are zero-initialized \eqref{eq:zero_init}, then after
\begin{equation}
    k\geq d_{\max}\doteq\max\{D(s_0,s_{i_1}),D(s_{i_1},s_{i_2}),\dots, D(s_{i_{N-1}},s_{i_N}), D(s_{i_N},\mc S_{T})\}
\end{equation} 
synchronous value iteration updates \eqref{eq:SyncValueIter}, the Q-function $Q_k$ is a successful Q-function, and a greedy policy follows the shortest path from $s_0$ to $\mc S_T$.
\end{proposition}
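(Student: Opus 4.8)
The plan is to run the argument of Proposition~\ref{prop:comp_sparse_reward} once per ``segment'' of the single path, using the next not-yet-visited checkpoint as a surrogate terminal state. Adopt the convention $s_{i_{N+1}}\doteq\mc S_{T}$. By Assumptions~\ref{assump:OneWayIntermediateStates} and \ref{assump:diff_settings_IS}(a), from any reachable non-terminal state $s$ there is a well-defined smallest index $j=j(s)$ such that $s$ has not yet visited $s_{i_j}$, and every path from $s$ to $\mc S_{T}$ must pass through $s_{i_j},s_{i_{j+1}},\dots,s_{i_{N+1}}$ in that order; moreover, since a single action cannot jump past $s_{i_j}$, for every action $a$ the successor $s_a$ still has $j(s_a)=j$ (unless $s_a=s_{i_j}$) and $D(s_a,s_{i_j})\ge D(s,s_{i_j})-1$, with equality iff $a$ decreases $D(\cdot,s_{i_j})$ by one — the analogue of a correct action in Definition~\ref{def:correct_actions} with $s_{i_j}$ in place of $\mc S_{T}$.

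First I would record the explicit form of $V_k$ given by Lemma~\ref{lemma:Vk_bounds_induction_IR_a}, which in the single-path setting says that for a state $s$ with $j(s)=j$ one has $V_k(s)=\sum_{m\ge j}\gamma^{D(s,s_{i_m})-1}R_m\,\indicator{D(s,s_{i_m})\le k}$, where $R_m=B_I$ for $m\le N$, $R_{N+1}=B$, and $D(s,s_{i_m})=D(s,s_{i_j})+\delta_{j,m}$ with $\delta_{j,m}\doteq\sum_{l=j}^{m-1}D(s_{i_l},s_{i_{l+1}})$ depending only on the checkpoint indices. The only consequence I need is a monotonicity statement: if $s,s'$ satisfy $j(s)=j(s')=j$ and $D(s,s_{i_j})<D(s',s_{i_j})\le k$, then $V_k(s)>V_k(s')>0$. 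This follows termwise, since for each $m$ the summand for $s$ has a strictly larger power of $\gamma$ and a weakly larger indicator than the summand for $s'$, while the $m=j$ summand $\gamma^{D(s,s_{i_j})-1}B_I$ is strictly positive and present in both (as $D(s,s_{i_j})\le D(s',s_{i_j})\le k$).

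Given the monotonicity, the comparison of Q-values is identical in spirit to \eqref{eq:Qk+_induction_SR}--\eqref{eq:Qk-<Qk+_SR}: for a state $s$ with $j(s)=j$ and $d\doteq D(s,s_{i_j})\le k$, a correct action $a^+$ (whose successor has $D(\cdot,s_{i_j})=d-1$) satisfies $Q_k(s,a^+)>Q_k(s,a^-)$ for every incorrect action $a^-$ — when $d\ge2$ because $Q_k(s,\cdot)=\gamma V_k(\cdot)$ on the successors and $V_k(s_{a^+})>V_k(s_{a^-})$ by monotonicity, and when $d=1$ because the correct action collects the reward $R_j>0$ outright while an incorrect successor stays in segment $j$ (the small extra inequality here is a one-line computation from the formula of Lemma~\ref{lemma:Vk_bounds_induction_IR_a} using $\gamma<1$). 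Hence every greedy action out of a state $s$ with $D(s,s_{i_{j(s)}})\le k$ is correct. Following the greedy trajectory from $s_0$: because $k\ge d_{\max}$, each of $D(s_0,s_{i_1}),D(s_{i_1},s_{i_2}),\dots,D(s_{i_N},\mc S_{T})$ is $\le k$, so the greedy policy decreases $D(\cdot,s_{i_1})$ by one at each step until it reaches $s_{i_1}$, then decreases $D(\cdot,s_{i_2})$ until $s_{i_2}$, and so on, arriving at $\mc S_{T}$ in finitely many steps; thus $Q_k$ is successful. Since every step was correct, the length of this trajectory is $L\doteq D(s_0,s_{i_1})+\sum_{j=1}^{N-1}D(s_{i_j},s_{i_{j+1}})+D(s_{i_N},\mc S_{T})$; conversely, splitting any path from $s_0$ to $\mc S_{T}$ at its first visits to $s_{i_1},\dots,s_{i_N}$ (which occur in this order) shows its length is at least $L$. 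Therefore $D(s_0,\mc S_{T})=L$ and the greedy trajectory is a shortest path.

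The step I expect to require the most care is setting up the segment decomposition cleanly — in particular checking that $j(s)$ is well defined and preserved along trajectories under Assumption~\ref{assump:diff_settings_IS}(a), and that the monotonicity of $V_k$ survives \emph{partial} backward propagation (when $k$ is only moderately larger than $d$, so that later checkpoints are not yet ``visible'' and the bracketed tail constant genuinely depends on the distance to the next checkpoint); everything else is a routine transcription of the proof of Proposition~\ref{prop:comp_sparse_reward} with $s_{i_{j(s)}}$ playing the role of $\mc S_{T}$.
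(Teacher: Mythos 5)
Your proposal is correct and follows essentially the same route as the paper's proof: both rest on the explicit formula for $V_k$ from Lemma~\ref{lemma:Vk_bounds_induction_IR_a} over the segment decomposition $\mc S_{[j-1,j)}$, deduce that every greedy action is a correct action, and then read off both successfulness and the shortest-path claim (the paper phrases ``correct'' relative to $D(\cdot,\mc S_T)$ so the shortest-path conclusion is immediate, whereas your version relative to $D(\cdot,s_{i_{j(s)}})$ needs the additivity argument $D(s_0,\mc S_T)=L$ you supply at the end). One cosmetic slip: in the monotonicity step the summand for $s$ has a strictly \emph{smaller} power of $\gamma$ (hence a larger value), not a larger one.
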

\begin{proof}
\label{proof:complexity_inter_reward}
The proof of this Proposition is a direct result of Lemma \ref{lemma:Vk_bounds_induction_IR_a}. Similar to Lemma \ref{lemma:Vk_bounds_induction_IR_a}, $\forall j\in [n+1]$, let $\mc S_{[j-1,j)}$ denote the sets
\begin{equation}
    \mc S_{[j-1,j)} = 
    \begin{cases}
        \{s\in \mc S|D(s,s_{i_{1}})<\infty\}& \text{ if } j = 1,\\
        \{s\in \mc S|D(s,s_{i_{j}})=\infty,D(s,s_{i_{j+1}})<\infty\}& \text{ if } j = 2,3,\dots,N,\\
        \{s\in \mc S|D(s,s_{i_{n}})=\infty,D(s,\mc S_{T})<\infty\}   & \text{ if } j = N+1.
    \end{cases}
\end{equation}
For a state $s^{\mb d}\in \mc S_{[j-1,j)}$, let 
\begin{equation}
    \mb d = [d_j,d_{j+1},\dots,d_{n+1}]^\top\in \bb R^{n+1},
\end{equation}
where $d_l = D(s,s_{i_l}), \;\forall l=j,j+1,\dots, N$ and $ d_{N+1} = D(s,\mc S_{T})$, denote a vector whose entries are the distance from $s$ to intermediate states $s_{i_j}$ (and the terminal states $\mc S_{T}$). Let $a^{\mb d+}\in \mc A^+(s^{\mb d})$, $a^{\mb d-}\in \mc A^-(s^{\mb d})$ denote a correct and an incorrect action of $s^{\mb d}$. Combine Lemma \ref{lemma:Vk_bounds_induction_IR_a} with the Q-function update in value iteration \eqref{eq:SyncValueIter}
\begin{equation*}
    Q_k(s,a) = r(s,a,s_a)+\gamma V_k(s_a),\;\forall (s,a)\in \mc S\times \mc A,
\end{equation*}
we know that

\begin{align}
        &Q_{k}(s^{\mb d},a^{\mb d+})= \sum_{l=j}^{N+1} v(k,d_l)B_l=V_k(s),\\ 
        &Q_{k}(s^{\mb d},a^{\mb d-})\overset{(i)}{\leq} \gamma\sum_{l=j}^{N+1}v(k,d_l)B_l=\gamma V_k(s),
\end{align}
where
\begin{equation}
    v(k,d_l)=
    \begin{cases}
        \gamma^{d_l-1}, & \forall k,d\in \bb N^+, d_l\leq k+1,\forall l\in [N+1],\\
        0, & \text{ otherwise},\\    
    \end{cases}
    \;\text{ and }\;
    B_l = \begin{cases}
        B_I,&\text{ if }\; l\in [N],\\
        B,&\text{ if }\; l = N+1,
    \end{cases}
\end{equation}
and inequality $(i)$ holds because Definition \ref{def:correct_actions} implies $D(s_{a^{\mb d-}}^{\mb d},\mc S_{T})\geq D(s^{\mb d},\mc S_{T})$ and Assumption \ref{assump:OneWayIntermediateStates} guarantees that previous intermediate states $s_{i_1},s_{i_2},\dots,s_{i_{j-1}}$ cannot be revisited. Since $\gamma<1$, notice that when $k\geq d_j$, we have 
\begin{equation}
    \label{eq:Qk-<Qk+_IR_a}
    Q_k(s^{\mb d},a^{\mb d+})>Q_k(s^{\mb d},a^{\mb d-}), 
\end{equation}
which implies 
\begin{equation}
    \underset{a\in \mc A}{\arg\max}\;Q_k(s^{\mb d+},a)\in \mc A^+(s^{\mb d}).
\end{equation}
Now consider $s_0$, since we assume that $k\geq d_{\max}$, therefore \eqref{eq:Qk-<Qk+_IR_a} implies that the greedy action is a correct action:
\begin{equation}
    a_0 = \underset{a}{\arg\max}\; Q_k(s_0,a)\in \mc A^+(s_0).
\end{equation} 
By Definition \ref{def:correct_actions} and Assumption \ref{assump:OneWayIntermediateStates}, we consider the following two cases:
\begin{itemize}
    \item The subsequent state $s_{1}$ of state-action pair $(s_0,a_0)$ {\em does not} reach $s_{i_1}$, $(s_1\in \mc S_{[0,1)})$, then $s_1$ has distance at most $d_1$ to states $s_{i_1}$:
\begin{equation}
    \begin{split}
        D(s_1,s_{i_1})\in [d_{1}-1,d_{1}].
    \end{split}
\end{equation} 
Since $k\geq d_{\max}\geq d_1$, \eqref{eq:Qk-<Qk+_IR_a} implies that the greedy action is a correct action:
\begin{equation}
    a_1 = \underset{a}{\arg\max}\; Q_k(s_1,a)\in \mc A^+(s_1).
\end{equation}
\item The subsequent state $s_{1}$ of state-action pair $(s_0,a_0)$ reaches $s_{i_1}$, $(s_1\in \mc S_{[1,2)})$, then $s_1$ has distance at most $d_2$ to states $s_{i_2}$:
\begin{equation}
    \begin{split}
        D(s_1,s_{i_2})&\in [d_{2}-1,d_{2}].
    \end{split}
\end{equation} 
Since $k\geq d_{\max}\geq d_2$, \eqref{eq:Qk-<Qk+_IR_a} implies that the greedy action is a correct action:
\begin{equation}
    a_1 = \underset{a}{\arg\max}\; Q_k(s_1,a)\in \mc A^+(s_1).
\end{equation}
\end{itemize}
Likewise, we know that greedy execution generates a trajectory $\{s_0,a_0,s_1,a_1,\dots\}$, such that
\begin{equation}
    \label{eq:greedy_action_correct_OWSP}
    a_i = \underset{a}{\arg\max}\; Q_k(s_i,a)\in \mc A^+(s_i),
\end{equation}
which eventually ends up with a state $s_n\in \mc S_{T}$, since Definition \ref{def:correct_actions} indicates that $s_{i+1}$ is one step closer to $\mc S_{T}$ than $s_{i}$. Hence, we conclude that $Q_k$ is a successful Q-function. When $k\geq d_{\max}$, since \eqref{eq:greedy_action_correct_OWSP} shows that every action taken by the greedy policy will take the agent one step closer to $\mc S_T$, we can also conclude that the greedy policy also pursues the shortest path from $s_0$ to $\mc S_T$. 
\end{proof}

\subsection{Proof of Theorem \ref{thm:find_closest_IS}}
\begin{theorem}[Finding the Closest $\mc S_I$]
Let $\mc M = (\mc S,\mc A, P, r, \gamma)$ be a deterministic MDP with initial state $s_0$, intermediate states  $\mc S_{I} = \{s_{i_1},s_{i_2},\dots,s_{i_N}\}$, and terminal states $\mc S_{T}$. Suppose $\mc M$ satisfies Assumption \ref{assump:OneWayIntermediateStates}, \ref{assump:diff_settings_IS} (b), and \ref{assump:min_dist_IS}, if the reward function $r(\cdot)$ follows the intermediate reward setting (Table \ref{tab:reward_setting}) and the value function and Q-function are zero-initialized \eqref{eq:zero_init}, then $\forall s \in \mc S\backslash\mc S_{T}$, if $\mc S_{T}$ is not directly reachable from $s$, then after
\begin{equation}
    k\geq d = D(s,\mc S_{I}) \doteq \min_{j\in \mc I_d(s)}D(s,s_{i_j})
\end{equation}
synchronous value iteration update \eqref{eq:SyncValueIter}, an agent following the greedy policy will find the closest directly reachable intermediate state $s_{i_{j^\star}}$ of $s$ ($D(s,s_{i_{j^\star}}) = D(s,\mc S_{I})$), given that $\frac{B}{B_I}\in \paren{0,\frac{1}{1-\gamma^h}}$ when $\gamma+\gamma^h\leq 1$ or $\frac{B}{B_I}\in\paren{\frac{1}{1-\gamma^h},\frac{1-\gamma}{\gamma^{1+h}}}$ when $\gamma+\gamma^h>1$, where $h$ is the minimum distance between two intermediate states (Assumption \ref{assump:min_dist_IS}).
\end{theorem}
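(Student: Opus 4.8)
The plan is to follow the template of the proofs of Propositions \ref{prop:comp_sparse_reward} and \ref{prop:com_comp_inter_reward}: fix the iteration count $k$, obtain two-sided control on $V_k(\cdot)$, conclude that at every state from which $\mc S_T$ is not directly reachable the greedy action is ``correct'' — meaning an action whose successor $s'_a$ is one step closer to a closest directly reachable intermediate state of $s'$ (i.e.\ $D(s'_a,\mc S_I)=D(s',\mc S_I)-1$, and $s'_a\in\mc S_I$ itself when $D(s',\mc S_I)=1$) — and then chain this one-step guarantee.

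The first step is a lemma, in the spirit of Lemma \ref{lemma:Vk_bounds_induction_SR} and the OWSP Lemma \ref{lemma:Vk_bounds_induction_IR_a}, controlling $V_k(s')$ for every $s'\in\mc S\backslash\mc S_T$ from which $\mc S_T$ is not directly reachable. Writing $d'\doteq D(s',\mc S_I)$ and $\Lambda\doteq\max_{j\in[N]}V^\star(s_{i_j})$ for the largest converged value over intermediate states, I would show
\begin{equation*}
    \gamma^{d'-1}B_I \;\le\; V_k(s') \;\le\; \gamma^{d'-1}\paren{B_I+\gamma\Lambda}\quad\text{for all }k\ge d',\qquad\text{and}\quad V_k(s')=0\ \text{for }k<d'.
\end{equation*}
The lower bound comes from following the length-$d'$ path that collects $B_I$ at the closest directly reachable intermediate state; the upper bound uses that value iteration from \eqref{eq:zero_init} increases to $V^\star$ (so $V_k\le V^\star$) together with the fact that, since $\mc S_T$ is not directly reachable from $s'$, any trajectory leaving $s'$ first meets an intermediate state after at least $d'$ steps, after which its continuation is worth at most $\Lambda$. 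In the range $d'\le k<d'+h$ I would further sharpen this to the exact equality $V_k(s')=\gamma^{d'-1}B_I$, since Assumption \ref{assump:min_dist_IS} guarantees no second reward has yet propagated back to $s'$, so that range behaves like the sparse setting \eqref{eq:SparseRewardValueMainText}.

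The second step bounds the tail $\Lambda$. By Assumption \ref{assump:min_dist_IS}, after any intermediate state the next reward-bearing state (another intermediate state or a terminal state) is at distance at least $h$, whence $\Lambda\le\gamma^{h-1}\max\set{B,\;B_I+\gamma\Lambda}$; as the right side is a monotone $\gamma^h$-contraction in $\Lambda$, this yields $\Lambda\le\gamma^{h-1}B$ when $\tfrac{B}{B_I}\ge\tfrac{1}{1-\gamma^h}$ and $\Lambda\le\tfrac{\gamma^{h-1}B_I}{1-\gamma^h}$ otherwise (finiteness of $\mc S_I$ making the second bound strict). Feeding the $V_k$ bounds into the update $Q_k(s',a)=r(s',a,s'_a)+\gamma V_k(s'_a)$ of \eqref{eq:SyncValueIter}, and using Assumption \ref{assump:OneWayIntermediateStates} (so no $B_I$ is re-collected) and Definition \ref{def:DirectlyRechableIS}, a correct action $a^+$ obeys $Q_k(s',a^+)\ge\gamma^{d'-1}B_I$ while an incorrect action $a^-$ has $D(s'_{a^-},\mc S_I)\ge d'$ and $r(s',a^-,s'_{a^-})=0$, hence $Q_k(s',a^-)\le\gamma^{d'}(B_I+\gamma\Lambda)$. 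Thus $\arg\max_a Q_k(s',a)\subseteq\mc A^+(s')$ as soon as $\gamma(B_I+\gamma\Lambda)<B_I$, i.e.\ $\Lambda<\tfrac{(1-\gamma)B_I}{\gamma^2}$; substituting the two estimates for $\Lambda$ turns this exactly into the stated hypotheses — the case $\tfrac{B}{B_I}<\tfrac{1}{1-\gamma^h}$ collapses to $\gamma+\gamma^h\le 1$, and the case $\tfrac{B}{B_I}\ge\tfrac{1}{1-\gamma^h}$ collapses to $\tfrac{B}{B_I}<\tfrac{1-\gamma}{\gamma^{1+h}}$.

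With the one-step guarantee in place, I would finish by induction as in Proposition \ref{prop:comp_sparse_reward}: for $k\ge d=D(s,\mc S_I)$, starting from $s$ the greedy policy takes a correct action to a state at $\mc S_I$-distance $d-1$, which (being reached from $s$ without passing an intermediate state) still cannot directly reach $\mc S_T$; after $d$ such steps the agent lands on an intermediate state directly reachable from $s$ at distance $d$, i.e.\ a closest directly reachable intermediate state $s_{i_{j^\star}}$. I expect the main obstacle to be the tail estimate: driving $\Lambda$ small enough that the leading term $\gamma^{d'-1}B_I$ provably beats the geometric sum of all later intermediate/terminal rewards, and handling the $\tfrac{B}{B_I}$ case split — the sharpened small-$k$ formula and the one-way Assumption \ref{assump:OneWayIntermediateStates} are exactly what keep that tail finite and the calculation tractable.
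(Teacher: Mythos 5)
Your proposal is correct and arrives at the same sufficient conditions, but it packages the key estimate differently from the paper. The paper fixes a policy $\pi$, unrolls the whole future trajectory from $s'$ into the explicit sum $\Gamma(\pi)\le f(u^\pi)=\gamma^{d'-1}\bigl[\sum_{l=0}^{u^\pi}\gamma^{lh}\bigr]B_I+\gamma^{d'+(u^\pi+1)h-1}B$ over the $u^\pi$ intermediate states visited, and shows $f(u^\pi)<\gamma^{d-1}B_I$ by proving $f$ is monotone in $u^\pi$: increasing when $B_I>(1-\gamma^h)B$, so the worst case is $f(\infty)=\frac{\gamma^{d'-1}}{1-\gamma^h}B_I$, beaten using $\gamma+\gamma^h\le 1$; decreasing when $B_I<(1-\gamma^h)B$, so the worst case is $f(0)=\gamma^{d'-1}B_I+\gamma^{d'+h-1}B$, beaten using $\frac{B}{B_I}\le\frac{1-\gamma}{\gamma^{1+h}}$. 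You instead compress the entire tail into $\Lambda=\max_j V^\star(s_{i_j})$, bound it by the fixed point of the monotone contraction $x\mapsto\gamma^{h-1}\max\{B,\,B_I+\gamma x\}$, and reduce the greedy-step test to the single inequality $\gamma(B_I+\gamma\Lambda)<B_I$; the two branches of your max are exactly the paper's two monotonicity regimes, and the resulting algebra is identical. Your route is more modular (one scalar recursion replaces the per-trajectory bookkeeping and the telescoping chains $f(u)<f(u\pm 1)<\dots$), at the cost of invoking $V_k\le V^\star$; the paper stays at finite $k$ throughout and obtains the explicit $f(u^\pi)$ bound that it then reuses almost verbatim in the proof of Theorem \ref{thm:find_closest_TerminalState}.

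Two small points. First, your $Q$-comparison uses the upper bound $V_k(s'_{a^-})\le\gamma^{D(s'_{a^-},\mc S_I)-1}(B_I+\gamma\Lambda)$, but your lemma only supplies it for states from which $\mc S_T$ is not directly reachable; this does hold here — if a non-intermediate successor of $s'$ could reach $\mc S_T$ without passing an intermediate state, so could $s'$ — but you should state it, since the same observation is what keeps the lemma applicable at every state along the chained greedy trajectory. Second, your claim that the case $\frac{B}{B_I}\ge\frac{1}{1-\gamma^h}$ "collapses to" $\frac{B}{B_I}<\frac{1-\gamma}{\gamma^{1+h}}$ is exactly what the algebra (and the paper's own proof) gives; be aware, though, that $\frac{1}{1-\gamma^h}<\frac{1-\gamma}{\gamma^{1+h}}$ is equivalent to $\gamma+\gamma^h<1$, not $\gamma+\gamma^h>1$, so the theorem's second interval is empty under the tag $\gamma+\gamma^h>1$ attached to it — a defect of the statement that the paper's proof shares, not one your argument introduces.
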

\begin{proof}
\label{proof:find_closest_IS}
Recall the definition of directly reachable intermediate states (Definition \ref{def:DirectlyRechableIS}) of state $s$: $\{s_{i_j},\;\forall j\in \mc I_d(s)\}$. $\forall s^\prime\in \mc S$, such that
\begin{equation}
    \label{eq:dprime>d}
    d^\prime = D(s^\prime,\mc S_I) = \min_{j\in \mc I_d(s^\prime)} D(s^\prime,s_{i_j})>d
\end{equation}
is the minimum distance from $s^\prime$ to its closest directly reachable intermediate state, it suffices to show $V_k(s)>V_k(s^\prime)$, since $V_k(s)>V_k(s^\prime)$ can directly lead to $Q_k(s,a_{i+})>Q_k(s,a_{i-})$:
\begin{equation}
    \begin{split}
        Q_{k}(s,a_{i+}) &\overset{(i)}{=} r(s,a_{i+},s_{a_{i+}}) + \gamma V_k(s_{a_{i+}}) \overset{(ii)}{>} \gamma V_k(s_{a_{i-}})\\
        &\overset{(iii)}{=} r(s,a_{i-},s_{a_{i-}}) + \gamma V_k(s_{a_{i-}}) \overset{(iv)}{=} Q_{k}(s,a_{i-}),
    \end{split}
\end{equation}
where $a_{i+}\in \mc A_I^+(s)$ is an action that leads to the closest directly reachable intermediate states of state $s$ and $a_{i-}\in \mc A_I^-(s)$ is the other actions that do not. Equality $(i)$ and $(iv)$ hold by the synchronous value iteration update \eqref{eq:SyncValueIter}, inequality $(ii)$ holds since $D(s_{a_{i-}},\mc S_I)>D(s_{a_{i+}},\mc S_I)$ and we have assumed $V_k(s)>V_k(s^\prime)$ holds $\forall s,s^\prime$ such that $D(s^\prime,\mc S_I) > D(s,\mc S_I)$, and equality $(iii)$ holds because $s_{a_{i-}}$ is not an intermediate state. Moreover, if the agent at $s$ takes an action $a_{i+}$ and moves to $s_{a_{i+}}$, we will have 
\begin{equation}
    k\geq d-1 = D(s_{a_{i+}},s_{i_j})
\end{equation}
if $s_{a_{i+}}$ is not an intermediate state, which implies that the action given by the greedy policy will still lead $s_{a_{i+}}$ one step forward to the closest directly reachable intermediate states. Therefore, we only need to show $V_k(s)>V_k(s^\prime),\forall s,s^\prime$ such that $D(s^\prime,\mc S_I)>D(s,\mc S_I)$, which will be our main focus in the remaining proof.

\paragraph{When $d<k<d+h$.}
In this case, all intermediate rewards are the same (all equal to $B_I$), Lemma \ref{lemma:Vk_bounds_induction_SR} implies that 
\begin{equation}
    V_k(s) = v(k,d)B_I,\; \forall k<d+h,
\end{equation}
where $d = D(s,\mc S_I)$ and
\begin{equation}
    v(k,d)=
    \begin{cases}
        \gamma^{d-1}, & d\leq k,\\ 
        0, & \text{ otherwise}.\\    
    \end{cases}
\end{equation}
Proposition \ref{prop:comp_sparse_reward} implies that the agent will find a path to the closest intermediate state $s_{i_{j^\star_0}}$:
\begin{equation}
    D(s,s_{i_{j^\star_0}}) = D(s,\mc S_I),
\end{equation}
and 
\begin{equation}
    V_k(s) = \gamma^{d-1}B_I > \gamma^dB_I \geq V_k(s^\prime).
\end{equation}

\paragraph{When $k\geq d + h$.}
To complete this theorem, we need to show that, when $k\geq d+h$, $V_k(s)>V_k(s^\prime)$ still holds. Let $\pi$ be any deterministic policy and suppose under policy $\pi$, the agent starting from $s^\prime$ will visit the sequence of intermediate states before reaching $\mc S_{T}$:
\begin{equation}
   \{s_{i_{j^\star_0}},s_{i_{j^\star_1}},\dots,s_{i_{j^\star_{u^\pi}}},s_{i_{j^\star_{u^\pi+1}}}\},
\end{equation}
where we slightly abuse the notation by assuming $s_{i_{j^\star_{u^\pi+1}}}\in \mc S_{T}$. Hence, the discounted reward $V_k^\pi(s^\prime)$ starting from state $s^\prime$ following policy $\pi$ satisfies
\begin{equation}
    \begin{split}
    \label{eq:discounted_value_upperbound}
    V_k^\pi(s^\prime) \leq \underbrace{\gamma^{d^\prime-1}B_I + \brac{ \sum_{l=1}^{u^\pi}\gamma^{d_l-1} }B_I+\gamma^{d_{u^\pi + 1}-1}B}_{\Gamma(\pi)},
    \end{split}
\end{equation}
where 
\begin{equation}
    \label{eq:dist_bound}
        d_l = D(s^\prime,s_{i_{j^\star_0}})+\sum_{m=0}^{l-1}D(s_{i_{j^\star_m}},s_{i_{j^\star_{m+1}}})\overset{(i)}{\geq}d^\prime+lh,\;\forall l = 1,2,\dots, u^\pi+1,
\end{equation}
where inequality $(i)$ hold because of Assumption \ref{assump:min_dist_IS}. Hence, we know that
\begin{equation}
    \Gamma(\pi) = \gamma^{d^\prime-1}B_I + \brac{ \sum_{l=1}^{u^\pi}\gamma^{d_{l}-1} }B_I+\gamma^{d_{u^\pi + 1}-1}B\overset{(ii)}{\leq}\underbrace{\gamma^{d^\prime-1}\brac{\sum_{l=0}^{u^\pi}\gamma^{l h}}B_I+\gamma^{d^\prime+(u^\pi + 1) h-1}B}_{f(u^\pi)},
\end{equation}
where inequality $(ii)$ holds because of \eqref{eq:dist_bound} and $\gamma<1$. By the synchronous value iteration update \eqref{eq:SyncValueIter}, and \eqref{eq:discounted_value_upperbound} the value function of any state $s,s^\prime$ satisfy:
\begin{equation}
    \label{eq:discounted_value_approx}
    \gamma^{d-1}B_I\leq V_k(s),\; V_k(s^\prime) = \max_\pi V_k^\pi(s^\prime)\leq \max_\pi f(u^\pi),
\end{equation}
Next we will show $\gamma^{d-1}B_I>f(u^\pi)$ holds $\forall \pi$, (which directly leads to $V_k(s)>V_k(s^\prime)$, for any policy from \eqref{eq:discounted_value_approx}, after taking the max over all policy $\pi$) under these two following conditions: 1) $\frac{B}{B_I}\in \paren{0,\frac{1}{1-\gamma^h}}$ when $\gamma+\gamma^h\leq 1$; 2) $\frac{B}{B_I}\in\paren{\frac{1}{1-\gamma^h},\frac{1-\gamma}{\gamma^{1+h}}}$ when $\gamma+\gamma^h>1$.

\begin{itemize}
    \item {\bf When $\frac{B}{B_I}\in \paren{0,\frac{1}{1-\gamma^h}}$ and $\gamma+\gamma^h\leq 1$.}
    
    Since $\frac{B}{B_I}\in \paren{0,\frac{1}{1-\gamma^h}}$, we know that $B_I-(1-\gamma^h)B>0$, hence
    \begin{equation}
    \begin{split}
        f(u^\pi) & < f(u^\pi) + \gamma^{d^\prime+(u^\pi + 1) h-1}\brac{B_I-(1-\gamma^h)B} \\
        & = \gamma^{d^\prime-1}\brac{\sum_{l=0}^{u^\pi}\gamma^{l h}}B_I+\gamma^{d^\prime+(u^\pi + 1) h-1}B+\gamma^{d^\prime+(u^\pi + 1) h-1}\brac{B_I-(1-\gamma^h)B}\\
        & = \gamma^{d^\prime-1}\brac{\sum_{l=0}^{u^\pi + 1}\gamma^{l h}}B_I+\gamma^{d^\prime+(u^\pi+2)h-1}B = f(u^\pi+1) <\dots < f(\infty) \\
        & = \gamma^{d^\prime-1}\brac{\sum_{l=0}^\infty\gamma^{l h}}B_I = \frac{\gamma^{d^\prime-1}}{1-\gamma^h}B_I\overset{(i)}{\leq}\frac{\gamma^{d}}{1-\gamma^h}B_I\overset{(ii)}{\leq}\gamma^{d-1}B_I,
    \end{split}
\end{equation}
where inequality $(i)$ holds because \eqref{eq:dprime>d} implies $d^\prime \geq d+1$, and inequality $(ii)$ holds because $\gamma+\gamma^h\leq 1$ implies that $\frac{\gamma}{1-\gamma^h}\leq 1$.

\item {\bf When $\frac{B}{B_I}\in\left[\frac{1}{1-\gamma^h},\frac{1-\gamma}{\gamma^{1+h}}\right)$ and $\gamma+\gamma^h>1$.}

When $\gamma+\gamma^h>1$, we know that the interval $\paren{\frac{1}{1-\gamma^h},\frac{1-\gamma}{\gamma^{1+h}}}$ is well defined as $\gamma+\gamma^h>1$ implies $\frac{1}{1-\gamma^h}<\frac{1-\gamma}{\gamma^{1+h}}$. Since $\frac{B}{B_I}>\frac{1}{1-\gamma^h}$, we know that $(1-\gamma^h)B-B_I>0$, hence
\begin{equation}
    \begin{split}
        f(u^\pi) & < f(u^\pi) + \gamma^{d^\prime+u^\pi h-1}\brac{(1-\gamma^h)B-B_I}\\
        & = \gamma^{d^\prime-1}\brac{\sum_{l=0}^{u^\pi}\gamma^{l h}}B_I+\gamma^{d^\prime+(u^\pi + 1) h-1}B+\gamma^{d^\prime+u^\pi h-1} \brac{(1-\gamma^h)B-B_I}\\
        &=\gamma^{d^\prime-1}\brac{\sum_{l=0}^{u^\pi-1}\gamma^{l h}}B_I+\gamma^{d^\prime+u^\pi h-1}B=f(u^\pi-1)<\dots<f(0)\\
        &=\gamma^{d^\prime-1}B_I+\gamma^{d^\prime+h-1}B\overset{(i)}{\leq}\gamma^{d}B_I+\gamma^{d+h}B\overset{(ii)}{\leq}\gamma^{d-1} B_I. 
    \end{split}
\end{equation}
where inequality $(i)$ holds because \eqref{eq:dprime>d} implies $d^\prime \geq d+1$, and inequality $(ii)$ holds because $\frac{B}{B_I}\leq\frac{1-\gamma}{\gamma^{1+h}}$ implies that $\gamma B_I+\gamma^{1+h}B\leq B_I$. 
\end{itemize}

Hence, we conclude that $f(u^\pi)<\gamma^{d-1}B_I,\forall \pi$, which leads to the result that $V_k(s)>V_k(s^\prime)$, $\forall k\geq d+h$ and $\forall s,s^\prime\in \mc S$ satisfying 
\eqref{eq:dprime>d}. This result indicates that we still have $V_k(s)>V_k(s^\prime)$ in the future update of synchronous value iteration. Therefore, we conclude that $V_k(s)>V_k(s^\prime),\forall s,s^\prime \in \mc S$ satisfying $D(s^\prime,\mc S_I)> D(s,\mc S_I)$, which completes the proof.
\end{proof}

\subsection{Proof of Theorem \ref{thm:find_closest_TerminalState}}
\begin{theorem}[Finding the Shortest Path to $\mc S_T$]
Let $\mc M = (\mc S,\mc A, P, r, \gamma)$ be a deterministic MDP with initial state $s_0$, intermediate states  $\mc S_{I} = \{s_{i_1},s_{i_2},\dots,s_{i_N}\}$, and terminal states $\mc S_{T}$. Suppose $\mc M$ satisfies Assumption \ref{assump:OneWayIntermediateStates}, \ref{assump:diff_settings_IS} (b), and \ref{assump:min_dist_IS}, the reward function $r(\cdot)$ follows the intermediate reward setting (Table \ref{tab:reward_setting}), and the value function, Q-function are zero-initialized \eqref{eq:zero_init}. $\forall s \in \mc S\backslash\mc S_{T}$, suppose $\mc S_{T}$ is directly reachable from $s$, and let 
\begin{equation}
     d = D(s,\mc S_{T})\; \text{ and }\; d_I = D(s,\mc S_{I}) \doteq \min_{j\in \mc I_d(s)} D(s,s_{i_j}).
\end{equation}
If $d$ and $d_I$ satisfies:
\begin{equation}
    \label{eq:d_cond_find_ST}
    d<\begin{cases}
        d_I+\log_{\frac{1}{\gamma}}\brac{(1-\gamma^h)\frac{B}{B_I}},&\text{ if }\;\frac{B}{B_I}<\frac{1}{1-\gamma^h},\\
        d_I+\log_{\frac{1}{\gamma}}\paren{\frac{B}{B_I+\gamma^hB}},&\text{ if }\;\frac{B}{B_I}\geq \frac{1}{1-\gamma^h},
    \end{cases}
    \;\text{ and }\; d<d_I+h-1,
\end{equation}
where $h$ is the minimum distance between two intermediate states (Assumption \ref{assump:min_dist_IS}), then after $k\geq d$ synchronous value iteration updates \eqref{eq:SyncValueIter}, an agent following the greedy policy will pursue the shortest path to $\mc S_{T}$. 
\end{theorem}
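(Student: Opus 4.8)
I would mirror the proof of Theorem~\ref{thm:find_closest_IS}: reduce the statement to a monotonicity property of the value function, establish it for ``small'' iteration counts from the explicit formula of Lemma~\ref{lemma:Vk_Bound_directly_reachable_k<dI+h}, and then for all larger $k$ by an upper bound on $V_k$ over all deterministic policies. The clean target is to show that, under the hypotheses, $V_k(s)=\gamma^{d-1}B$ \emph{exactly} for every $k\geq d$; the two branches of \eqref{eq:d_cond_find_ST} are designed precisely so that ``go straight to $\mc S_T$'' is the best the agent can do from $s$. Given this, the greedy trajectory follows a shortest path, as in Proposition~\ref{prop:comp_sparse_reward}.

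\textbf{Reduction.} As in Proposition~\ref{prop:comp_sparse_reward} it suffices to show the greedy action at $s$ is a correct action relative to $\mc S_T$ (Definition~\ref{def:correct_actions}), plus that the hypotheses propagate along a shortest path: stepping to a correct successor decreases $D(\cdot,\mc S_T)$ by exactly one and $D(\cdot,\mc S_I)$ by at most one, so $d-d_I$ does not increase and both inequalities in \eqref{eq:d_cond_find_ST} are preserved, while direct reachability of $\mc S_T$ is inherited; since $k\geq d = D(s_0,\mc S_T)$, every state visited is within $k$ of $\mc S_T$, so the one–step claim can be iterated to $\mc S_T$. Writing $s_{a^+}$ for a correct successor ($D(s_{a^+},\mc S_T)=d-1$) and $s_{a^-}$ for any other successor, and using that $r$ vanishes off $\mc S_T\cup\mc S_I$, it is enough to prove $V_k(s_{a^+})>V_k(s_{a^-})$, with a separate inequality for the case that $s_{a^-}$ is itself an intermediate state (only possible when $d_I=1$). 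For this I would use $V_k(s_{a^+})=\gamma^{d-2}B$ (the hypotheses hold at $s_{a^+}$) and an upper bound on $V_k(s_{a^-})$ in terms of $D(s_{a^-},\mc S_T)\geq d$ and $D(s_{a^-},\mc S_I)\geq d_I-1$.

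\textbf{The two regimes.} For $d\leq k\leq d_I+h$, Lemma~\ref{lemma:Vk_Bound_directly_reachable_k<dI+h} gives $V_k$ explicitly: within this budget the agent cannot collect two intermediate rewards (the second would be at distance $\geq d_I+h$), and $d<d_I+h-1$ forces the reaching time of $\mc S_T$ through the closest intermediate to equal $d_I+h$; comparing the few options (straight to $\mc S_T$, value $\gamma^{d-1}B$; one intermediate then $\mc S_T$, at most $\gamma^{d_I-1}(B_I+\gamma^hB)$) and using \eqref{eq:d_cond_find_ST} makes the first dominate. For $k>d_I+h$ I would bound $V_k$ over all deterministic policies by the same $f(u)$ telescoping as in Theorem~\ref{thm:find_closest_IS}, now with the terminal reward genuinely present: a policy collecting intermediates at distances $t_1<\dots<t_u$ and reaching $\mc S_T$ at $t_{u+1}$ has, by Assumption~\ref{assump:min_dist_IS}, $t_1\geq d_I$, $t_{l+1}\geq t_l+h$, $t_{u+1}\geq\max(t_u+h,d)$, so its return is at most a unimodal $f(u)$. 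When $B/B_I<1/(1-\gamma^h)$, $f$ is increasing and $\sup_u f(u)=\gamma^{d_I-1}B_I/(1-\gamma^h)$; when $B/B_I\geq 1/(1-\gamma^h)$, the hypothesis $d<d_I+h-1$ removes the increasing part and $\max_u f(u)=\max\{\gamma^{d-1}B,\ \gamma^{d_I-1}(B_I+\gamma^hB)\}$. In each case the corresponding branch of \eqref{eq:d_cond_find_ST} says exactly that $\gamma^{d-1}B$ is the larger term, giving $V_k(s)=\gamma^{d-1}B$; applying the same bounds at $s_{a^-}$ with $D(s_{a^-},\mc S_T)\geq d$, $D(s_{a^-},\mc S_I)\geq d_I-1$ yields $V_k(s_{a^-})<\gamma^{d-2}B=V_k(s_{a^+})$, and the $d_I=1$ inequality falls out of the bound on the value of an intermediate neighbour.

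\textbf{Main obstacle.} The delicate part is the large-$k$ estimate for the incorrect successors $s_{a^-}$, which need not themselves satisfy \eqref{eq:d_cond_find_ST}: one must control how $\max_u f(u)$ behaves when $D(s_{a^-},\mc S_T)$ exceeds $d$ while $D(s_{a^-},\mc S_I)$ stays near $d_I-1$, and check that the interplay of the $\mc S_T$–reaching time $\max(d_I+uh,d)$ with the collection times remains inside the regime pinned down by $d<d_I+h-1$ (this is where one also uses that, near $\mc S_T$, non-intermediate transitions change $D(\cdot,\mc S_T)$ by at most one, so $D(s_{a^-},\mc S_T)\leq d+1$). Secondary bookkeeping: propagation of direct reachability of $\mc S_T$ along the greedy trajectory, and the degenerate cases $d=1$ and $d_I=1$.
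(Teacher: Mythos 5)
Your proposal follows essentially the same route as the paper's proof: reduce to one-step greedy correctness at $s$ plus propagation of \eqref{eq:d_cond_find_ST} along the path (using $d'=d-1$, $d_I'\geq d_I-1$), handle $d\leq k<d_I+h-1$ via the explicit formula of Lemma~\ref{lemma:Vk_Bound_directly_reachable_k<dI+h} together with the implication $\gamma^{d_I}B_I<\gamma^d B$, and handle $k\geq d_I+h-1$ by the same telescoping bound $f(u^\pi)$ over policies as in Theorem~\ref{thm:find_closest_IS}, with the two branches of \eqref{eq:d_cond_find_ST} matching the two monotonicity regimes of $f$. The only quibble is the parenthetical claim $D(s_{a^-},\mc S_T)\leq d+1$, which need not hold in a general deterministic MDP, but it is also unnecessary since the upper bound $f$ only decreases as the distances grow.
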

\begin{proof}
\label{proof:find_closest_TerminalState}
By the update of Q-function in the synchronous value iteration update \eqref{eq:SyncValueIter}, it suffices to show 
\begin{equation}
    \label{eq:qk(s,a+)>qk(s,a-)ToST}
    Q_k(s,a^+) = r(s,a^+,s_{a^+})+\gamma V_k(s_{a^+})>r(s,a^-,s_{a^-})+\gamma V_k(s_{a^-})=Q_k(s,a^-),
\end{equation}
when condition \eqref{eq:d_cond_find_ST} is satisfied. Lemma \ref{lemma:Vk_Bound_directly_reachable_k<dI+h} implies that when $k<d_I+h$, we have 
\begin{equation}
    V_k(s) = \max\{v(k,d_I)B_I,v(k,d)B\},
\end{equation}
where 
\begin{equation}
    v(k,d)=
    \begin{cases}
        \gamma^{d-1}, & \forall k,d\in \bb N^+, d\leq k,\\
        0, & \text{ otherwise}.\\    
    \end{cases}
\end{equation}
Hence we will focus on proving \eqref{eq:qk(s,a+)>qk(s,a-)ToST} in the remaining proof.
\paragraph{When $d\leq k<d_I+h-1$.}
In this case, the conditions in \eqref{eq:d_cond_find_ST} first guarantee it exists a $k$ such that $d\leq k<d_I+h-1$. First, we will show both conditions
\begin{equation*}
    d<d_I+\log_{\frac{1}{\gamma}}\brac{(1-\gamma^h)\frac{B}{B_I}}\; \text{ and }\; d<d_I+\log_{\frac{1}{\gamma}}\paren{\frac{B}{B_I+\gamma^hB}},
\end{equation*}
indicate that
\begin{equation}
    \label{eq:v(k,d_I)B_I<v(k,d)B}
    \gamma^{d_I}B_I<\gamma^dB.
\end{equation}
\begin{itemize}
    \item For $d<d_I+\log_{\frac{1}{\gamma}}\brac{(1-\gamma^h)\frac{B}{B_I}}$, we have 
    \begin{equation}
        \begin{split}
            &d-d_I<\log_{\frac{1}{\gamma}}\brac{(1-\gamma^h)\frac{B}{B_I}} \iff\paren{\frac{1}{\gamma}}^{d-d_I}<(1-\gamma^h)\frac{B}{B_I}\\
            \iff &\gamma^{d_I} B_I<(1-\gamma^h)\gamma^dB\implies\gamma^{d_I} B_I<\gamma^dB.
        \end{split}
    \end{equation}
    \item For $d<d_I+\log_{\frac{1}{\gamma}}\paren{\frac{B}{B_I+\gamma^hB}}$, we have 
\begin{equation}
    \begin{split}
        &d-d_I<\log_{\frac{1}{\gamma}}\paren{\frac{B}{B_I+\gamma^hB}}\iff \paren{\frac{1}{\gamma}}^{d-d_I}<\frac{B}{B_I+\gamma^h B}\\
        \iff& \gamma^{d_I}(B_I+\gamma^hB)<\gamma^dB\implies \gamma^{d_I} B_I<\gamma^dB.
    \end{split}
\end{equation}
\end{itemize}
Now consider a correct action $a^+\in \mc A^+(s)$ and an incorrect action $a^-\in \mc A^-(s)$, we will next show
\begin{equation}
    \label{eq:Qk(s,a+)_value_Q_k(s,a-)_upper_bound}
    Q_k(s,a^+)=v(k,d)B,\; \text{ and }\; Q_k(s,a^-)\leq \max\{\gamma v(k,d_I-1)B_I,\gamma v(k,d)B\}.    
\end{equation}

\begin{itemize}
    \item For $Q_k(s,a^+)$, when $d=1$, we have
    \begin{equation}
        Q_k(s,a^+) = r(s,a,s_{a^+}) + \gamma V_k(s_{a^+}) \overset{(i)}{=} B = v(k,1)B,
    \end{equation}
    where equality $(i)$ holds because $s_{a^+}\in \mc S_{T}$, hence $r(s,a,s_{a^+}) = B$ and $V_k(s_{a^+}) = 0$. 
    
    When $d>1$, we have
    \begin{equation}
        Q_k(s,a^+) = r(s,a,s_{a^+}) + \gamma V_k(s_{a^+}) \overset{(ii)}{=} \gamma v(k,d-1)B \overset{(iii)}{=} v(k,d)B.
    \end{equation}
    Equality $(ii)$ holds because $r(s,a,s_{a^+}) = 0$ and Lemma \ref{lemma:Vk_Bound_directly_reachable_k<dI+h} implies $V_k(s_{a^+}) = v(k,d-1)B$. Equality $(iii)$ holds because when $k\geq d$, $\gamma v(k,d-1) = \gamma^{d-1}=v(k,d)$. Hence, we conclude that $Q_k(s,a^+) = v(k,d)B$.
    \item For $Q_k(s,a^-)$, consider these two following subsets of $\mc A^-(s)$: $a_{i+}^-\in \mc A^+_I(s)\cap\mc A^-(s)$ and $a_{i-}^-\in \mc A\backslash(\mc A^+_I(s)\cup\mc A^+(s))$, where $\mc A^+_I(s)$ is the set of actions that take state $s$ one step closer to $\mc S_I$:
    \begin{equation}
        \mc A^+_I(s) \doteq \{a|a\in \mc A,D(s_a,\mc S_{I})=D(s,\mc S_{I})-1\},
    \end{equation}
    and $D(s,\mc S_{I})$ is defined as
    \begin{equation}
        D(s,\mc S_{I}) \doteq \min_{j\in \mc I_d(s)} D(s,s_{i_j}) = d_I.
    \end{equation}
    We will next show that 
    \begin{equation}
        \label{eq:Qk(s,a-)_upper_bound}
        \begin{split}
            Q_k(s,a_{i+}^-)&\leq  \max\{\gamma v(k,d_I-1)B_I,\gamma v(k,d)B\},\;\forall a_{i+}^-\in \mc A_I^+(s)\cup \mc A^-(s),\\
            Q_k(s,a_{i-}^-)&\leq \max\{\gamma v(k,d_I)B_I,\gamma v(k,d)B\},\;\forall a_{i-}^-\in \mc A\backslash(\mc A^+_I(s)\cup\mc A^+(s)).
        \end{split}
    \end{equation}
    {\bf When $d_I=1$}, we know that $s_{a_{i+}^-}\in \mc S_I$, hence $r(s,a_{i+}^-,s_{a_{i+}^-}) = B_I$. Moreover, when $k<d_I+h$, we have $V_k(s_{a_{i+}^-}) = 0$, this is because $V_0(s_{a_{i+}^-})$ is initialized as $0$ and it takes at least $h$ update of synchronous value iteration for $V_k(s_{a_{i+}^-})$ to be positive but $k<d_I+h-1 = h$. Hence, we know that 
    \begin{equation}
        Q_k(s,a_{i+}^-) = B_I = \gamma v(k,0)B_I\leq \max\{\gamma v(k,0)B_I,\gamma v(k,d)B\}.
    \end{equation}
    As for $Q_k(s,a_{i-}^-)$, we have $r(s,a_{i-}^-,s_{a_{i-}^-}) = 0$ and $V_k(s_{a_{i-}^-})=\max\{v(k,d^\prime_I)B_I,v(k,d^\prime)B\}$, where
    \begin{equation}
        d^\prime_I = D(s_{a_{i-}^-},\mc S_I) \geq d_I,\;d^\prime = D(s_{a_{i-}^-},\mc S_{T}) \geq d.
    \end{equation}
    Hence, we have 
    \begin{equation}
        \begin{split}
            &Q_k(s,a_{i-}^-) = r(s,a_{i-}^-,s_{a_{i-}^-})+\gamma V_{k}(s_{a_{i-}^-})\\
            \overset{(i)}{=}&\gamma\max\{v(k,d_I^\prime)B_I,v(k,d^\prime)B\}\leq\max\{\gamma v(k,1)B_I,\gamma v(k,d)B\},
        \end{split}
    \end{equation}
    where equality $(i)$ holds by Lemma \ref{lemma:Vk_Bound_directly_reachable_k<dI+h}. Therefore, we conclude that \eqref{eq:Qk(s,a-)_upper_bound} holds when $d_I = 1$. 
    
    {\bf When $d_I>1$}, we have $r(s,a_{i+}^-,s_{a_{i+}^-}) = 0$ and $r(s,a_{i-}^-,s_{a_{i-}^-}) = 0$. In this case, let $d^\prime = D(s_{a_{i-}^-},\mc S_{T})$, then
    \begin{equation}
        \begin{split}
            &Q_k(s,a_{i+}^-) = r(s,a_{i+}^-,s_{a_{i+}^-})+\gamma V_k(s_{a_{i+}^-})\\
            \overset{(i)}{=} &\gamma \max\{v(k,d_I-1)B_I,v(k,d^\prime)B\}\leq\max\{\gamma v(k,d_I-1)B_I,v(k,d)B\},
        \end{split}
    \end{equation}
    where equality $(i)$ holds by Lemma \ref{lemma:Vk_Bound_directly_reachable_k<dI+h}. For $Q_k(s,a_{i-}^-)$, we have $r(s,a_{i-}^-,s_{a_{i-}^-}) = 0$ and $V_k(s_{a_{i-}^-})=\max\{v(k,d^{\prime}_I)B_I,v(k,d^{\prime\prime})B\}$, where
    \begin{equation}
        d^\prime_I = D(s_{a_{i-}^-},\mc S_I) \geq d_I,\;d^{\prime\prime} = D(s_{a_{i-}^-},\mc S_{T}) \geq d.
    \end{equation}
    Hence, we have 
    \begin{equation}
        \begin{split}
            &Q_k(s,a_{i-}^-) = r(s,a_{i-}^-,s_{a_{i-}^-})+\gamma V_{k}(s_{a_{i-}^-})\\
            \overset{(i)}{=}&\gamma\max\{v(k,d_I^\prime)B_I,v(k,d^{\prime\prime})B\}\leq\max\{\gamma v(k,d_I)B_I,\gamma v(k,d)B\},
        \end{split}
    \end{equation}
    where equality $(i)$ holds by Lemma \ref{lemma:Vk_Bound_directly_reachable_k<dI+h}. Therefore, we conclude that \eqref{eq:Qk(s,a-)_upper_bound} holds when $d_I > 1$. 
    
    A direct implication of \eqref{eq:Qk(s,a-)_upper_bound} is that
    \begin{equation}
        \begin{split}
            &Q_k(s,a^-)<\max\{Q_k(s,a_{i+}^-),Q_k(s,a_{i-}^-)\}\\
            \leq&\max\{\max\{\gamma v(k,d_I-1)B_I,\gamma v(k,d)B\},\max\{\gamma v(k,d_I)B_I,\gamma v(k,d)B\}\}\\
            = &\max\{\gamma v(k,d_I-1)B_I,\gamma v(k,d)B\}.
        \end{split}
    \end{equation}
\end{itemize}
Therefore, we have shown \eqref{eq:Qk(s,a+)_value_Q_k(s,a-)_upper_bound} for $d\leq k<d_I+h-1$. Hence, combine \eqref{eq:Qk(s,a+)_value_Q_k(s,a-)_upper_bound} and \eqref{eq:v(k,d_I)B_I<v(k,d)B}, we have
\begin{equation}
    \begin{split}
        Q_k(s,a^-)\overset{(i)}{\leq} & \max\{\gamma v(k,d_I-1)B_I,\gamma v(k,d)B\} \leq \max\{\gamma^{d_I-1}B_I,\gamma^dB\}\\
        \leq&\max\{\gamma^{d_I-1}B_I,\gamma^{d-1}B\} \overset{(ii)}{=} \gamma^{d-1}B = v(k,d) = Q_k(s,a^+),
    \end{split}
\end{equation}
where inequality $(i)$ follows \eqref{eq:Qk(s,a+)_value_Q_k(s,a-)_upper_bound} and equality $(ii)$ holds due to \eqref{eq:v(k,d_I)B_I<v(k,d)B}. Hence, know that a greedy action will select $a^+\in \mc A^+(s)$. If we let $d^\prime,d_I^\prime$ denote the distance between $s_{a^+}$ to $\mc S_{T}$ and $\mc S_{I}$, respectively, we have
\begin{equation}
    d^\prime = D(s_{a^+}, \mc S_{T}) = d-1,\;d^\prime_I = D(s_{a^+}, \mc S_{I})\geq d_I-1,
\end{equation}
which implies $d^\prime$ and $d^\prime_I$ still satisfy \eqref{eq:d_cond_find_ST}. Recursively applying the above argument, we conclude that a greedy policy will find the shortest path to $\mc S_{T}$, when $d\leq k<d_I+h-1$.

\paragraph{When $k\geq d_I+h-1$.} To complete the theorem, we need to show when $k\geq d_I+h-1$, $Q_k(s,a^+)>Q_k(s,a^-)$ still holds. We have already shown that 
\begin{equation*}
    Q_k(s,a^+) = v(k,d)B = \gamma^{d-1}B,
\end{equation*}
when $d\leq k<d_I+h-1$, and the Q-function from the synchronous value iteration \eqref{eq:SyncValueIter} update is non-decreasing, hence when $k\geq d_I+h-1$, we know that $Q_k(s,a^+)\geq v(k,d) = \gamma^{d-1}B$ and it suffices to show that 
$\gamma^{d-1}B\geq Q_k(s,a^-)$. Let $\pi$ be a deterministic policy and suppose the agent starting from $s$ will visit the following sequences of intermediate states before reaching $\mc S_{T}$ under policy $\pi$:
\begin{equation}
    \label{eq:IS_sequence}
   \{s_{i_{j^\star_0}},s_{i_{j^\star_1}},\dots,s_{i_{j^\star_{u^\pi}}},s_{i_{j^\star_{u^\pi+1}}}\},
\end{equation}
where we slightly abuse the notation by assuming $s_{i_{j^\star_{u^\pi+1}}}\in \mc S_{T}$. By the synchronous value iteration update \eqref{eq:SyncValueIter}, the Q-function of state action pair $(s,a^-)$ under policy $\pi$ satisfies
\begin{equation}
    \label{eq:discounted_Q_func}
    Q^\pi_k(s,a^-) \leq \underbrace{\gamma^{d_I-1}B_I + \brac{ \sum_{l=1}^{u^\pi-1}\gamma^{d_l-1} }B_I+\gamma^{d_{u^\pi}-1}B}_{\Gamma(\pi)}, 
\end{equation}
where 
\begin{equation}
    \label{eq:dist_bound_for_Q}
    \begin{split}
        d_l &= D(s,s_{i_{j^\star_0}})+\sum_{m=0}^{l-1}D(s_{i_{j^\star_m}},s_{i_{j^\star_{m+1}}})\overset{(i)}{\geq}d_I+lh,\;\forall l = 1,2,\dots, u^\pi+1,
    \end{split}
\end{equation}
where inequality $(i)$ holds because of Assumption \ref{assump:min_dist_IS}. Hence, we know that
\begin{equation}
    \Gamma(\pi) = \gamma^{d_I-1}B_I + \brac{ \sum_{l=1}^{u^\pi-1}\gamma^{d_{l}-1} }B_I+\gamma^{d_{u^\pi}-1}B\overset{(ii)}{\leq}\underbrace{\gamma^{d_I-1}\brac{\sum_{l=0}^{u^\pi-1}\gamma^{l h}}B_I+\gamma^{d_I+u^\pi h-1}B}_{f(u^\pi)},
\end{equation}
where inequality $(ii)$ holds because of \eqref{eq:dist_bound_for_Q} and $\gamma<1$. By the synchronous value iteration update \eqref{eq:SyncValueIter}, the Q-function of state action pair $(s,a^+)$ when $k\geq d_I+h-1$ satisfies $Q_k(s,a^+)\geq \gamma^{d-1}B$, and the Q-function of state action pair $(s,a^-)$ satisfies
\begin{equation}
    Q_k(s,a^-) = \max_\pi Q_k^\pi(s,a^-)\leq \max_\pi f(u^\pi).
\end{equation}
Next we will show $\gamma^{d-1}B>f(u^\pi)$ holds $\forall \pi$ (which directly leads to $Q_k(s,a^+)>Q_k(s,a^-)$) in these following two conditions: 1) $d<d_I+\log_{\frac{1}{\gamma}}\brac{(1-\gamma^h)\frac{B}{B_I}}$ when $\frac{B}{B_I}\in \paren{0,\frac{1}{1-\gamma^h}}$; 2) $ d<d_I+\log_{\frac{1}{\gamma}}\paren{\frac{B}{B_I+\gamma^hB}}$ when $\frac{B}{B_I}\geq \frac{1}{1-\gamma^h}$.

\begin{itemize}
    \item {\bf When $d<d_I+\log_{\frac{1}{\gamma}}\brac{(1-\gamma^h)\frac{B}{B_I}}$ and $\frac{B}{B_I}\in \paren{0,\frac{1}{1-\gamma^h}}$.}
    
    Since $\frac{B}{B_I}\in \paren{0,\frac{1}{1-\gamma^h}}$, we know that $B_I-(1-\gamma^h)B>0$, hence
    \begin{equation}
    \begin{split}
        f(u^\pi) & < f(u^\pi) + \gamma^{d_I+u^\pi h-1}\brac{B_I-(1-\gamma^h)B} \\
        & = \gamma^{d_I-1}\brac{\sum_{l=0}^{u^\pi-1}\gamma^{l h}}B_I+\gamma^{d_I+u^\pi h-1}B+\gamma^{d_I+u^\pi h-1}\brac{B_I-(1-\gamma^h)B}\\
        & = \gamma^{d_I-1}\brac{\sum_{l=0}^{u^\pi}\gamma^{l h}}B_I+\gamma^{d_I+(u^\pi+1)h-1}B = f(u^\pi+1) <\dots < f(\infty) \\
        & = \gamma^{d_I-1}\brac{\sum_{l=0}^\infty\gamma^{l h}}B_I = \frac{\gamma^{d_I-1}}{1-\gamma^h}B_I\overset{(i)}{<}\gamma^{d-1}B,
    \end{split}
\end{equation}
where the last inequality holds because $d<d_I+\log_{\frac{1}{\gamma}}\brac{(1-\gamma^h)\frac{B}{B_I}}$.
    \item {\bf When $ d<d_I+\log_{\frac{1}{\gamma}}\paren{\frac{B}{B_I+\gamma^hB}}$ and $\frac{B}{B_I}\geq \frac{1}{1-\gamma^h}$.}
    
    When $\frac{B}{B_I}>\frac{1}{1-\gamma^h}$, we know that $(1-\gamma^h)B-B_I\geq0$, hence
\begin{equation}
    \begin{split}
        f(u^\pi) & < f(u^\pi) + \gamma^{d_I+(u^\pi-1)h-1}\brac{(1-\gamma^h)B-B_I}\\
        & = \gamma^{d_I-1}\brac{\sum_{l=0}^{u^\pi-1}\gamma^{l h}}B_I+\gamma^{d_I+u^\pi h-1}B+\gamma^{d_I+(u^\pi-1)h-1} \brac{(1-\gamma^h)B-B_I}\\
        &=\gamma^{d_I-1}\brac{\sum_{l=0}^{u^\pi-2}\gamma^{l h}}B_I+\gamma^{d_I+(u^\pi-1)h-1}B=f(u^\pi-1)<\dots<f(1)\\
        &=\gamma^{d_I-1}B_I+\gamma^{d_I+h-1}B\overset{(i)}{\leq}\gamma^{d-1} B. 
    \end{split}
\end{equation}
where the last inequality holds because $ d<d_I+\log_{\frac{1}{\gamma}}\paren{\frac{B}{B_I+\gamma^hB}}$.
\end{itemize}
Hence, we conclude that $\gamma^{d-1}B>f(u^\pi),\forall \pi$. This result implies that a greedy action will select $a^+\in \mc A^+(s)$. If we let $d^\prime,d_I^\prime$ denote the distance between $s_{a^+}$ to $\mc S_{T}$ and $\mc S_{I}$, respectively, we will have
\begin{equation}
    d^\prime = D(s_{a^+}, \mc S_{T}) = d-1,\;d^\prime_I = D(s_{a^+}, \mc S_{I})\geq d_I-1,
\end{equation}
which indicates that $d^\prime$ and $d^\prime_I$ still satisfy \eqref{eq:d_cond_find_ST}. Recursively applying the above argument, we conclude that a greedy policy will find the shortest path to $\mc S_{T}$, when $k\geq d_I+h-1$. Combine this result with the case where $d\leq k<d_I+h-1$, we have shown that the greedy policy finds the shortest path to $\mc S_{T}$, after $k(k>d)$ synchronous value iteration update.
\end{proof}

\section{Auxiliary Lemmas}
\begin{lemma}[$V_k(s)$ with Sparse Rewards]
\label{lemma:Vk_bounds_induction_SR}
Let $\mc M = (\mc S,\mc A, P, r, \gamma)$ be a deterministic MDP with desired terminal states $\mc S_{T}$ and initial state $s_0$. If the reward function $r(\cdot)$ satisfies the sparse reward setting (Table \ref{tab:reward_setting}) and the Q-function is zero-initialized \eqref{eq:zero_init}, $\forall d \in \bb N^+$, let $\mc S^{d}$ denote the set of states whose distance to the terminal state $\mc S_{T}$ is $d$:
\begin{equation*}
    \mc S^{d} = \{s\in\mc S|D(s,\mc S_{T})=d\}.
\end{equation*}
Then $\forall k\in \bb N^+,\forall s^d\in \mc S^d$, $V_k(s^d)$ satisfies:
\begin{equation}
    \label{eq:Value_it_induction_SR}
    V_k(s^d) = \begin{cases}
        \gamma^{d-1}B, & \forall k,d\in \bb N^+, d\leq k,\\
        0, & \text{ otherwise}.\\    
        \end{cases}
\end{equation}
\end{lemma}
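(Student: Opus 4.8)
The plan is to prove \eqref{eq:Value_it_induction_SR} by induction on the iteration count $k$, exploiting the synchronous value iteration recursion \eqref{eq:SyncValueIter} together with two elementary facts about distances in a deterministic MDP that I will use throughout: for any state-action pair $(s,a)$ with $s\notin \mc S_T$ and subsequent state $s_a$, (i) $D(s_a,\mc S_T)\geq D(s,\mc S_T)-1$ (first move to $s_a$, then take a shortest path to $\mc S_T$), and (ii) there exists at least one action $a$ with $D(s_a,\mc S_T)=D(s,\mc S_T)-1$ (the first step of a shortest path from $s$ to $\mc S_T$). I will also repeatedly invoke the absorbing property of $\mc S_T$: the MDP halts upon reaching $\mc S_T$, so $V_k(s_t)=0$ for every $k$ and every $s_t\in\mc S_T$, since these values are never updated.

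For the base case $k=1$, the zero initialization \eqref{eq:zero_init} gives $Q_0(s,a)=r(s,a,s_a)$, hence $V_1(s^d)=\max_{a}r(s,a,s_a)$. If $d=1$, some action reaches $\mc S_T$ with reward $B$, and no reward exceeds $B$, so $V_1(s^1)=B=\gamma^{0}B$; if $d\geq 2$, no subsequent state can lie in $\mc S_T$ (otherwise $D(s^d,\mc S_T)\leq 1$), so all rewards are $0$ and $V_1(s^d)=0$. This matches \eqref{eq:Value_it_induction_SR}, since $d\leq 1\iff d=1$.

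For the inductive step I would assume \eqref{eq:Value_it_induction_SR} at iteration $k$ and split on $d$. When $d=1$, the action $a^+$ reaching $\mc S_T$ yields $Q_k(s^1,a^+)=B+\gamma\cdot 0=B$, while every other action $a$ has zero reward and, by the hypothesis, $\gamma V_k(s_a)=\gamma^{D(s_a,\mc S_T)}B\,\indicator{D(s_a,\mc S_T)\leq k}\leq\gamma B<B$, so $V_{k+1}(s^1)=B$, consistent with $d\leq k+1$. When $d\geq 2$, no successor is terminal, so all rewards vanish and $V_{k+1}(s^d)=\gamma\max_{a}V_k(s_a)$; by (i)--(ii) the successors $s_a$ realize distances $d'\geq d-1$ with the value $d'=d-1$ attained, and since $d'\mapsto\gamma^{d'-1}B\,\indicator{d'\leq k}$ is nonincreasing in $d'$, the maximum equals $V_k(s^{d-1})=\gamma^{d-2}B\,\indicator{d-1\leq k}$; multiplying by $\gamma$ gives $V_{k+1}(s^d)=\gamma^{d-1}B\,\indicator{d\leq k+1}$, completing the induction.

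The one step that needs genuine care is the monotonicity claim in the case $d\geq 2$: I must verify that the successor minimizing $D(\cdot,\mc S_T)$ is also the one maximizing $V_k(\cdot)$, which reduces to checking that $d'\mapsto\gamma^{d'-1}B\,\indicator{d'\leq k}$ is nonincreasing — it is positive and strictly decreasing for $d'\leq k$ and zero for $d'>k$, so the smallest feasible distance $d-1$ indeed gives the largest value. Everything else is bookkeeping of the recursion and of the absorbing behavior of $\mc S_T$, so I expect no further obstacle.
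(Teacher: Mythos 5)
Your proposal is correct and follows essentially the same route as the paper: induction on $k$ through the synchronous value iteration recursion, using that a successor realizing distance $d-1$ always exists and that $d'\mapsto\gamma^{d'-1}B\,\indicator{d'\leq k}$ is nonincreasing. If anything, your case split on $d$ is slightly more complete than the paper's inductive step, which only verifies the new case $d=k+1$ explicitly and leaves the persistence of the values for $d\leq k$ and $d>k+1$ implicit.
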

\begin{proof}
We will use induction to prove the results. Recall the synchronous value iteration update \eqref{eq:SyncValueIter}:
\begin{equation*}
        V_{k+1}(s)=\max_{a\in \mc A}\left\{r(s,a,s_a)+\gamma V_k(s_a)\right\},\;\forall s\in \mc S.
\end{equation*}
We first check the induction condition \eqref{eq:Value_it_induction_SR} for the initial case $k=1$. 

\paragraph{Initial Condition for $k=1$.}
Since the agent only receives reward $r(s,a,s_a) = B$ when $s_a \in\mc S_{T}$, by the value iteration update, we have:
\begin{equation}
    V_{1}(s) = 
    \begin{cases}
        B, & \text{ if } s \in \mc S^1,\\
        0, &\text{ otherwise},
    \end{cases}
\end{equation}
hence, the initial condition is verified. 

\paragraph{Induction.}
Next, suppose the condition \eqref{eq:Value_it_induction_SR} holds for $1,2,\dots,k$, we will show it also holds for $k+1$. In this case, we only need to verify $V_{k+1}(s^{d})=\gamma^{k}B$ for $d = k+1$, because the induction assumption already implies
\begin{equation*}
    V_{k+1}(s^d) = \begin{cases}
        \gamma^{d-1}B, & \forall k,d\in \bb N^+, d\leq k,\\
        0, & \text{ otherwise}.\\    
        \end{cases}
\end{equation*}
Note that when $d = k+1>1$, we have 
\begin{equation}
    r(s^{k+1},a,s^{k+1}_a) = 0,\;\forall a\in\mc A,\; \text{ and }\;V_{k}(s^{k+1}_a) \overset{(i)}{\leq} V_k(s^k) = \gamma^{k-1} B,
\end{equation}
where $s^{k+1}_{a}$ is the subsequent state of state-action pair $(s^{k+1},a)$, and inequality $(i)$ becomes equality if only if $a$ is a correct action $a\in \mc A^+(s)$ (see Definition \ref{def:correct_actions}). Hence, we know that 
\begin{equation}
    V_{k+1}(s^{k+1})=\max_{a\in \mc A}\left\{r(s^{k+1},a,s_a)+\gamma V_k(s^{k+1}_a)\right\}=\gamma^{k}B.
\end{equation}
As a result, we conclude that 
\begin{equation*}
    V_{k+1}(s^d) = \begin{cases}
        \gamma^{d-1}B, & \forall k,d\in \bb N^+, d\leq k+1,\\
        0, & \text{ otherwise},\\    
        \end{cases}
\end{equation*}
which completes the proof.
\end{proof}

\begin{lemma}[$V_k(s)$ with Intermediate Rewards Setting (a)]
\label{lemma:Vk_bounds_induction_IR_a}
Let $\mc M = (\mc S,\mc A, P, r, \gamma)$ be a deterministic MDP with initial state $s_0$, intermediate states $\mc S_{I} = \{s_{i_1},s_{i_2},\dots,s_{i_N}\}$, and terminal states $\mc S_{T}$. Suppose $\mc M$ satisfies Assumption \ref{assump:OneWayIntermediateStates}, and \ref{assump:diff_settings_IS} (a), if the reward function $r(\cdot)$ follows the intermediate reward setting (Table \ref{tab:reward_setting}) and the Q-function is zero-initialized \eqref{eq:zero_init}, $\forall j \in [N+1]$, let $\mc S_{[j-1,j)}$ denote the set of states such that
\begin{equation}
    \mc S_{[j-1,j)} = 
    \begin{cases}
        \{s\in \mc S|D(s,s_{i_{1}})<\infty\}& \text{ if } j = 1,\\
        \{s\in \mc S|D(s,s_{i_{j}})=\infty,D(s,s_{i_{j+1}})<\infty\}& \text{ if } j = 2,3,\dots,N,\\
        \{s\in \mc S|D(s,s_{i_{N}})=\infty,D(s,\mc S_{T})<\infty\}   & \text{ if } j = N+1.
    \end{cases}
\end{equation}
Given a state $s^{\mb d}\in \mc S^{\mb d}_{[j-1,j)}$, where $\mb d = [d_j,d_{j+1},\dots,d_N,d_{N+1}]^\top\in \bb R^{n-j+2}$ is a vector, such that $d_j<d_{j+1}<\dots<d_n<d_{N+1}$ denote the distance from $s^{\mb d}$ to $s_{i_j},s_{i_{j+1}},\dots,s_{i_N},\mc S_{T}$, respectively:
\begin{equation}
        d_l = D(s^{\mb d},s_{i_l}),\;\forall l=j,j+1,\dots,n, \;\text{ and }\;d_{N+1} = D(s^{\mb d},\mc S_{T}).
\end{equation}
Then $\forall k\in \bb N^+,\forall s^{\mb d}\in \mc S^{\mb d}$, $V_k(s^{\mb d})$ satisfies the following conditions:
\begin{equation}
    \label{eq:Value_it_induction_IR_a}
    V_k(s^{\mb d}) = \sum_{l=j}^{N+1} v(k,d_l)B_l,
\end{equation}
where
\begin{equation}
    \label{eq:Value_it_induction_IR_a_v(k,d)}
    v(k,d_l)=
    \begin{cases}
        \gamma^{d_l-1}, & \forall k,d\in \bb N^+, d_l\leq k,\forall l\in [N+1],\\
        0, & \text{ otherwise},\\    
    \end{cases}
    \;\text{ and }\;
    B_l = \begin{cases}
        B_I,&\text{ if }\; l\in [N],\\
        B,&\text{ if }\; l = N+1.
    \end{cases}
\end{equation}
\end{lemma}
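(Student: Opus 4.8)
The plan is to establish \eqref{eq:Value_it_induction_IR_a} by induction on $k$, in the same spirit as Lemma \ref{lemma:Vk_bounds_induction_SR} but bookkeeping the contribution of every downstream landmark at once. The one structural fact that drives everything is a consequence of Assumption \ref{assump:diff_settings_IS} (a) together with the one-way property (Assumption \ref{assump:OneWayIntermediateStates}): for any $s$ in the block $\mc S_{[j-1,j)}$, every path from $s$ to a later landmark $s_{i_l}$ ($l\geq j$) or to $\mc S_T$ must first visit $s_{i_j}$, so the distances decompose as $d_l = d_j + D(s_{i_j},s_{i_l})$ for $l\geq j$ and $d_{N+1}=d_j+D(s_{i_j},\mc S_T)$, and in particular $d_j<d_{j+1}<\dots<d_{N+1}$. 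Hence $\mc A^+(s)$ (Definition \ref{def:correct_actions}) coincides with the set of actions that move the agent one step along a shortest path toward $s_{i_j}$, and such a move decrements \emph{all} of $d_j,\dots,d_{N+1}$ by one simultaneously; this is the action whose Q-value I will track through the induction.

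First I would dispose of the base case $k=1$. Since $V_0\equiv 0$, $V_1(s)=\max_a r(s,a,s_a)$, which is $B_I$ precisely when $d_j=1$ (the agent can step onto $s_{i_j}$, and onto no other landmark, by the decomposition above), is $B$ in the block $j=N+1$ precisely when $d_{N+1}=1$, and is $0$ otherwise; this equals $\sum_{l=j}^{N+1}v(1,d_l)B_l$ because at most one of the strictly increasing $d_l$ can equal $1$. For the inductive step I assume \eqref{eq:Value_it_induction_IR_a} for $1,\dots,k$ and split on whether $d_j>1$ or $d_j=1$. If $d_j>1$, every successor $s_a$ of $s$ again lies in $\mc S_{[j-1,j)}$ and carries reward $0$, and the successor reached by a correct action has distance vector $\mb d-\mb 1$, so by the inductive hypothesis and the elementary identity $\gamma\,v(k,d-1)=v(k+1,d)$ we get $V_{k+1}(s)=\gamma\sum_{l=j}^{N+1}v(k,d_l-1)B_l=\sum_{l=j}^{N+1}v(k+1,d_l)B_l$. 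If $d_j=1$, a correct action lands on $s_{i_j}$, which lies in the next block $\mc S_{[j,j+1)}$ with distance vector $[d_{j+1}-1,\dots,d_{N+1}-1]$, and collects reward $B_I$; using the hypothesis on $V_k(s_{i_j})$, the same identity, and $B_I=v(k+1,1)B_I$, this gives $V_{k+1}(s)=\sum_{l=j}^{N+1}v(k+1,d_l)B_l$ as well.

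The substantive point --- and the only place the argument could go wrong --- is showing that the correct action actually attains the maximum in \eqref{eq:SyncValueIter}, i.e.\ that no $a^-\in\mc A^-(s)$ yields a larger Q-value. Here I would observe that any incorrect action produces a successor $s_a$ with $D(s_a,s_{i_j})\geq d_j$ (it cannot jump ahead of $s_{i_j}$ without passing through it, and cannot fall back behind $s_{i_{j-1}}$ by the one-way property), hence a distance vector $\mb d'$ with $d'_l\geq d_l$ for every $l$; since $v(k,\cdot)$ is nonincreasing in its distance argument and $\gamma<1$, this forces $\gamma V_k(s_a)\leq\sum_l\gamma\,v(k,d_l)B_l\leq\sum_l v(k+1,d_l)B_l$, exactly the value achieved by the correct action. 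The remaining work is purely bookkeeping rather than conceptual: keeping straight the degenerate block $j=N+1$ (where the statement collapses to Lemma \ref{lemma:Vk_bounds_induction_SR} and $V_k(\mc S_T)=0$ because the MDP halts there), the transition at $d_j=1$ where a landmark is consumed and the agent changes block, and checking in each case that $v(k+1,d_l)$ never switches on a reward whose landmark is still more than $k+1$ steps away. Assembling these cases closes the induction and proves the lemma.
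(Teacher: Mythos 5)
Your proposal is correct and follows essentially the same route as the paper's proof: induction on $k$ through the Bellman update, computing the correct-action Q-value exactly via the identity $\gamma\,v(k,d-1)=v(k+1,d)$ and bounding incorrect actions using the one-way structure so that all entries of the successor's distance vector are no smaller. The only difference is organizational --- you split the inductive step on $d_j=1$ versus $d_j>1$ and shift the whole distance vector, whereas the paper isolates the landmark whose distance equals $k+1$ --- but the underlying argument is the same.
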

\begin{proof}
We will use induction to prove the results. Recall the synchronous value iteration update \eqref{eq:SyncValueIter}:
\begin{equation*}
    \begin{split}
        V_{k+1}(s)&=\max_{a\in \mc A}\left\{r(s,a,s_a)+\gamma V_k(s_a)\right\},\;\forall s\in \mc S.\\
    \end{split}
\end{equation*}
We first check the induction condition \eqref{eq:Value_it_induction_IR_a} for the initial case $k=1$.

\paragraph{Initial Condition for $k=1$.} 
When $k=1$, the value function of states $s$ {\em before} reaching $s_{i_j}$ will not be affected by the rewards from $s_{i_{j+1}},s_{i_{j+2}},\dots,s_{i_{N}},\mc S_{T}$ and will only be updated using the intermediate reward from $s_{i_j}$. As similarly proven for the sparse reward case (Lemma \ref{lemma:Vk_bounds_induction_SR}), we have
\begin{equation}
    V_{1}(s) = 
    \begin{cases}
        B_{I_l}, & \text{ if }\;D(s, s_{i_l}) = 1,\forall l\in [N+1],\\
        0, &\text{ otherwise},
    \end{cases}
\end{equation}
which implies $V_1(s^{\mb d}) = \sum_{l=j}^{N+1} v(1,d_l)B_l$,
hence, the initial condition is verified.

\paragraph{Induction.}
Next, suppose the conditions \eqref{eq:Value_it_induction_IR_a} hold for $1,2,\dots,k$, we will show they also hold for $k+1$. By the induction assumption, we know that 
\begin{equation*}
    V_k(s^{\mb d}) = \sum_{l=j}^{N+1} v(k,d_l)B_l,
\end{equation*}
holds for $1,2,\dots,k$, therefore we only need to show the induction condition \eqref{eq:Value_it_induction_IR_a} holds when it exists $l^\prime\in \{j,j+1,\dots,N+1\}$, such that $d_{l^\prime}=k+1$, because $v(k,d_{l^\prime})$ remains the same, $\forall d_{l^\prime}\neq k+1$ when we change $k$ to $k+1$. Suppose $s^{\mb d}\in \mc S_{[j-1,j)}$ satisfies $D(s^{\mb d},s_{i_{l^\prime}}) = k+1$, since $d_{j}<d_{j+1}<\dots<d_{l^\prime} = k+1$, we will discuss the following cases.
\begin{itemize}
    \item {\bf When $j < N+1$ and $l^\prime < N+1$.} For a correct action $a^+\in \mc A^+(s^{\mb d})$, we have
    \begin{equation}
        \begin{split}
        &r(s^{\mb d},a^+,s^{\mb d}_{a^+})+\gamma V_k(s^{\mb d}_{a^+})\\
        =& 
        \begin{cases}
            B_I+\gamma V_k(s^{\mb d}_{a^+})=B_I+\gamma\sum_{l=j+1}^{l^\prime-1}\gamma^{d_l-2}B_I+\gamma\cdot\gamma^{k-1}B,&\text{ if }\;d_j=1,\\
            \gamma V_k(s_{a^+}^d) = \gamma \sum_{l=j}^{l^\prime -1}\gamma^{d_l-2}B_I+\gamma\cdot \gamma^{k-1}B,&\text{ if }\;d_j>1.
        \end{cases}\\
        =&\sum_{l=j}^{l^\prime-1}\gamma^{d_l-1}B_I+\gamma^k B_I= \sum_{l=j}^{l^\prime}v(k+1,d_l)B_I = \sum_{l=j}^{N+1}v(k+1,d_l)B_l,  
    \end{split}
    \end{equation}
    where the last equality holds because of \eqref{eq:Value_it_induction_IR_a_v(k,d)}. For an incorrect action $a^{-}\in \mc A^-(s^{\mb d})$ we have
    \begin{equation}
        \begin{split}
            r(s^{\mb d},a^-,s^{\mb d}_{a^-})+\gamma V_k(s^{\mb d}_{a^-}) \overset{(i)}{=} \gamma V_k(s^{\mb d}_{a^-})\overset{(ii)}{\leq} \sum_{l=j}^{l^\prime-1}\gamma^{d_l}B_I,    
        \end{split}
    \end{equation}
    where equality $(i)$ holds because $r(s^{\mb d},a^-,s_{a-}^{\mb d}) = 0$ and inequality $(ii)$ holds because of $\gamma<1$. Since $k+1>1$, we know that $r(s^{\mb d},a,s^{\mb d}_a) = 0$ and hence
    \begin{equation}
       r(s^{\mb d},a^-,s^{\mb d}_{a^-})+\gamma V_k(s^{\mb d}_{a^-}) < r(s^{\mb d},a^+,s^{\mb d}_{a^+})+\gamma V_k(s^{\mb d}_{a^+}).
    \end{equation}
    By the value iteration update \eqref{eq:SyncValueIter}, we know that 
    \begin{equation}
        \begin{split}
            V_{k+1}(s)&=\max_{a\in \mc A}\left\{r(s,a,s_a)+\gamma V_k(s_a)\right\}\\
            &=r(s^{\mb d},a^+,s^{\mb d}_{a^+})+\gamma V_k(s^{\mb d}_{a^+}) = \sum_{l=j}^{N+1}v(k+1,d_l)B_l,
        \end{split}
    \end{equation}
    which completes the induction for this case.
    \item {\bf When $j < N+1$ and $l^\prime = N+1$.} For a correct action $a^+\in \mc A^+(s^{\mb d})$, we have
    \begin{equation}
        \begin{split}
        &r(s^{\mb d},a^+,s^{\mb d}_{a^+})+\gamma V_k(s^{\mb d}_{a^+})\\
        =& 
        \begin{cases}
            B_I+\gamma V_k(s^{\mb d}_{a^+})=B_I+\gamma\sum_{l=j+1}^{N}\gamma^{d_l-2}B_I+\gamma\cdot\gamma^{k-1}B,&\text{ if }\;d_j=1,\\
            \gamma V_k(s_{a^+}^d) = \gamma \sum_{l=j}^{N}\gamma^{d_l-2}B_I+\gamma\cdot \gamma^{k-1}B,&\text{ if }\;d_j>1.
        \end{cases}\\
        =&\sum_{l=j}^{N}\gamma^{d_l-1}B_I+\gamma^k B= \sum_{l=j}^{N+1}v(k+1,d_l)B_l.  
    \end{split}
    \end{equation}
    For an incorrect action $a^{-}\in \mc A^-(s^{\mb d})$ we have
    \begin{equation}
        \begin{split}
            r(s^{\mb d},a^-,s^{\mb d}_{a^-})+\gamma V_k(s^{\mb d}_{a^-}) = \gamma V_k(s^{\mb d}_{a^-})\leq \sum_{l=j}^{N}\gamma^{d_l}B_I.    
        \end{split}
    \end{equation}
    Since $k+1>1$, we know that $r(s^{\mb d},a,s^{\mb d}_a) = 0$ and hence
    \begin{equation}
        r(s^{\mb d},a^-,s^{\mb d}_{a^-})+\gamma V_k(s^{\mb d}_{a^-}) < r(s^{\mb d},a^+,s^{\mb d}_{a^+})+\gamma V_k(s^{\mb d}_{a^+}).
    \end{equation}
    By the value iteration update \eqref{eq:SyncValueIter}, we know that 
    \begin{equation}
        \begin{split}
            V_{k+1}(s)&=\max_{a\in \mc A}\left\{r(s,a,s_a)+\gamma V_k(s_a)\right\}\\
            &=r(s^{\mb d},a^+,s^{\mb d}_{a^+})+\gamma V_k(s^{\mb d}_{a^+}) = \sum_{l=j}^{N+1}v(k+1,d_l)B_l,
        \end{split}
    \end{equation}
    which completes the induction for this case.
    \item {\bf When $j = N+1$ and $l^\prime = N+1$.} For a correct action $a^+\in \mc A^+(s^{\mb d})$, we have
    \begin{equation}
        \begin{split}
        &r(s^{\mb d},a^+,s^{\mb d}_{a^+})+\gamma V_k(s^{\mb d}_{a^+}) = \gamma^k B= v(k+1,d_l)B_l.  
    \end{split}
    \end{equation}
    For an incorrect action $a^{-}\in \mc A^-(s^{\mb d})$ we have
    \begin{equation}
        \begin{split}
            r(s^{\mb d},a^-,s^{\mb d}_{a^-})+\gamma V_k(s^{\mb d}_{a^-}) \overset{(i)}{=} 0,    
        \end{split}
    \end{equation}
    where equality $(i)$ holds because Definition \ref{def:correct_actions} implies that $D(s^{\mb d}_{a^-},\mc S_{T})\geq D(s^{\mb d},\mc S_{T})= k+1$. Since $k+1>1$, we know that $r(s^{\mb d},a,s^{\mb d}_a) = 0$ and hence
    \begin{equation}
        r(s^{\mb d},a^-,s^{\mb d}_{a^-})+\gamma V_k(s^{\mb d}_{a^-}) < r(s^{\mb d},a^+,s^{\mb d}_{a^+})+\gamma V_k(s^{\mb d}_{a^+}).
    \end{equation}
    By the value iteration update \eqref{eq:SyncValueIter}, we know that 
    \begin{equation}
        \begin{split}
            V_{k+1}(s)&=\max_{a\in \mc A}\left\{r(s,a,s_a)+\gamma V_k(s_a)\right\}\\
            &=r(s^{\mb d},a^+,s^{\mb d}_{a^+})+\gamma V_k(s^{\mb d}_{a^+}) = v(k+1,d_{N+1})B,
        \end{split}
    \end{equation} 
    which completes the induction for this case.
\end{itemize}
Hence, we know the induction condition \eqref{eq:Value_it_induction_IR_a} holds for $k+1$, which completes the proof.
\end{proof}

\begin{lemma}[$V_k(s)$ when $\mc S_{T}$ is Directly Reachable]
\label{lemma:Vk_Bound_directly_reachable_k<dI+h}
Let $\mc M = (\mc S,\mc A, P, r, \gamma)$ be a deterministic MDP with initial state $s_0$, intermediate states $\mc S_{I} = \{s_{i_1},s_{i_2},\dots,s_{i_N}\}$, and terminal states $\mc S_{T}$. Suppose $\mc M$ satisfies Assumption \ref{assump:OneWayIntermediateStates}, \ref{assump:diff_settings_IS} (b), and \ref{assump:min_dist_IS}, if the reward function $r(\cdot)$ follows the intermediate reward setting (Table \ref{tab:reward_setting}) and the value function and Q-function are zero-initialized \eqref{eq:zero_init}, $\forall s \in \mc S\backslash\mc S_{T}$, if $\mc S_{T}$ is directly reachable from $s$ and suppose
\begin{equation}
    d = D(s,\mc S_{T})\; \text{ and }\; d_I = D(s,\mc S_{I}) \doteq \min_{j\in \mc I_d(s)} D(s,s_{i_j}),
\end{equation}
then after $k\leq d_I+h$ synchronous value iteration updates \eqref{eq:SyncValueIter}, the value function $V_k(s)$ satisfies 
\begin{equation}
    \label{eq:Vk(s)_ST+_directly_reachable}
    V_k(s) = \max\{v(k,d_I)B_I,v(k,d)B\},
\end{equation}
where 
\begin{equation}
    \label{eq:v(k,d)_def}
    v(k,d)=
    \begin{cases}
        \gamma^{d-1}, & \forall k,d\in \bb N, d\leq k,\\
        0, & \text{ otherwise}.\\    
    \end{cases}
\end{equation}
\end{lemma}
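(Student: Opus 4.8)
The plan is to prove the claim by induction on $k$, in the style of Lemmas \ref{lemma:Vk_bounds_induction_SR} and \ref{lemma:Vk_bounds_induction_IR_a}, the guiding intuition being that while the horizon $k$ stays below $d_I+h$ the value at $s$ is influenced by exactly two reward sources: the closest directly reachable intermediate state $s_{i_{j^\star}}$ (at distance $d_I$, reward $B_I$) and the terminal set $\mc S_T$ (at distance $d$, reward $B$). Indeed, by Assumption \ref{assump:min_dist_IS} any trajectory leaving $s$ that collects a \emph{different} positive reward — a second intermediate reward, or a terminal reward reached only after visiting some intermediate state — must have length at least $d_I+h$ (the first directly reachable intermediate state is at distance $\ge d_I$, and from any intermediate state everything rewarding is at distance $\ge h$), whereas a trajectory that hits $\mc S_T$ before any intermediate state simply stops there. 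Hence, as long as $k<d_I+h$, no trajectory of length $\le k$ out of $s$ collects more than one positive reward, and the best it can do is reach $s_{i_{j^\star}}$ in exactly $d_I$ steps (value $\gamma^{d_I-1}B_I$, achievable iff $d_I\le k$) or reach $\mc S_T$ in exactly $d$ steps (value $\gamma^{d-1}B$, achievable iff $d\le k$); taking the better of the two is precisely $\max\{v(k,d_I)B_I,v(k,d)B\}$.

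To make this rigorous I would first record the standard fact that, from the zero initialization \eqref{eq:zero_init}, $V_k(s)$ equals the maximal $\gamma$-discounted return over all trajectories from $s$ of length at most $k$, truncated upon first hitting $\mc S_T$; this follows by unrolling the synchronous update \eqref{eq:SyncValueIter}. Then the counting argument above — which uses only Definition \ref{def:DirectlyRechableIS} and Assumption \ref{assump:min_dist_IS} — finishes the proof. Alternatively, one can argue directly by induction on $k$: the base case $k=1$ is immediate since $V_1(s)=\max_a r(s,a,s_a)$ equals $B$ when $d=1$, $B_I$ when $d_I=1$, and $0$ otherwise, matching the formula because $v(1,1)=1$; and for the inductive step one expands $V_{k+1}(s)=\max_a\{r(s,a,s_a)+\gamma V_k(s_a)\}$, classifies each action as moving one step toward $\mc S_T$, one step toward $\mc S_I$, or neither, and shows — via Lemma \ref{lemma:Vk_bounds_induction_SR} applied to auxiliary sparse-reward MDPs in which either $\mc S_T$ or the closest reachable intermediate state is the only rewarding/terminal state — that a toward-$\mc S_T$ action contributes $v(k+1,d)B$, a toward-$\mc S_I$ action contributes $v(k+1,d_I)B_I$, and every other action yields a strictly smaller value by $\gamma<1$.

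The main obstacle is the bookkeeping in the direct induction: a successor of $s$ that is closer to $\mc S_I$ need not itself have $\mc S_T$ directly reachable, so the inductive hypothesis as stated does not transfer verbatim, and when $d_I=1$ the toward-$\mc S_I$ successor is an intermediate state whose value must be argued to still be $0$ (again because its nearest reward sits at distance $\ge h$ and the horizon has not yet reached $h$). The clean way to sidestep this is to prove the slightly more general statement — valid at every non-terminal $s$, with $d_I$ and $d$ interpreted as the distances to the nearest reachable intermediate reward and the nearest terminal reward — which is exactly what the finite-horizon reduction of the previous paragraph delivers. The only remaining details are the edge cases $d=1$ and $d_I=1$, and confirming that the cutoff the argument truly needs is $k<d_I+h$ (equivalently $k\le d_I+h-1$, which is the form in which the lemma is invoked in the proof of Theorem \ref{thm:find_closest_TerminalState}).
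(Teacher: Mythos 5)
Your proposal is correct, and your primary route is genuinely different from the paper's. The paper proves this lemma by the second method you sketch as an ``alternative'': a direct induction on $k$ with the three-way classification of actions into $a^+\in\mc A^+(s)$, $a_{i+}^-\in\mc A_I^+(s)\backslash\mc A^+(s)$, and the rest, split further into the cases $d_I\leq k+1,d=1$; $d_I\leq k+1,d>1$; and $d_I>k+1$. Your main argument instead unrolls the zero-initialized synchronous update into the optimal $k$-horizon return and then counts reward events: since the first intermediate state on any trajectory out of $s$ is directly reachable (hence at distance $\geq d_I$) and any second rewarding state lies at distance $\geq h$ beyond it by Assumption \ref{assump:min_dist_IS}, no trajectory of length $<d_I+h$ collects two positive rewards, and the best single-reward options are exactly $\gamma^{d_I-1}B_I$ and $\gamma^{d-1}B$ subject to $d_I\leq k$, $d\leq k$. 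This buys two things over the paper's argument. First, it is stated for a quantity ($d_I$, $d$ as distances to the nearest rewarding states) that is well defined at every non-terminal state, which repairs the real gap you flag: in the paper's induction the hypothesis is applied to successors $s_a$ for which $\mc S_T$ need not be directly reachable, so the lemma as literally stated does not transfer verbatim, and the paper does not address this. Second, your counting argument makes transparent why the cutoff must be $k<d_I+h$ rather than the $k\leq d_I+h$ in the lemma's statement (at $k=d_I+h$ a trajectory can collect a second reward), an off-by-one you correctly note is resolved in the paper's favor only because Theorem \ref{thm:find_closest_TerminalState} invokes the lemma with $k<d_I+h$. One point to make explicit when writing this up: if $d\leq k<d_I+h$ then the distance-$d$ path to $\mc S_T$ cannot pass through an intermediate state (that would force $d\geq d_I+h$), so the terminal term really is attained by a direct path and contributes $\gamma^{d-1}B$ with no intermediate reward en route; and if $d>k$ the terminal term vanishes anyway, so the formula is consistent in both regimes.
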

\begin{proof}
Similar to the definition of $\mc A^+(s)$, let $\mc A_I^+(s)$ denotes the set of actions that lead $s$ one step closer to $\mc S_I$: 
\begin{equation}
    \label{eq:defAI+}
    \mc A^+_I(s) = \{a|a\in \mc A,D(s_a,\mc S_{I})=D(s,\mc S_{I})-1\}.
\end{equation}
We will use induction to prove the results. Recall the synchronous value iteration update \eqref{eq:SyncValueIter}:
\begin{equation*}
    \begin{split}
        V_{k+1}(s)&=\max_{a\in \mc A}\left\{r(s,a,s_a)+\gamma V_k(s_a)\right\},\;\forall s\in \mc S.\\
    \end{split}
\end{equation*}
We first check the induction condition \eqref{eq:Vk(s)_ST+_directly_reachable} for the initial case $k=1$.

\paragraph{Initial Condition for $k=1$.}
When $k=1$, we know that 
\begin{equation}
    \begin{split}
        V_1(s) &= \max_a\{r(s,a,s_a)+\gamma V_k(s_a)\}\\
        &=\begin{cases}
            \max\{B_I,B\},&\;\text{if }d_I=d=1,\\
            B_I,&\;\text{if }d_I=1,d>1,\\
            B,&\;\text{if }d=1,d_I>1,\\
            0,&\;\text{otherwise }
        \end{cases}\\
        &=\max\{v(1,d_I)B_I,v(1,d)B\},
    \end{split}
\end{equation}
which verifies the initial condition \eqref{eq:Vk(s)_ST+_directly_reachable}.

\paragraph{Induction.}
We will discuss the following cases: (a) $d_I\leq k+1,d=1$, (b) $d_I\leq k+1,d>1$, and (c) $d_I>k+1$.

\paragraph{(a) When $d_I\leq k+1,d=1$.}
In this case, we consider these following actions:
\begin{equation}
    a^+\in \mc A^+(s),\; a_{i+}^-\in \mc A_I^+(s)\backslash \mc A^+(s),\; a_{i-}^-\in \mc A(s)\backslash(A_I^+(s)\cup\mc A^+(s)).
\end{equation}
\begin{itemize}
    \item {\bf For any $a^+\in \mc A^+(s)$.} In this case, $d=1$ implies that $s_{a^+}\in \mc S_{T}$. Hence we have
    \begin{equation}
        \label{eq:dI=k+1_d=1_a+}
        r(s,a^+,s_{a^+})= B,\;V_k(s_{a^+}) = 0\implies r(s,a^+,s_{a^+})+\gamma V_k(s_{a^+}) = B,
    \end{equation}
    
    \item {\bf For any $a^-_{i+}\in \mc A_I^+(s)\backslash\mc A^+(s)$.} In this case, let 
    \begin{equation}
        d^\prime = D(s_{a_{i+}^-},\mc S_{T}),
    \end{equation}
    by the definition of $\mc A_I^+(s)$ \eqref{eq:defAI+}, we know that 
    \begin{equation}
        d^\prime = D(s_{a^-_{i+}},\mc S_{T})\geq d = 1,\; D(s_{a^-_{i+}},\mc S_{I}) = D(s,\mc S_{I}) - 1 = d_I-1 \leq k.
    \end{equation}
    Hence, the induction condition \eqref{eq:Vk(s)_ST+_directly_reachable} implies that 
    \begin{equation}
        V_k(s_{a_{i+}^-}) = \max\{v(k,d_I-1)B_I,v(k,d^\prime)B\}.
    \end{equation}
    If $d_I >1$, we know that $r(s,a,s_{a_{i+}^-})=0$, and hence
    \begin{equation}
        \label{eq:dI=k+1_d=1_a-i+;dI>1}
        \begin{split}
            &r(s,a_{i+}^-,s_{a_{i+}^-})+\gamma V_k(s_{a^-_{i+}})\\
            =& \gamma\max\{v(k,d_I-1)B_I,v(k,d^\prime)B\} \overset{(i)}{=} \max\{v(k+1,d_I)B_I,\gamma v(k,d^\prime)B\},
        \end{split}
    \end{equation}
    where equality $(i)$ holds because $\gamma v(k,d_I - 1) = \gamma v(k,k) = \gamma^k = v(k+1,k+1) = v(k+1,d_I)$. When $d_I=1$, we have
    \begin{equation}
        \label{eq:dI=k+1_d=1_a-i+;dI=1}
        r(s,a_{i+}^-,s_{a_{i+}^-})+\gamma V_k(s_{a^-_{i+}}) = B_I.
    \end{equation}
    
    \item {\bf For any $a^-_{i-}\in \mc A\backslash(\mc A_I^+(s)\cup\mc A^+(s))$.} In this case, let 
    \begin{equation}
        d^{\prime\prime} = D(s_{a_{i+}^-},\mc S_{T})\geq d = 1,\;d_I^{\prime} = D(s_{a_{i+}^-},\mc S_{I})\geq d_I=k+1,
    \end{equation}
    the induction condition \eqref{eq:Vk(s)_ST+_directly_reachable} implies that 
    \begin{equation}
        V_k(s_{a_{i+}^-}) = \max\{v(k,d_I^\prime)B_I,v(k,d^{\prime\prime})B\}.
    \end{equation}
    Similar to the previous case for $a_{i+}^-$, we also have $r(s,a,s_{a_{i+}^-})=0$, and hence
    \begin{equation}
        \label{eq:dI=k+1_d=1_a-i-}
        r(s,a_{i-}^-,s_{a_{i-}^-})+\gamma V_k(s_{a^-_{i-}}) = \max\{\gamma v(k,d_I^\prime)B_I,\gamma v(k,d^{\prime\prime})B\}.
    \end{equation}
\end{itemize}
Combine \eqref{eq:dI=k+1_d=1_a+}, \eqref{eq:dI=k+1_d=1_a-i+;dI>1}, \eqref{eq:dI=k+1_d=1_a-i-}, and the synchronous value iteration update \eqref{eq:SyncValueIter}, when $d_I>1$ we have
\begin{equation}
    \begin{split}
        V_{k+1}(s) &= \max_{a\in \mc A}\{r(s,a,s_a)+\gamma V_{k}(s_a)\}\\
        &=\max\left\{B,\max\{v(k+1,d_I)B_I,\gamma v(k,d^\prime)B\},\max\{\gamma v(k,d_I^\prime)B_I,\gamma v(k,d^{\prime\prime})B\}\right\}\\
        &\overset{(i)}{=}\max\{v(k+1,d_I)B_I,B\} = \max\{v(k+1,d_I)B_I,v(k+1,1)B\},
    \end{split}
\end{equation}
where equality $(i)$ holds because $d^\prime,d^{\prime\prime}\geq1$ implies that $\gamma v(k,d^\prime)B,\gamma v(k,d^{\prime\prime})B\leq \gamma B<B$.

Similarly, using \eqref{eq:dI=k+1_d=1_a+}, \eqref{eq:dI=k+1_d=1_a-i+;dI=1}, \eqref{eq:dI=k+1_d=1_a-i-} when $d_I=1$, we have
\begin{equation}
    \begin{split}
        V_{k+1}(s) &= \max_{a\in \mc A}\{r(s,a,s_a)+\gamma V_{k}(s_a)\}\\
        &=\max\left\{B,B_I,\max\{\gamma v(k,d_I^\prime)B_I,\gamma v(k,d^{\prime\prime})B\}\right\}\\
        &\overset{(ii)}{=}\max\{B_I,B\} = \max\{v(k+1,1)B_I,v(k+1,1)B\},
    \end{split}
\end{equation}
where equality $(ii)$ holds because $d_I^\prime,d^{\prime\prime}>1$. 
Hence, we have verified the induction condition \eqref{eq:Vk(s)_ST+_directly_reachable} for $d_I\leq k+1,d=1$. 

\paragraph{(b) When $d_I\leq k+1,d>1$.}

In this case, we also consider these following actions:
\begin{equation}
    a^+\in \mc A^+(s),\; a_{i+}^-\in \mc A_I^+(s)\backslash \mc A^+(s),\; a_{i-}^-\in \mc A(s)\backslash(A_I^+(s)\cup\mc A^+(s)).
\end{equation}
\begin{itemize}
    \item {\bf For any $a^+\in \mc A^+(s)$.} Similarly, let 
    \begin{equation}
        d_I^\prime = D(s_{a^+},\mc S_{T}),
    \end{equation}
    hence we know that 
    \begin{equation}
        D(s_{a^+},\mc S_{T})= d - 1\geq1,\; d_I^\prime = D(s_{a^+},\mc S_{I}) \geq D(s,\mc S_{I}) - 1 \leq k.
    \end{equation}
    The induction condition \eqref{eq:Vk(s)_ST+_directly_reachable} implies that 
    \begin{equation}
        V_k(s_{a^+}) = \max\{v(k,d_I^\prime)B_I,v(k,d-1)B\}.
    \end{equation}
    Notice that $d>1$ implies that $r(s,a^+,s_{a^+})=0$, hence
    \begin{equation}
        \label{eq:dI=k+1_d>1_a+}
        r(s,a^+,s_{a^+})+\gamma V_k(s_{a^+}) = \max\{\gamma v(k,d_I^\prime)B_I, \gamma v(k,d-1)B\}.
    \end{equation}
    
    \item {\bf For any $a^-_{i+}\in \mc A_I^+(s)\backslash\mc A^+(s)$.} In this case, let 
    \begin{equation}
        d^\prime = D(s_{a_{i+}^-},\mc S_{T}),
    \end{equation}
    by the definition of $\mc A_I^+(s)$ \eqref{eq:defAI+}, we know that 
    \begin{equation}
        d^\prime = D(s_{a^-_{i+}},\mc S_{T})\geq d > 1,\; D(s_{a^-_{i+}},\mc S_{I}) = D(s,\mc S_{I}) - 1 = k.
    \end{equation}
    Hence, the induction condition \eqref{eq:Vk(s)_ST+_directly_reachable} implies that 
    \begin{equation}
        V_k(s_{a_{i+}^-}) = \max\{v(k,d_I-1)B_I,v(k,d^\prime)B\}.
    \end{equation}
    If $d_I >1$, so $r(s,a,s_{a_{i+}^-})=0$, and hence
    \begin{equation}
        \label{eq:dI=k+1_d>1_a-i+;dI>1}
        \begin{split}                &r(s,a_{i+}^-,s_{a_{i+}^-})+\gamma V_k(s_{a^-_{i+}})\\
        = &\gamma\max\{v(k,d_I-1)B_I,v(k,d^\prime)B\} = \max\{v(k+1,d_I)B_I,\gamma v(k,d^\prime)B\},    
        \end{split}
    \end{equation}
    the last equality holds for the same reason described in the previous case for $d_I>1,d>1$. When $d_I=1$, we have
    \begin{equation}
        \label{eq:dI=k+1_d>1_a-i+;dI=1}
        r(s,a_{i+}^-,s_{a_{i+}^-})+\gamma V_k(s_{a^-_{i+}}) = B_I.
    \end{equation}
    
    \item {\bf For any $a^-_{i-}\in \mc A\backslash(\mc A_I^+(s)\cup\mc A^+(s))$.} Similar to the case where $d_I=k+1,d=1$, let 
    \begin{equation}
        d^{\prime\prime} = D(s_{a_{i+}^-},\mc S_{T})\geq d > 1,\;d_I^{\prime} = D(s_{a_{i+}^-},\mc S_{I})\geq d_I=k+1,
    \end{equation}
    the induction condition \eqref{eq:Vk(s)_ST+_directly_reachable} implies that 
    \begin{equation}
        V_k(s_{a_{i+}^-}) = \max\{v(k,d_I^\prime)B_I,v(k,d^{\prime\prime})B\}.
    \end{equation}
    Similar to the previous case for $a_{i+}^-$, we also have $r(s,a,s_{a_{i+}^-})=0$, and hence
    \begin{equation}
        \label{eq:dI=k+1_d>1_a-i-}
        r(s,a_{i-}^-,s_{a_{i-}^-})+\gamma V_k(s_{a^-_{i-}}) = \max\{\gamma v(k,d_I^\prime)B_I,\gamma v(k,d^{\prime\prime})B\}.
    \end{equation}
\end{itemize}
Combine \eqref{eq:dI=k+1_d>1_a+}, \eqref{eq:dI=k+1_d>1_a-i+;dI>1}, \eqref{eq:dI=k+1_d>1_a-i-}, and the synchronous value iteration update \eqref{eq:SyncValueIter}, when $d_I>1$ we have
\begin{equation}
    \begin{split}
        &V_{k+1}(s)=\; \max_{a\in \mc A}\{r(s,a,s_a)+\gamma V_{k}(s_a)\}\\
        =&\;\max\{\max\{\gamma v(k,d_I^\prime)B_I, \gamma v(k,d-1)B\},\max\{v(k+1,d_I)B_I,\gamma v(k,d^\prime)B\},\\
        &\max\{\gamma v(k,d_I^{\prime\prime})B_I,\gamma v(k,d^{\prime\prime})B\}\}\\
        \overset{(i)}{=}&\;\max\{v(k+1,d_I)B_I,\gamma v(k,d-1)B\} \overset{(ii)}{=} \max\{v(k+1,d_I)B_I,v(k+1,d)B\},
    \end{split}
\end{equation}
where equality $(i)$ holds because $d^\prime_I,d^{\prime\prime}_I\geq d_I$ implies that $\gamma v(k,d_I^\prime),\gamma v(k,d_I^{\prime\prime})\leq \gamma^{d_I}=v(k+1,d_I)$
and $d^\prime,d^{\prime\prime}>d$ implies that $v(k,d^\prime),v(k,d^{\prime\prime}) \leq  v(k,d-1)$. Equality $(ii)$ holds because when $d_I = k+1$, $v(k+1,d_I) = \gamma^{d_I}>0$, then if $\gamma v(k,d-1)B>v(k+1,d_I)B_I$, we know that $\gamma v(k,d-1)B>0$, which implies $k\geq d-1$ (or equivalently $k+1\geq d$) hence $v(k+1,d) = \gamma^{d-1} = \gamma v(k,d-1)$. 

Similarly, when $d_I=1$, \eqref{eq:dI=k+1_d>1_a+}, \eqref{eq:dI=k+1_d>1_a-i+;dI=1}, \eqref{eq:dI=k+1_d>1_a-i-}, we have
\begin{equation}
    \begin{split}
        &V_{k+1}(s)=\; \max_{a\in \mc A}\{r(s,a,s_a)+\gamma V_{k}(s_a)\}\\
        =&\;\max\left\{\max\{\gamma v(k,d_I^\prime)B_I, \gamma v(k,d-1)B\},B_I,\max\{\gamma v(k,d_I^{\prime\prime})B_I,\gamma v(k,d^{\prime\prime})B\}\right\}\\
        \overset{(iii)}{=}&\;\max\{B_I,\gamma v(k,d-1)B\} = \max\{v(k+1,1)B_I,v(k+1,d)B\},
    \end{split}
\end{equation}
where equality $(iii)$ holds because $d^\prime_I,d_I^{\prime\prime}\geq 1,$
Hence, we have verified the induction condition \eqref{eq:Vk(s)_ST+_directly_reachable} for $d_I\leq k+1,d>1$.

\paragraph{(c) When $d_I > k+1$.} When $d_I>k+1$, we have $\forall a\in \mc A$, we have $d_I^\prime = D(s_a,\mc S_I)>k$. By the definition of $v(k,d)$ in \eqref{eq:v(k,d)_def}, we know $v(k,d_I^\prime) = 0$. By the induction assumption \eqref{eq:Vk(s)_ST+_directly_reachable}, we have 
\begin{equation}
    V_k(s_a) = \max\{v(k,d_I^\prime)B_I,v(k,d^\prime)B\} = v(k,d^\prime)B,
\end{equation}
where $d^\prime = D(s_a,\mc S_T)$. Hence, from the synchronous value iteration \eqref{eq:SyncValueIter}, we have
\begin{equation}
    V_{k+1}(s) = \max_{a\in \mc A}\{r(s,a,s_a)+ \gamma V_k(s_a)\} \overset{(i)}{=} v(k,d^\prime)B = \max\{v(k+1,d_I^\prime)B_I,v(k+1,d^\prime)B\},
\end{equation}
where equality $(i)$ holds by Lemma \ref{lemma:Vk_bounds_induction_SR}. Hence we conclude the induction condition \eqref{eq:Vk(s)_ST+_directly_reachable} holds when $d>k+1$.

\paragraph{Conclusion.} In conclusion, we have verified the induction condition \eqref{eq:Vk(s)_ST+_directly_reachable} for all three cases: (a) $d_I=k+1,d=1$, (b) $d_I=k+1,d>1$, (c) $d_I>k+1$. Hence, we conclude that the induction condition \eqref{eq:Vk(s)_ST+_directly_reachable} holds for all $k<d_I+h$.
\end{proof}

\section{Experimental Details}
\subsection{Experimental Details of Section \ref{subsec:MotivatingExamples}}
\label{Exp:MotivatingExpDetail}
To obtain the win rate in \ref{tab:experiments-1g}, for each experiment, we first run asynchronous Q learning for $m$ episodes, where we use the first $m$-100 episodes for training, and the win rate from the last 100 episodes for testing. We repeat the previous experiment 10 times report the average win rate of all 1000 games. In addition to the reward design specified Table \ref{tab:experiments-1g}, we use the following command: \verb|python pacman.py| under under the folder \verb|./cs188/cs188/p3_reinforcement|, with these configurations:
\begin{itemize}
    \item  \verb|-p PacmanQAgent|, set the training algorithm to standard asynchronous Q learning;
    \item \verb|-x| $m$ \verb|-n| $m+100$, set the training and testing episodes, replace $m$ with the number of episode in Table \ref{tab:experiments-1g};
    \item \verb|-l mediumGrid|, set the layout of the game, we use the \verb|mediumGrid| layout for our experiments;
    \item \verb|-g DirectionalGhost|, set the strategy of the ghost, we opt the \verb|DirectionalGhost|, where the ghost will find the shortest path to the Pacman;
    \item \verb|-k 1|, set the number of ghosts, the default number of ghosts is $2$ in the \verb|mediumGrid| layout.
\end{itemize}
Similar phenomenons reported in Table \ref{tab:experiments-1g} also appear in larger layout with more ghosts. Also note that in order to obtain a similar win rate, experiments with larger layouts/more ghosts generally require more training episodes.

\subsection{Experimental Details of Section \ref{sec:Experiment}}
\subsubsection{Asynchronous Q-Learning in Section \ref{sec:Experiment}}
All models for asynchronous q-learning experiments are trained with a learning rate of 0.1, a discount factor of 0.9, and use an 0.8 $\eps$-greedy exploration strategy during training. Each state is the entire environment encoded as a string available for MiniGrid \cite{gym_minigrid} environments.

\subsubsection{Deep RL Hyperparameters in Section \ref{sec:Experiment}}
\label{subsec:Hyperparameters}
Each training session for deep RL algorithms was run using a GeForce RTX 2080 GPU. Shared parameters are listed in Table \ref{tab:shared_parameters}, and parameters specific to each algorithm is provided in Table \ref{tab:specific_parameters}. For DQN, like asynchronous Q-learning, we use a 0.8 $\eps$-greedy exploration strategy.

\begin{table}[H]
    \centering
    \begin{tabular}{c|c}
           1 & Conv2D(inchannels = 3, outchannels = 16, stride = (2,2)) \\
          2 & ReLU \\ 
         3 &  MaxPool2D(2, 2) \\
          4 & Conv2D(inchannels = 16, outchannels = 32, stride = (2,2)) \\
         5 &  ReLU \\
          6 & Conv2D(inchannels = 32, outchannels = 64, stride = (2,2)) \\ 
          7 & ReLU \\ 
          8 & Linear(embedded size, 64) \\ 
          9 & Tanh \\ 
          10 & Linear(64, number of actions) \\
    \end{tabular}
    \caption{Network architecture}
    \label{tab:architecture}
\end{table}

\begin{table}[H]
    \centering
    \begin{tabular}{c|c}
    \toprule
        Parameters & Values\\
    \midrule
         Learning Rate & 0.001 \\ 
         Network Architecture & See Table  \ref{tab:architecture}\\
         Observability of Env & Fully Observable \\
         \bottomrule
    \end{tabular}
    
    \caption{Parameters}
    \label{tab:shared_parameters}
\end{table}

\begin{table}[H]
\centering
\begin{tabular}{c|c|c|c}
\toprule
\multicolumn{1}{c|}{Parameters} & \multicolumn{1}{|c|}{DQN} & \multicolumn{1}{|c|}{PPO} & \multicolumn{1}{|c}{A2C}\\
\toprule optimizer & RMSProp & Adam & RMSProp \\
\midrule Discount Factor ($\gamma$) & 0.90 (Maze, 3-Door) & 0.90 & 0.90 \\
   & 0.80 (4-Door) &   &   \\
\midrule batch size & 128 & 128 & N/A \\
\midrule buffer size & 100000 & N/A & N/A \\
\midrule target net. update interval (steps) & 100 & N/A & N/A \\
\midrule number of actors & N/A & 10 & 10 \\
\midrule steps per actor before update & N/A & 128 & 5 \\
\midrule entropy coeff. & N/A & 0.01 & 0.01 \\
\midrule value loss coeff. & N/A & 0.5 & 0.5 \\ 
\midrule GAE discount ($\lambda$) & N/A & 0.95 & 0.95 \\ 
\midrule max norm of gradient & N/A & 0.5 & 0.5 \\
\midrule clipping $\epsilon$ & N/A & 0.2 & N/A \\ 
\midrule PPO epochs per update & N/A & 4 & N/A \\
\bottomrule
\end{tabular}
\caption{Algorithm Specific Parameters}
\label{tab:specific_parameters}
\end{table}

\addcontentsline{toc}{section}{References}
\bibliographystyle{theapa}
\typeout{}
\bibliography{reference}

\end{document}